\chardef\bslash=`\\ % p. 424, TeXbook
\newtheorem{thm}{Theorem}[section]
\newtheorem{claim}[thm]{Claim}
\newtheorem{prop}[thm]{Proposition}
\theoremstyle{definition}
\newtheorem{defn}{Definition}[section]
\theoremstyle{remark}
\newcommand{\addr}{\mathbf{addr}}
\newcommand{\args}{\mathbf{args}}
\newcommand{\sample}{\mathbf{sample}}
\newcommand{\unsample}{\mathbf{unsample}}
\newcommand{\expr}{\mathbf{expr}}
\newcommand{\env}{\mathbf{env}}
\newcommand{\mem}{\mathbf{mem}}
\newcommand{\true}{\mathbf{true}}
\newcommand{\DG}{\mathbf{D}}
\newcommand{\AG}{\mathbf{A}}
\newcommand{\IG}{\mathbf{I}}
\newcommand{\PG}{\mathbf{P}}
\newcommand{\SG}{\mathbf{S}}
\newcommand{\OG}{\mathbf{O}}
\newcommand{\RG}{\mathbf{R}}
\newcommand{\Scope}{\mathbf{Scope}}
\newcommand{\Envelope}{\mathbf{Envelope}}
\newcommand{\out}{\mathbf{output}}
\newcommand{\torus}{\mathbf{torus}}
\newcommand{\request}{\mathbf{request}}
\newcommand{\on}{\mathbf{on}}
\newcommand{\llia}{\mathbf{requests}}
\newcommand{\dash}{\mbox{-}}
\newcommand{\prog}{\mathbf{P}}
\newcommand{\nds}{\mathbf{nds}}
\newcommand{\pns}{\mathbf{principals}}
\newcommand{\MH}{\mathbf{MH}}
\newcommand{\Q}{\mathbf{Q}}
\newcommand{\K}{\mathbf{K}}
\newcommand{\Mixmh}{\mathbf{MixMH}}
\newcommand{\MHn}{\mathbf{MH}_n}
\newcommand{\PS}{\Upsilon}
\newcommand{\unique}{\mathbf{unique}}
\newcommand{\MB}{\mathbf{MB}}
\newcommand{\brush}{\mathbf{brush}}
\newcommand{\regen}{\mathbf{regen}}
\newcommand{\scaffold}{\mathbf{scaffold}}
\newcommand{\detach}{\mathbf{detach}}
\newcommand{\regenAndAttach}{\mathbf{regenAndAttach}}
\newcommand{\detachAndRegen}{\mathbf{detachAndRegen}}
\begin{document} 

\title{Venture: a higher-order probabilistic
  programming platform with programmable inference}

\author{\name Vikash Mansinghka \email vkm@mit.edu \\
\name Daniel Selsam \email dselsam@mit.edu \\
\name Yura Perov \email perov@mit.edu}

\todo[color=green]{[small] Reinsert editor for JMLR}
%\editor{TBD}
 
\maketitle 
 
\begin{abstract}We describe Venture, an interactive virtual machine for probabilistic programming that aims to be sufficiently expressive, extensible, and efficient for general-purpose use. Like Church, probabilistic models and inference problems in Venture are specified via a Turing-complete, higher-order probabilistic language descended from Lisp. Unlike Church, Venture also provides a compositional language for custom inference strategies, assembled from scalable implementations of several exact and approximate techniques. Venture is thus applicable to problems involving widely varying model families, dataset sizes and runtime/accuracy constraints. We also describe four key aspects of Venture's implementation that build on ideas from probabilistic graphical models. First, we describe the {\em stochastic procedure interface} (SPI) that specifies and encapsulates primitive random variables, analogously to conditional probability tables in a Bayesian network. The SPI supports custom control flow, higher-order probabilistic procedures, partially exchangeable sequences and ``likelihood-free'' stochastic simulators, all with custom proposals. It also supports the integration of external models that dynamically create, destroy and perform inference over latent variables hidden from Venture. Second, we describe {\em probabilistic execution traces} (PETs), which represent execution histories of Venture programs. Like Bayesian networks, PETs capture conditional dependencies, but PETs also represent existential dependencies and exchangeable coupling. Third, we describe partitions of execution histories called {\em scaffolds} that can be efficiently constructed from PETs and that factor global inference problems into coherent sub-problems. Finally, we describe a family of {\em stochastic regeneration algorithms} for efficiently modifying PET fragments contained within scaffolds without visiting conditionally independent random choices. Stochastic regeneration insulates inference algorithms from the complexities introduced by changes in execution structure, with runtime that scales linearly in cases where previous approaches often scaled quadratically and were therefore impractical. We show how to use stochastic regeneration and the SPI to implement general-purpose inference strategies such as Metropolis-Hastings, Gibbs sampling, and blocked proposals based on hybrids with both particle Markov chain Monte Carlo and mean-field variational inference techniques.
\todo[color=green]{[big] add real expts and summarize alg\&model, including results on datasets with millions of observations}
\end{abstract}

\begin{keywords} Probabilistic programming, Bayesian inference, Bayesian networks, Markov chain Monte
Carlo, sequential Monte Carlo, particle Markov
chain Monte Carlo,
variational inference
\end{keywords}

\thanks{{\bf Acknowledgements:} The authors thank Vlad Firoiu and Alexey Radul for contributions to multiple Venture implementations, and Daniel Roy, Cameron Freer and Alexey Radul for helpful discussions, suggestions, and comments on early drafts. This work was supported by the DARPA PPAML
  program, grants from the ONR and ARO, and Google's ``Rethinking AI''
  project. Any opinions, findings, and conclusions or recommendations
  expressed in this work are those of the authors and do not
  \todo[color=red]{[small] final grant/contract info}
  necessarily reflect the views of any of the above sponsors.}

%\pagebreak

\tableofcontents 
% \todototoc
\listoftodos

\section{Introduction}

Probabilistic modeling and approximate Bayesian inference have proven
to be powerful tools in multiple fields, from machine learning
\citep{Bishop06} and statistics \citep{green2003highly, Gelman1995} to robotics
\citep{thrun2005probabilistic}, artificial intelligence
\citep{Russell2002}, and cognitive science \citep{tenenbaum2011grow}. Unfortunately, even relatively simple probabilistic models and their associated inference
schemes can be difficult and time-consuming to design, specify,
analyze, implement, and debug. Applications in different fields, such as robotics and statistics,
involve differing modeling idioms, inference techniques and dataset
sizes. Different fields also often impose varying speed and accuracy requirements that
interact with modeling and algorithmic choices. Small changes in the
modeling assumptions, data, performance/accuracy requirements or
compute budget frequently necessitate end-to-end redesign of the
probabilistic model and inference strategy, in turn necessitating
reimplementation of the underlying software. 

These difficulties impose a high cost on practitioners, making state-of-the-art modeling and inference approaches impractical for many problems. Minor variations on standard templates can be out of reach for non-specialists. The cost is also high for experts: the development time and failure rate make it difficult to innovate on methodology except in simple settings. This limits the richness of probabilistic models of cognition and artificial intelligence systems, as these kinds of models push the boundaries of what is possible with current knowledge representation and inference techniques.

Probabilistic programming languages could potentially mitigate these problems. They provide a formal representation for models --- often via executable code that makes a random choice for
every latent variable --- and attempt to encapsulate and automate
inference. Several languages and systems have been built along these
lines over the last decade \citep{Lunn2000,
  stan-manual:2013, milch20071, Pfeffer01ibal:a,
  mccallum09:factorie}. Each of these systems is promising in its own domain; some of the strengths of each are described below. However, none of the probabilistic
programming languages and systems that have been developed thus far
is suitable for general purpose use. Examples of drawbacks
include inadequate and unpredictable runtime performance, limited
expressiveness, batch-only operation, lack of extensibility, and
overly restrictive and/or opaque inference schemes. In this paper, we describe Venture, a
new probabilistic language and inference engine that attempts to address these limitations.

Several probabilistic programming tools have sought efficiency by
restricting expressiveness. For example, Microsoft's Infer.NET system
\citep{minka2010infer} leverages fast message passing techniques
originally developed for graphical models, but as a result restricts
the use of stochastic choice in the language so that it cannot
influence control flow. Such random choices would yield models over
sets of random variables with varying or even unbounded size, and
therefore preclude compilation to a graphical model. BUGS, arguably
the first (and still most widely used) probabilistic programming
language, has essentially the same restrictions \citep{Lunn2000}. Random compound data types, procedures and stochastic control flow constructs that could lead to a priori unbounded executions are all out of scope. STAN, a BUGS-like
language being developed in the Bayesian statistics community, has
limited support for discrete random variables, as these are
incompatible with the hybrid (gradient-based) Monte Carlo strategy it
uses to overcome convergence issues with Gibbs sampling \citep{stan-manual:2013}.
Other probabilistic programming tools that have seen real-world use include FACTORIE \citep{mccallum09:factorie} and Markov
Logic \citep{richardson2006markov}; applications of both have emphasized problems in information
extraction. The probabilistic models that can be defined using
FACTORIE and Markov Logic are finite and undirected, specified
imperatively (for FACTORIE) or declaratively (for Markov Logic). Both
systems make use of specialized, efficient approximation algorithms
for inference and parameter estimation. Infer.NET, STAN, BUGS,
FACTORIE and Markov Logic each capture important modeling and
approximate inference idioms, but there are also interesting models
that each cannot express.  Additionally, a number of probabilistic extensions of classical logic programming languages have also been developed \citep{poole1997independent, sato1997prism, de2008probabilistic}, motivated by problems in statistical relational learning. As with FACTORIE and Markov Logic, these languages have interesting and useful properties, but have thus far not yielded compact descriptions of many useful classes of probabilistic generative models from domains such as statistics and robotics.

In contrast, probabilistic programming languages such as BLOG \citep{milch20071}, IBAL \citep{Pfeffer01ibal:a}, Figaro \citep{pfeffer2009figaro} and Church \citep*{Goodman:2008tb, Mansinghka:2009}
emphasize expressiveness. Each language was designed around the needs of models
whose fine-grained structure cannot be represented using directed or
undirected graphical models, and where standard inference algorithms
for graphical models such as belief propagation do not directly
apply. Examples include probabilistic grammars \citep{jelinek1992basic}, nonparametric Bayesian
models \citep{rasmussen1999infinite, johnson2007adaptor, rasmussen2006gaussian, griffiths2005infinite}, probabilistic models over worlds with a priori unknown numbers
of objects, models for learning the structure of graphical models \citep{heckerman1998tutorial, Friedman2003, mansinghka2006structured}, models for inductive learning of symbolic expressions \citep{grosse2012exploiting, duvenaud2013structure} and models defined in terms of complex
stochastic simulation software that lack tractable likelihoods \citep{marjoram2003markov}. 

Each of these model classes is the basis of real-world applications, where inference over richly structured  models can address limitations of classic statistical modeling and pattern recognition techniques. Example domains include natural language processing \citep{manning1999foundations}, speech recognition \citep{baker2005trainable}, information extraction \citep{pasula2002identity},
multitarget tracking and sensor fusion \citep{oh2009markov, arora2010global}, ecology \citep{csillery2010approximate} and computational biology \citep{friedman2000using, yu2004advances, toni2009approximate, dowell2004evaluation}. However, the performance engineering needed to turn specialized inference algorithms for these models into viable implementations is challenging. Direct deployment of probabilistic program implementations in real-world applications is often infeasible. The elaborations on these models that expressive probabilistic languages enable can thus seem completely impractical.

Church makes the most extreme tradeoffs with respect to expressiveness and efficiency. It can represent models from all the
classes listed above, partly through its support for higher-order
probabilistic procedures, and it can also represent generative models
defined in terms of algorithms for simulation and inference in
arbitrary Church programs. This flexibility makes Church especially
suitable for nonparametric Bayesian modeling \citep{roy2008stochastic}, as well as artificial
intelligence and cognitive science problems that involve reasoning
about reasoning, such as sequential decision making, planning and
theory of mind \citep{probmods, Mansinghka:2009}. Additionally, probabilistic formulations of learning
Church programs from data --- including both program structure and
parameters --- can be formulated in terms of inference in an ordinary
Church program \citep{Mansinghka:2009}. But although various Church implementations provide automatic Metropolis-Hastings inference
mechanisms that in principle apply to all these problems, these
mechanisms have exhibited limitations in practice. It has not been
clear how to make general-purpose sampling-based inference in Church
sufficiently scalable for typical machine learning applications,
including problems for which standard techniques based on graphical
models have been applied successfully. It also is not easy for Church
programmers to override built in inference mechanisms or add
new higher-order stochastic primitives.

In this paper we describe Venture, an interactive, Turing-complete, higher order probabilistic programming platform that aims to be sufficiently expressive, extensible and efficient for general-purpose use. Venture includes a virtual machine, a language for specifying probabilistic models, and a language for specifying inference problems along with custom inference strategies for solving those problems. Venture's implementation of standard MCMC schemes scales linearly with dataset size on problems where many previous inference architectures scale quadratically and are therefore impractical. Venture also supports a larger class of primitives --- including ``likelihood-free'' primitives arising from complex stochastic simulators --- and enables programmers to incrementally migrate performance-critical portions of their probabilistic program to optimized external inference code. Venture thus improves over Church in terms of expressiveness, extensibility, and scalability. Although it remains to be seen if these improvements are sufficient for general-purpose use, unoptimized Venture prototypes have begun to be successfully applied in real-world system building,  Bayesian data analysis, cognitive modeling and machine intelligence research.

\todo[color=green]{[medium] Figure outlining key characteristics of applied modeling + inference across fields, with discussion of general purpose and robotics-statistics spectrum}

\todo[color=green]{[small] Provide short overview blurb summarizing performance measurements and bump up to key contributions, here and below}

\subsection{Contributions}

This paper makes two main contributions. First, it describes key aspects of Venture's design, including support for interactive modeling and programmable inference. Due to these and other innovations, Venture provides broad coverage in terms of models, approximation strategies for those models and overall applicability to inference problems with varying model/data complexities and time/accuracy requirements. Second, this paper describes key aspects of Venture's implementation: the {\em stochastic
  procedure interface} (SPI) for encapsulating primitives, the {\em probabilistic execution
  trace} (PET) data structure for efficient representation and updating of execution histories, and a suite of {\em stochastic regeneration algorithms} for scalable inference within trace fragments called {\em scaffolds}. Other important aspects of
Venture, including the VentureScript front-end syntax, formal language definitions, software architecture, standard library, and performance measurements of optimized implementations, are all beyond the scope of this paper.

\todo[color=green]{[medium] Figure outlining Venture's architecture and main ideas}

It is helpful to consider the relationships between PETs and the SPI.
The SPI generalizes the notion of elementary random procedure associated with many previous probabilistic programming languages. The SPI encapsulates Venture primitives
and enables interoperation with external modeling components, analogously to a foreign function interface in a traditional programming language. External modeling components can represent sets of latent variables hidden from Venture and that use specialized inference code. The SPI also supports custom control flow, higher-order probabilistic procedures, exchangeable
sequences, and the ``likelihood-free'' stochastic primitives that can
arise from complex simulations. Probabilistic execution traces are used to represent generative models written in Venture, along with particular realizations of these models and the data values they must explain. PETs generalize Bayesian networks to handle the patterns of conditional dependence, existential dependence and exchangeable coupling amongst invocations of stochastic procedures conforming to
the SPI. PETs thus must handle a priori unbounded sets of random
variables that can themselves be arbitrary probabilistic generative
processes written in Venture and that may lack tractable probability
densities.

Using these tools, we show how to define coherent local inference
steps over arbitrary sets of random choices within a PET, conditioned
on the surrounding trace. The core idea is the notion of a {\em
  scaffold}. A scaffold is a subset of a PET that contains those
random variables that must exist regardless of the values chosen for a
set of variables of interest, along with a set of random variables
whose values will be conditioned on. We show how to construct
scaffolds efficiently. Inference given a scaffold proceeds via {\em
  stochastic regeneration algorithms} that efficiently consume and
either restore or resample PET fragments without visiting
conditionally independent random choices. The proposal probabilities,
local priors, local likelihoods and gradients needed for several
approximate inference strategies can all be obtained via small
variations on stochastic regeneration.

We use stochastic regeneration to implement both single-site and composite Metropolis-Hastings, Gibbs sampling, and also blocked proposals based on hybrids with conditional sequential
Monte Carlo and variational techniques. The uniform implementation of
these approaches to incremental inference, along with analytical tools
for converting randomly chosen local transition operators into ergodic global transition operators on
PETs, constitutes another contribution.

\section{The Venture Language}

Consider the following example Venture program for determining if a coin is fair or tricky:

\begin{verbatim}
[ASSUME is_tricky_coin (bernoulli 0.1)]
[ASSUME coin_weight (if is_tricky_coin (uniform 0.0 1.0) 0.5)]
[OBSERVE (bernoulli coin_weight) True]
[OBSERVE (bernoulli coin_weight) True]
[INFER (mh default one 10)]
[PREDICT (bernoulli coin_weight)]
\end{verbatim}

We will informally discuss this program before defining the Venture language more precisely.

The \verb|ASSUME| instructions induce the hypothesis space
for the probabilistic model, including a random variable for whether
or not the coin is tricky, and either a deterministic coin weight or a
potential random variable corresponding to the unknown weight. The model selection problem is expressed via an \verb|if| with a stochastic predicate, with the alternative models on the consequent and alternate branches. After executing the \verb|ASSUME| instructions, particular values for \verb|is_tricky_coin| and \verb|coin_weight| will have been sampled, though the meaning of the program so far corresponds to a probability distribution over possible executions.

The \verb|OBSERVE| instructions describe a data generator that produces two flips of a coin with the generated weight, along with data that is assumed to be generated by the given generator. In this program, the \verb|OBSERVE|s encode constraints that both of these coin flips landed heads up. 

The \verb|INFER| instruction causes Venture to find a hypothesis
(execution trace) that is probable given the data using 10 iterations
of its default Markov chain for inference\footnote{This default Markov
  chain is a variant of the algorithm from Church
  \citep*{Goodman:2008tb, Mansinghka:2009}. This is a simple random
  scan single-site Metropolis-Hastings algorithm that chooses random
  choices uniformly at random from the current execution, resimulates
  them conditioned on the rest of the trace, and accepts or rejects
  the result.}. \verb|INFER| evolves the probability distribution over
execution traces inside Venture from whatever distribution is present
before the instruction --- in this case, the prior --- closer to its
conditional given any observations that have been added prior to that
\verb|INFER|, using a user-specified inference technique. In this
example program, the resulting marginal distribution on whether or not
the coin is tricky shifts from the prior to an approximation of the
posterior given the two observed flips, increasing the probability of
the coin being tricky ever so slightly. Increasing \verb|10| to
\verb|100| shifts the distribution closer to the true posterior; other
inference strategies, including exact sampling techniques, will be
covered later.  The execution trace inside Venture after the
instruction is sampled from this new distribution.

\todo[color=green]{[medium] tricky coin example: schematically illustrate samples vs distributions}

Once inference has finished, the \verb|PREDICT| instruction causes Venture to report a sampled prediction for the outcome of another flip of the coin. The weight used to generate this sample comes from the current execution trace.

\todo[color=green]{[medium] tricky coin example: figure for changing evidence as number of coin flips changes}

\subsection{Modeling and Inference Instructions}

Venture programs consist of sequences of modeling instructions and inference instructions, each given a positive integer index by a global instruction counter\footnote{Venture implementations also support labels for the instruction language. However, the recurrence of line numbers is reflective of the ways the current instruction language is primitive as compared to the modeling language. For example, it currently lacks control flow constructs, procedural abstraction, and recursion, let alone runtime generation and execution of statements in the instruction language. Current research is focused on addressing these limitations.}. Interactive Venture sessions have the same structure. The modeling instructions are used to specify the probabilistic model of interest, any conditions on the model that the inference engine needs to enforce, and any requests for prediction of values against the conditioned distribution. 

The core modeling instructions in Venture are:

\begin{enumerate}

\item {\tt [ASSUME <name> <expr>]}: binds the result of simulating the model expression {\tt <expr>} to the symbol {\tt <name>} in the global environment. This is used to build up the model that will be used to interpret data. Returns the value taken on by {\tt <name>}, along with the {\tt index} of the instruction.

\item {\tt [OBSERVE <expr> <literal-value>]}: adds the constraint that the model expression {\tt <expr>} must yield {\tt <literal-value>} in every execution. Note that this constraint is not enforced until inference is performed.

\item {\tt [PREDICT <expr>]}: samples a value for the model expression {\tt <expr>} from the current distribution on executions in the engine and returns the value. As the amount of inference done since the last {\tt OBSERVE} approaches infinity, this distribution converges to the conditioned distribution that reconciles the {\tt OBSERVE}s.

\item {\tt [FORGET <instruction-index-or-label>]}: This instruction causes the engine to undo and then forget the given instruction, which must be either an {\tt OBSERVE} or {\tt PREDICT}. Forgetting an observation removes the constraint it represents from the inference problem. Note that the effect may not be visible until an {\tt INFER} is performed.

\end{enumerate}

Venture supports additional instructions for inference and read-out, including:

\todo[color=green]{[code] Patch Venture to add a controllable computation time bound for rejection}

\begin{enumerate}

\item {\tt [INFER <inference-expr>]}: This instruction first incorporates any observations that have occurred after the last {\tt INFER}, then evolves the probability distribution on executions according to the inference strategy described by {\tt <inference-expr>}. Two inference expressions, corresponding to general-purpose exact and approximate sampling schemes, are useful to consider here:

\begin{enumerate}

\item {\tt (rejection default all)} corresponds to the use of rejection sampling to generate an exact sample from the conditioned distribution on traces. The runtime requirements may be substantial, and exact sampling applies to a smaller class of programs than approximate sampling. However, rejection is crucial for understanding the meaning of a probabilistic model and for debugging models without simultaneously debugging inference strategies.

\item {\tt (mh default one 1)} corresponds to one transition of the standard uniform mixture of single-site Metropolis-Hastings transition operators used as a general-purpose ``automatic'' inference scheme in many probabilistic programming systems. As the number of transitions is increased from $1$ towards $\infty$, the semantics of the instruction approach an exact implementation of conditioning via rejection.

\end{enumerate}

\item {\tt [SAMPLE <expr>]}: This instruction simulates the model expression {\tt <expr>} against the current trace, returns the value, and then forgets the trace associated with the simulation. It is equivalent to {\tt [PREDICT <expr>]} followed by {\tt [FORGET <index-of-predict>]}, but is provided as a single instruction for convenience.

\item {\tt [FORCE <expr> <literal-value>]}: Modify the current trace so that the simulation of {\tt <expr>} takes on the value {\tt <literal-value>}. Its implementation can be roughly thought of as an {\tt OBSERVE} immediately followed by an {\tt INFER} and then a {\tt FORGET}. This instruction can be used for controlling initialization and for debugging.

\end{enumerate}

\subsection{Modeling Expressions}

Venture modeling expressions describe stochastic generative processes. The space of all possible executions of all the modeling expressions in a Venture program constitute the hypothesis space that the program represents. Each Venture program thus represents a probabilistic model by defining a stochastic generative process that samples from it. 

At the expression level, Venture is similar to Scheme and to Church, though there are several differences\footnote{We sometimes refer to the s-expression syntax (including syntactic sugar) as Venchurch, and the desugared language (represented as JSON objects corresponding to parse trees) as Venture.}. For example, branching and procedure construction can both be desugared into applications of stochastic procedures --- that is, ordinary combinations --- and do not need to be treated as special forms. Additionally, Venture supports a dynamic scoping construct called {\tt scope\_include} for tagging portions of an execution history such that they can be referred to by inference instructions; to the best of our knowledge, analogous constructs have not yet been introduced in other probabilistic programming languages.

Venture modeling expressions can be broken down into a few simple cases:

\begin{enumerate}
\item {\bf Self-evaluating or ``literal'' values:} These describe constant values in the language, and are discussed below.

\item {\bf Combinations:} {\tt (<operator-expr> <operand0-expr> ... <operandk-expr>)} first evaluates all of its expressions in arbitrary order, then applies the value of {\tt <operator-expr>} (which must be a stochastic procedure) to the values of all the {\tt <operand-expr>}s. It returns the value of the application as its own result.

\item {\bf Quoted expressions:} {\tt (quote <expr>)} returns the expression value {\tt <expr>}. As compared to combinations, {\tt quote} suppresses evaluation.

\item {\bf Lambda expressions:} {\tt (lambda <args> <body-expr>)} returns a stochastic procedure with the given formal parameters and procedure body. {\tt <args>} is a specific list of argument names {\tt (<arg0> ... <argk>)}.

\item {\bf Conditionals:} {\tt (if <predicate-expr> <consequent-expr> <alternate-expr>)} evaluates the {\tt <predicate-expr>}, and then if the resulting value is {\tt true}, evaluates and returns value of the {\tt <consequent-expr>}, and if not, evaluates and returns the value of the {\tt <alternate-expr>}.

\item {\bf Inference scope annotations:} {\tt (scope\_include <scope-expr> <block-expr> <expr>)} provides a mechanism for naming random choices in a probabilistic model, so that they can be referred to during inference programming. The {\tt scope\_include} form simulates {\tt <scope-expr>} and {\tt <block-expr>} to obtain a scope value and a block value, and then simulates {\tt <expr>}, tagging all the random choices in that process with the given scope and block. More details on inference scopes are given below.

\end{enumerate}

\subsection{Inference Scopes}

Venture programs may attach metadata to fragments of execution traces via a dynamic scoping construct called {\em inference scopes}. Scopes are defined in modeling expressions, via the special form {\tt (scope\_include <scope-expr> <block-expr> <expr>)}, that assigns all random choices required to simulate {\tt <expr>} to a scope that is named by the value resulting from simulating {\tt <scope-expr>} and a {\em block} within that scope that is named by the value that results from simulating {\tt <block-expr>}. A single random choice can be in multiple inference scopes, but can only be in one block within each scope. Also, a random choice gets annotated with a scope each time the choice is simulated within the context of a {\tt scope\_include} form, not just the first time it is simulated.

Inference scopes can be referred to in inference expressions, thus providing a mechanism for associating custom inference strategies with different model fragments. For example, in a parameter estimation problem for hidden Markov models, it might be natural to have one scope for the hidden states, another for the hyperparameters, and a third for the parameters, where the blocks for the hidden state scope correspond to indexes into the hidden state sequence. We will see later how to write cycle hybrid kernels that use Metropolis-Hastings to make proposals for the hyperparameters and either single-site or particle Gibbs over the hidden states. Inference scopes also provide a means of controlling the allocation of computational effort across different aspects of a hypothesis, e.g. by only performing inference over scopes whose random choices are conditionally dependent on the choices made by a given {\tt PREDICT} instruction of interest.

Random choices are currently tagged with {\tt (scope, block)} pairs. Blocks can be thought of as subdivisions of scopes into meaningful (and potentially ordered) subsets. We will see later how inference expressions can make use of block structure to provide fine-grained control over inference and enable novel inference strategies. For example, the order in which a set of random choices is traversed by conditional sequential Monte Carlo can be controlled via blocks, regardless of the order in which they were constructed during initial simulation.

Scopes and blocks can be produced by random choices; {\tt <scope-expr>}s and {\tt <block-expr>}s are ordinary Venture modeling expressions\footnote{Our current implementations restrict the values of scope and block names to symbols and integers for simplicity, but this restriction is not intrinsic to the Venture specification.}. This enables the use of random choices in one scope to control the scope or block allocation of random choices in another scope. The random choices used to construct scopes and blocks may be auxiliary variables independent of the rest of the model, or latent variables whose distributions depend on the interaction of modeling assumptions, data and inference. At present, the only restriction is that inference on the random choices in a set of blocks cannot add or remove random choices from that set of blocks, though the probability of membership can be affected. Applications of randomized, inference-driven scope and block assignments include variants of cluster sampling techniques, beyond the spin glass \citep{swendsen1986replica} and regression \citep{nott2004bayesian} settings where they have typically been deployed.

\todo[color=green]{[medium] Add support for random scopes/blocks; cite stochastic digital circuits as motivation}
The implementation details needed to handle random scopes and blocks are beyond the scope of this paper. However, we will later see analytical machinery that is sufficient for justifying the correctness of complex transition operators involving randomly chosen scopes and blocks.

Venture provides two built-in scopes\footnote{Some implementations so far have merged the default and latent scopes, or triggered inference over latents automatically after every transition over the default scope.
}:

\begin{enumerate}

\item {\tt default} --- This scope contains every random choice. Previously proposed inference schemes for Church as well as concurrently developed generic inference schemes for variants of Venture correspond to single-line inference instructions acting on this default, global scope.

\item {\tt latents} --- This scope contains all the latent random choices made by stochastic procedures but hidden from Venture. Using this scope, programmers can control the frequency with which any external inference systems are invoked, and interleave inference over external variables with inference over the latent variables managed by Venture.

\end{enumerate}

\subsection{Inference Expressions}

{\em Inference expressions} specify transition operators that evolve the probability distribution on traces inside the Venture virtual machine. This is in contrast to {\em instructions}, which extend a model, add data, or initiate inference using a valid transition operator. 

Venture provides several primitive forms for constructing transition operators that leave the conditioned distribution invariant, each of which is a valid inference expression. In each of these forms, {\tt scope} must be a literal scope, and {\tt block} must either be a literal block within that scope, or the keyword {\tt one} or {\tt all}. The ``selected'' set of random choices on which each inference expression acts is given by the specified scope and block. If the block specification is {\tt all}, then the union of all blocks within the scope is taken. If the block specification is {\tt one}, then one block is chosen uniformly at random from the set of all blocks within the given scope.

The core set of inference expressions in Venture are as follows:

\begin{enumerate}

\item {\tt (mh <scope> <block> <\#-transitions>)} --- Propose new values for the selected choices either by resimulating them or by invoking a custom local proposal kernel if one has been provided. Accept or reject the results via the Metropolis-Hastings rule, accounting for changes to the mapping between random choices and scopes/blocks using the machinery provided later in this paper. Repeat the whole process {\tt \#-transitions} times.

\item {\tt (rejection <scope> <block> <\#-transitions>)} --- Use rejection sampling to generate an exact sample from the conditioned distribution on all the selected random choices. Repeat the whole process {\tt \#-transitions} times, potentially improving convergence if the selected set is randomly chosen, i.e. {\tt block} is {\tt one}. This transition operator is often computationally intractable, but is optimal, in the sense that it makes the most progress per completed transition towards the conditioned distribution on traces. All the other transition operators exposed by the Venture inference language can be viewed as asymptotically convergent approximations to it.

\item {\tt (pgibbs <scope> <block> <\#-particles> <\#-transitions>)} --- Use conditional sequential Monte Carlo to propose from an approximation to the conditioned distribution over the selected set of random choices. If {\tt block} is {\tt ordered}, all the blocks in the scope are sorted, and each distribution in the sequence of distributions includes all the random choices from the next block. Otherwise, each distribution in the sequence includes a single random choice drawn from the selected set, and the ordering is arbitrary.

\item {\tt (meanfield <scope> <block> <iters> <\#-transitions>)} --- Use {\tt iters} steps of stochastic gradient to optimize the parameters of a partial mean-field approximation to the conditioned distribution over the random choices in the given {\tt scope} and {\tt block} (with block interpreted as with {\tt mh}). Make a single Metropolis-Hastings proposal using this approximation. Repeat the process {\tt \#-transitions} times.

\item {\tt (enumerative\_gibbs <scope> <block> <\#-transitions>)} --- Use exhaustive enumeration to perform a transition over all the selected random choices from a proposal corresponding to the optimal conditional proposal (conditioned on the values of any newly created random variables). Random choices whose domains cannot be enumerated are resimulated from their prior unless they have been equipped with custom simulation kernels. If all selected random choices are discrete and no new random choices are created, this is equivalent to the {\tt rejection} transition operator, and corresponds to a discrete, enumerative implementation of Gibbs sampling, hence the name. The computational cost scales exponentially with the number of random choices, as opposed to the KL divergence between the prior and the conditional \citep{Anonymous:T2b7vHXI}.

\end{enumerate}

Both {\tt mh} and {\tt pgibbs} are implemented by in-place mutation. However, versions of each that use simultaneous particles to represent alternative possibilities are given by {\tt func-mh} and {\tt func-pgibbs}. These are prepended with {\tt func} to signal an aspect of their implementation: these simultaneously accessible sets of particles are implemented using persistent data structure techniques typically associated with pure functional programming. {\tt func-pgibbs} can yield improvements in order of growth of runtime as compared to {\tt pgibbs}, but it imposes restrictions on the selected random choices\footnote{To support multiple simultaneous particles, all stochastic procedures within the given {\tt scope} and {\tt block} must support a clone operation for their auxiliary state storage (or have the ability to emulate it). This is feasible for standard exponential family models, but may be not be feasible for external inference systems hosted on distributed hardware.}.

There are also currently two composition rules for transition operators, enabling the creation of cycle and mixture hybrids:

\begin{enumerate}

\item {\tt (cycle (<inference-expr-1> <inference-expr-2>) ... <\#-transitions>)} --- This produces a cycle hybrid of the transition operators represented by the given inference expression: each transition operator is run in sequence, and the whole sequence is repeated {\tt \#-transitions} times.

\item {\tt (mixture ((<w1> <inference-expr-1>) (<w2> <inference-expr-2>) ...) <\#-transitions>)} --- This produces a mixture hybrid of the given transition operators, using the given mixing weights, that is invoked {\tt \#-transitions} times.

\end{enumerate}

This language is flexible enough to express a broad class of standard approximate inference strategies as well as novel combinations of standard inference algorithm templates such as conditional sequential Monte Carlo, Metropolis-Hastings, Gibbs sampling and mean-field variational inference. Additionally, the ability to use random variables to map random choices to inference strategies and to perform inference over these variables may enable new cluster sampling techniques. That said, from an aesthetic standpoint, the current inference language also has many limitations, some of which seem straightforward to relax. For example, it seems natural to expand inference expressions to support arbitrary modeling expressions, and thereby also support arbitrary computation to produce inference schemes. The machinery needed to support these and other natural extensions is discussed later in this paper.

\subsection{Values}

Venture values include the usual scalar and symbolic data types from Scheme, along with extended support for collections and additional datatypes corresponding to primitive objects from probability theory and statistics. Venture also supports the {\em stochastic procedure} datatype, used for built-in and user-added primitive procedures as well as compound procedures returned by \verb|lambda|. A full treatment of the value hierarchy is out of scope, but we provide a brief list of the most important values here:

\begin{enumerate}
\item {\bf Numbers}: roughly analogous to floating point numbers, e.g. 1, 2.4, -23, and so on.

\item {\bf Atoms}: discrete items with no internal structure or ordering. These are generated by categorical draws, but also Dirichlet and Pitman-Yor processes.

\item {\bf Symbols}: symbol values, such as the name of a formal argument being passed to lambda, the name associated with an {\tt ASSUME} instruction, or the result of evaluating a {\tt quote} special form.

\item {\bf Collections:} vectors, which map numbers to values and support O(1) random access, and maps, which map values to values and support O(1) amortized random access (via a hash table that relies on the built-in hash function associated with each kind of value).

\item {\bf Stochastic procedures:} these include the components of the standard library, and can also be created by \verb|lambda| and other stochastic procedures.

\end{enumerate}

\subsection{Automatic inference versus inference programming}

Although Venture programs can incorporate custom inference strategies, it does not {\em require} them. Interfaces that are as automatic as existing probabilistic programming systems are straightforward to implement. Single-site Metropolis-Hastings and Gibbs sampling algorithms --- the sole automatic inference option in many probabilistic programming systems --- can be invoked with a single instruction. We have also seen that global sequential Monte Carlo and mean field algorithms are similarly straightforward to describe. Support for programmable inference does not necessarily increase the education burden on would-be probabilistic programmers, although it does provide a way to avoid limiting probabilistic programmers to a potentially inadequate set of inference strategies.

The idea that inference strategies can be formalized as structured, compositionally specified inference programs operating on model programs is, to the best of our knowledge, new to Venture. Under this view, standard inference algorithms actually correspond to primitive inference programming operations or program templates, some of which depend on specific features of the model program being acted upon. This perspective suggests that far more complex inference strategies should be possible if the right primitives, means of combination and means of abstraction can be identified. Considerations of modularity, analyzability, soundness, completeness, and reuse will become central. and will be complicated by the interaction between inference programs and model programs. For example, inference programmers will need to be able to predict the asymptotic scaling of inference instructions, factoring out the contribution of the computational complexity of the model expressions to which a given inference instruction is being applied. Another example comes from considering abstraction and reuse. It should be possible to write compound inference procedures that can be reused across different models, and perhaps even use inference to learn these procedures via an appropriate hierarchical Bayesian formulation. 

Another view, arguably closer to the mainstream view in machine learning, is that inference algorithms are better thought of by analogy to mathematical programming and operations research, with each algorithm corresponding to a ``solver'' for a well-defined class of problems with certain structure. This perspective suggests that there is likely to be a small set of monolithic, opaque mechanisms that are sufficient for most important problems. In this setting, one might hope that inference mechanisms can be matched to models and problems via simple heuristics, and that the problem of automatically generating high-quality inference strategies will prove easier than query planning for databases, and will be vastly easier than automatic programming. 
\todo[color=green]{[medium] Add in tactics languages for theorem provers: coq, isabelle, etc. and term rewriting logics}

It remains to be seen whether the traditional view is sufficient in practice or if it underestimates the richness of inference and its interaction with modeling and problem specification.

\subsection{Procedural and Declarative Interpretations}

\todo[color=yellow]{[medium] Include an example that illustrates equivalences between procedural and declarative}

We briefly consider the relationship between procedural and declarative interpretations of Venture programs.

Venture code has a direct procedural reading: it defines a probabilistic generative process that
samples hypotheses, checks constraints, and invokes inference instructions that trigger specific algorithms for reconciling the hypotheses with the constraints. Re-orderings of the instructions can significantly impact runtime and change the distribution on outputs. The divergence between the true conditioned distribution on execution traces and the distribution encoded by the program may depend strongly on what inference instructions are chosen and how they are interleaved with the incorporation of data.

Venture code also has declarative readings that are unaffected by some of these procedural details.
One way to formalize the of meaning of a Venture program is as a probability distribution over execution traces. A second approach is to ignore the details of execution and restrict attention to the joint probability distribution of the values of all {\tt PREDICT}s so far. A third approach, consistent with Venture's interactive interface, is to equate the meaning of a program with the probability distribution of the values of all the {\tt PREDICT}s in all possible sequences of instructions that could be executed in the future. Under the second and third readings, many programs are equivalent, in that they induce the same distribution albeit with different scaling behavior.

As the amount of inference performed at each {\tt INFER} instruction increases, these interpretations coalesce, recovering a simple semantics based on sequential Bayesian reasoning. Consider replacing all inference instructions are replaced with exact sampling --- {\tt [INFER (rejection default one 1)]} --- or a sufficiently large number of transitions of a generic inference operator, such as {\tt [INFER (mh default one 1000000)]}. In this case, each {\tt INFER} implements a single step of sequential Bayesian reasoning, conditioning the distribution on traces with all the {\tt OBSERVE}s since the last {\tt INFER}. 
The distribution after each {\tt INFER} becomes equivalent to the distribution represented by all programs with the same {\tt ASSUME}s, all the {\tt OBSERVE}s before the infer (in any order), and a single {\tt INFER}. The computational complexity varies based on the ordering and interleaving of {\tt INFER}s and {\tt OBSERVE}s, but the declarative meaning is unchanged. Although correspondence with these declarative, fully Bayesian semantics may require an unrealistic amount of computation in real-world applications, close approximations can be useful tools for debugging, and the presence of the limit may prove useful for probabilistic program analysis and transformation.

Venture programs represent distributions by combining modeling operations that sample values for expressions, constraint specification operations that build up a conditioner, and inference operations that evolve the distribution closer to the conditional distribution induced by a conditioner. Later in this paper we will see how to evaluate the partial probability densities of probabilistic execution traces under these distributions, as well as the ratios and gradients of these partial densities that are needed for a wide range of inference schemes.

\subsection{Markov chain and sequential Monte Carlo architectures}

The current Venture implementation maintains a single probabilistic execution trace per virtual machine instance. This trace is initialized by simulating the code from the {\tt ASSUME} and {\tt OBSERVE} instructions, and stochastically modified during inference via transition operators that leave the current conditioned distribution on traces invariant. The prior and posterior probability distributions on traces are implicit, but can be probed by repeatedly re-executing the program and forming Monte Carlo estimates.

This Markov chain architecture has been chosen for simplicity, but sequential Monte Carlo architectures based on weighted collections of traces are also possible and indeed straightforward. The number of initial traces could be specified via an {\tt INFER} instruction at the beginning of the program. Forward simulation would be nearly unchanged. {\tt OBSERVE} instructions would attach weights to traces based on the ``likelihood'' probability density corresponding to the constrained random choice in each observation, and {\tt PREDICT} instructions would read out their values from a single, arbitrarily chosen ``active'' trace. An {\tt [INFER (resample <k>)]} instruction would then implement multinomial resampling, and also change the active trace to ensure that {\tt PREDICT}s are always mutually consistent. Venture's other inference programming instructions could be treated as rejuvenation kernels \citep{del2006sequential}, and would not need to modify the weights\footnote{The only subtlety is that transition operators must not change which random choice is being constrained, as this would require changing the weight of the trace.}. This kind of sequential Monte Carlo implementation would have the advantage that the weights could be used to estimate marginal probability densities of given {\tt OBSERVE}s, and that another source of non-embarrassing parallelism would be exposed. Integrating sophisticated coupling strategies from the $\alpha$SMC framework \citep{2013arXiv1309.2918W} into the inference language could also prove fruitful.

Running separate Venture virtual machines is guaranteed to produce independent samples from the distribution represented by the Venture program. This distribution will typically only approximate some desired conditional. If there is no need to quantify uncertainty precisely, then Venture programmers can 
append repetitions of a sequence of {\tt INFER} and {\tt PREDICT} instructions to their program. Unless the {\tt INFER} instructions use rejection sampling, this choice yields {\tt PREDICT} outputs that are dependent under both Markov chain and sequential Monte Carlo architectures. It only approximates the behavior of independent runs of the program. Application constraints will determine what approximation strategies are most appropriate for each problem.

\todo[color=yellow]{[big] Figure using the multiripl illustrating convergence}
\todo[color=yellow]{[big] Careful treatment using some new math notation; tease apart degrees of procedural/declarative}

\subsection{Examples}

Here we give simple illustrations of the Venture language, including some standard modeling idioms as well as the use of custom inference instructions. Venture has been also used to implement applications of several advanced modeling and inference techniques; examples include generative probabilistic graphics programming \citep*{mansinghka2013approximate} and topic modeling \citep{Blei2003}. A description of these and other applications is beyond the scope of this paper.

\todo[color=green]{[medium] include LDA w variational and asymptotics w real results on NIPS}

\todo[color=green]{[medium] include CrossCat w real results on DHA and asymptotics}

\todo[color=green]{[medium] include some filtering result with asymptotics}

\todo[color=green]{[small] revise experimental section text to explain coverage}

\subsubsection{Hidden Markov Models}

To represent a Hidden Markov model in Venture, one can use a stochastic recursion to capture the hidden state sequence, and index into it by a stochastic observation procedure. Here we give a variant with continuous observations, a binary latent state, and an a priori unbounded number of observation sequences to model:

\vspace{0.05in}\begin{lstlisting}[frame=single,showstringspaces=false]
[ASSUME observation_noise (scope_include 'hypers 'unique (gamma 1.0 1.0))]

[ASSUME get_state
  (mem (lambda (seq t)
      (scope_include 'state t
         (if (= t 0) 
             (bernoulli 0.3)
             (transition_fn (get_state seq (- t 1)))))))]

[ASSUME get_observation
  (mem (lambda (seq t)
         (observation_fn (get_state seq t))))]

[ASSUME transition_fn
  (lambda (state)
      (scope_include 'state t (bernoulli (if state 0.7 0.3))))]

[ASSUME observation_fn
  (lambda (state)
    (normal (if state 3 -3) observation_noise))]

[OBSERVE (get_observation 1 1) 3.6]
[INFER (mh default one 10)]
[OBSERVE (get_observation 1 2) -2.8]
[INFER (mh default one 10)]
<...>
\end{lstlisting}

This is a sequentialized variant of the "default" resimulation-based Metropolis-Hastings inference scheme. If all but the last {\tt INFER} statement were removed, the program would yield the same stochastic transitions as several Church implementations, but with linear (rather than quadratic) scaling in the length of the sequence. Interleaving inference with the addition of observations improves over bulk incorporation of observations by mitigating some of the strong conditional dependencies in the posterior.

Another inference strategy is particle Markov chain Monte Carlo. For example, one could combine Metropolis-Hastings moves on the hyperparameters, given the latent states, with a conditional sequential Monte Carlo approximation to Gibbs over the hidden states given the hyper parameters and observations. Here is one implementation of this scheme, where 10 transitions of Metropolis-Hastings are done on the hyper parameters for every 5 transitions of approximate Gibbs based on 30 particles, all repeated 50 times:

\vspace{0.05in}\begin{lstlisting}[frame=single,showstringspaces=false]
[INFER (cycle ((mh hypers one 10) (pgibbs state ordered 30 5)) 50)]
\end{lstlisting}

The global particle Gibbs algorithm from \citep{wjwmpgibbs2014} with 30 particles would be expressed as follows:

\vspace{0.05in}\begin{lstlisting}[frame=single,showstringspaces=false]
[INFER (pgibbs default ordered 30 100)]
\end{lstlisting}

Note that Metropolis-Hastings transitions can be more effective than pure conditional sequential Monte Carlo for handling global parameters \citep{andrieu2010particle}. This is because MH moves allow hyperparameter inference to be constrained by all latent states.

In real-time applications, hyperparameter inference is sometimes skipped. Here is one representation of a close relative\footnote{The only difference is that a particle filter exposes all its weighted particles for forming Monte Carlo expectations or for rapidly obtaining a set of approximate samples. Only straightforward modifications are needed for Venture to expose a set of weighted traces instead of a single trace and literally recover particle filtering.} of a standard 30-particle particle filter that uses randomly chosen hyper parameters and yields a single latent trajectory:

\vspace{0.05in}\begin{lstlisting}[frame=single,showstringspaces=false]
[INFER (pgibbs state ordered 30 1)]
\end{lstlisting}

\subsubsection{Hierarchical Nonparametric Bayesian Models}

Here we show how to implement one version of a multidimensional Dirichlet process mixture of Gaussians \citep{rasmussen1999infinite}:

\vspace{0.05in}\begin{lstlisting}[frame=single,showstringspaces=false]
[ASSUME alpha (scope_include 'hypers 0 (gamma 1.0 1.0))]
[ASSUME scale (scope_include 'hypers 1 (gamma 1.0 1.0))]

[ASSUME crp (make_crp alpha)]

[ASSUME get_cluster (mem (lambda (id)
  (scope_include 'clustering id (crp))))]

[ASSUME get_mean (mem (lambda (cluster dim)
  (scope_include 'parameters cluster (normal 0 10))))]

[ASSUME get_variance (mem (lambda (cluster dim)
  (scope_include 'parameters cluster (gamma 1 scale))))]

[ASSUME get_component_model (lambda (cluster dim)
  (lambda () (normal (get_mean cluster dim) (get_variance cluster dim))))]

[ASSUME get_datapoint (mem (lambda (id dim)
  ((get_component_model (get_cluster id dim)))))]

[OBSERVE (get_datapoint 0 0) 0.2]
<...>

; default resimulation-based Metropolis-Hastings scheme
[INFER (mh default one 100)]
\end{lstlisting}

The parameters are explicitly represented, i.e. ``uncollapsed'', rather than integrated out as they often are in practice. While the default resimulation-based Metropolis-Hastings scheme can be effective on this problem, it is also straightforward to balance the computational effort differently:

\begin{lstlisting}[frame=single]
[INFER (cycle ((mh hypers one 1)
                        (mh parameters one 5)
                        (mh clustering one 5))
                        1000)]
\end{lstlisting}

On each execution of this {\tt cycle}, one hyperparameter transition, five parameter transitions (each to both parameters of a randomly chosen cluster), and five cluster reassignments are made. Which hyperparameter, parameters and cluster assignments are chosen is random. As the number of data points grows, the ratio of computational effort devoted to inference over the hyperparameters and cluster parameters to the cluster assignments is higher than it is for the default scheme. Note that the complexity of this inference instruction, as well as the computational effort ratio, depend on the number of clusters in the current trace.

It is also straightforward to use a structured particle Markov chain Monte Carlo scheme over the cluster assignments:

\begin{lstlisting}[frame=single]
[INFER (mixture ((0.2 (mh hypers one 10))
                            (0.5 (mh parameters one 5))
                            (0.3 (pgibbs clustering ordered 2 2)))
                            100)]
\end{lstlisting}

Due to the choice of only 2 particles for the {\tt pgibbs} inference strategy, this scheme closely resembles an approximation to blocked Gibbs over the indicators based on a sequential initialization of the complete model. Also note that despite the low mixing weight on the {\tt clustering} scope in the {\tt mixture}, this inference program allocates asymptotically greater computational effort to inference over the cluster assignments than the previous strategy. This is because the {\tt pgibbs} transition operator is guaranteed to reconsider every single cluster assignment.

%\subsection{Generative Probabilistic Graphics Programming}
%
\todo[color=green]{[medium] Include road scene example and CAPTCHA example}
\subsubsection{Inverse Interpretation}

We now describe {\em inverse interpretation}, a modeling idiom that is only possible in Turing-complete languages. Recall that Venture modeling expressions are easy to represent as Venture data objects, and Venture models can invoke the evaluation and application of arbitrary Venture stochastic procedures. These Scheme-like features make it straightforward to write an evaluator --- perhaps better termed a simulator --- for a Turing-complete, higher-order probabilistic programming language.

This application highlights Turing-completeness and also embodies a new potentially appealing path for solving problems of probabilistic program synthesis. In less expressive languages, learning programs (or structure) requires custom machinery that goes beyond what is provided by the language itself. In Venture, the same inference machinery used for state estimation or causal inference can be brought to bear on problems of probabilistic program synthesis. The dependency tracking and inference programming machinery that is general to Venture can be brought to bear on the problem of approximately Bayesian learning of probabilistic programs in a Venture-like language\footnote{Although we are still far from a study of the expressiveness of probabilistic languages via definitional interpretation \citep{reynolds1972definitional, abelson1983structure}, it seems likely that probabilistic programming formulations of probabilistic program synthesis --- inference over a space of probabilistic programs, possibly including inference instructions, and an interpreter for those programs --- will be revealing.}.

We first define some utility procedures for manipulating references, symbols and environments:

\vspace{0.05in}\begin{lstlisting}[frame=single,showstringspaces=false]
[ASSUME make_ref (lambda (x) (lambda () x))]
[ASSUME deref (lambda (x) (x))]

[ASSUME incremental_initial_environment
  (lambda ()
    (list
      (dict 
        (list (quote bernoulli)
              (quote normal)
              (quote plus)
              (quote times)
              (quote branch))
        (list (make_ref bernoulli)
              (make_ref normal)
              (make_ref plus)
              (make_ref times)
              (make_ref branch)))))]

[ASSUME extend_env
  (lambda (outer_env syms vals) 
    (pair (dict syms vals) outer_env))]

[ASSUME find_symbol
  (lambda (sym env)
    (if (contains (first env) sym)
     	(lookup (first env) sym)
    	(find_symbol sym (rest env))))]
\end{lstlisting}

The most interesting of these are {\tt make\_ref} and {\tt deref}. These use closures to pass references around the trace, using an idiom that avoids unnecessary growth of scaffolds. Consider an execution trace in which a value that is the argument to {\tt make\_ref} becomes the principal node of a transition. Only those uses of the reference to the value that have been passed to {\tt deref} will become resampling nodes. The value of the reference is unchanged, though the value the reference refers to is not. This permits dependence tracking through the construction of complex data structures.

Given this machinery, it is straightforward to write an evaluator for a simple function language that has access to arbitrary Venture primitives:

\vspace{0.05in}\begin{lstlisting}[frame=single,showstringspaces=false]
[ASSUME incremental_venture_apply
  (lambda (op args) (eval (pair op (map_list deref args)) (get_empty_environment)))]

[ASSUME incremental_apply
  (lambda (operator operands)
    (incremental_eval (deref (lookup operator 2))
		      (extend_env (deref (lookup operator 0))
					  (deref (lookup operator 1))
					  operands)))]

[ASSUME incremental_eval
  (lambda (expr env)
    (if (is_symbol expr)
    	(deref (find_symbol expr env))
	    (if (not (is_pair expr))
	        expr
     	    (if (= (deref (lookup expr 0)) (quote lambda))
	        	  (pair (make_ref env) (rest expr))
        		  ((lambda (operator operands)
		             (if (is_pair operator)
            		       (incremental_apply operator operands)
		                 (incremental_venture_apply operator operands)))
           		   (incremental_eval (deref (lookup expr 0)) env)
		           (map_list (lambda (x)
		                             (make_ref (incremental_eval (deref x) env)))
		                     (rest expr)))))))]
\end{lstlisting}

It is also possible to generate the input {\tt expr}s using another Venture program, and use general-purpose, Turing-complete inference mechanisms to explore a hypothesis space of expressions given constraints on the values that result. We call this the  {\em inverse interpretation} approach to probabilistic program synthesis. As in Church --- and contrary to \citep{liang10programs} --- inverse interpretation algorithms are not limited to rejection sampling. But while Church was limited to a single-site Metropolis-Hastings scheme, Venture programmers have more options. In Venture it is possible to associate portions of the program source (and portions of the induced program's executions) with custom inference strategies.

Here is an example expression grammar that can be used with the incremental evaluator:

\vspace{0.05in}\begin{lstlisting}[frame=single,showstringspaces=false]
[ASSUME genBinaryOp (lambda () (if (flip) (quote plus) (quote times)))]
[ASSUME genLeaf (lambda () (normal 4 3))]
[ASSUME genVar (lambda (x) x)]

[ASSUME genExpr
  (lambda (x)
    (if (flip 0.4) 
      (genLeaf)
      (if (flip 0.8)
         (genVar x)
         (list (make_ref (genBinaryOp)) (make_ref (genExpr x)) (make_ref (genExpr x))))))]

[ASSUME noise (gamma 5 .2)]
[ASSUME expr (genExpr (quote x))]

[ASSUME f
  (mem 
    (lambda (y) 
      (incremental_eval expr 
                        (extend_env (incremental_initial_environment) 
                                            (list (quote x)) 
                                            (list (make_ref y))))))]
                                            
[ASSUME g (lambda (z) (normal (f z) noise))]

[OBSERVE (g 1) 10] ;f(x) = 5x + 5
[OBSERVE (g 3) 20]
<...>
\end{lstlisting}

Indirection via references substantially improves the asymptotic scaling of programs like these. When a given production rule in the grammar is resimulated, only those portions of the execution of the program that depend on the changed source code are resimulated. A naive implementation of an evaluator would not have this property.

Scaling up this approach to larger symbolic expressions and small programs will require multiple advances. Overall system efficiency improvements will be necessary. Inverse interpretation also may benefit from additional inference operators, such as Hamiltonian Monte Carlo for the continuous parameters. Expression priors with inference-friendly structures would also help. For example, a prior where resimulation recovers some of the search moves from \citep{duvenaud2013structure} may be expressible by separately generating symbolic expressions and the contents of the environment into which they are evaluated. Longer term, it may be fruitful to explore formalizations of some of the knowledge taught to programmers using probabilistic programming.

\todo[color=green]{[big] Redo expression learning with better inference and target example, after fixing type issues; discuss connection to Grosse via carefully chosen prior}

\section{Stochastic Procedures}

Random choices in Venture programs arise due to the invocation of {\em
  stochastic procedures} (SPs). These stochastic procedures accept input
arguments that are values in Venture and sample output values given those
inputs. Venture includes a built-in stochastic procedure library, which includes
SPs that construct other SPs, such as the SP \verb|make_csp| which the special
form \verb|lambda| gets desugared to. %well as a \verb|lambda| construct for defining anonymous stochastic procedures. 
Stochastic procedures can also be added as extensions to Venture, and provide a mechanism
for incremental optimization of Venture programs. Model fragments for which Venture delivers inadequate performance can be migrated to native inference code that interoperates with the enclosing Venture program.

\subsection{Expressiveness and extensibility}

Many typical random variables, such as draws from a Bernoulli or Gaussian distribution, are straightforward to represent computationally. One common approach has been to use a pair of functions: a {\em simulator} that maps from an input space of values $\mathcal{X}$ and a stream of random bits $\{0,1\}^*$ to an output space of values $\mathcal{Y}$, and a {\em marginal density} that maps from $(x,y)$ pairs to $[0, \infty)$. This representation corresponds to the ``elementary random procedures'' supported by early Church implementations. Repeated invocations of such procedures correspond to IID sequences of random variables whose densities are known. While simple and intuitive, this simple interface does not naturally handle many useful classes of random objects. In fact, many objects that are easy to express as compound procedures in Church and in Venture cannot be made to fit in this form.

Stochastic procedures in Venture support a broader class of random objects:

\begin{enumerate}

\item {\bf Higher-order stochastic procedures, such as \verb|mem| (including stochastic memorization), \verb|apply|
    and \verb|map|.} Higher-order procedures may accept procedures as arguments,
  apply these procedures internally, and produce procedures as return
  values. Stochastic procedures in Venture are equipped with a simple mechanism
  for handling these cases. In fact, it turns out that all structural changes to
  execution traces --- including those arising from the execution of constructs
  that affect control flow, such as \verb|IF| --- can be handled by this
  mechanism. This simplifies the development of inference
  algorithms, and permits users to extend Venture by adding new primitives that
  affect the flow of control.

\item {\bf Stochastic procedures whose applications are exchangeably
  coupled.} Examples include collapsed representations of conjugate
  models from Bayesian statistics, combinatorial objects from Bayesian
  nonparametrics such as the Chinese Restaurant and Pitman-Yor
  processes, and probabilistic sequence models (such as HMMs) whose
  hidden state sequences can be efficiently marginalized out. Support
  for these primitives whose applications are coupled is important for
  recovering the efficiency of manually optimized samplers, which
  frequently make use of collapsed representations. Whereas
  exchangeable primitives in Church are thunks, which prohibits
  collapsing many important models such as HMMs, Venture supports
  primitives whose applications are row-wise partially exchangeable
  across different sets of arguments. The formal requirement is that
  the cumulative log probability density of any sequence of input-output pairs is
  invariant under permutation.
  
\item {\bf Likelihood-free stochastic procedures that lack tractable
  marginal densities.} Complex stochastic simulations can be
  incorporated into Venture even if the marginal probability density
  of the outputs of these simulations given the inputs cannot be
  efficiently calculated. Models from the literature on Approximate
  Bayesian Computation (ABC), where priors are defined over the
  outcome of forward simulation code, can thus be naturally supported
  in Venture. Additionally, a range of ``doubly intractable''
  inference problems, including applications of Venture to reasoning
  about the behavior of approximately Bayesian reasoning agents, can
  be included using this mechanism.
  
\item {\bf Stochastic procedures with external latent variables that
  are invisible to Venture.} There will always be models that admit
  specialized inference strategies whose efficiency cannot be
  recovered by performing generic inference on execution traces. One
  of the principal design decisions in Venture is to allow these
  strategies to be exploited whenever possible by supporting a broad
  class of stochastic procedures with custom inference over internal
  latent variables, hidden from the rest of Venture. The stochastic
  procedure interface thus serves as a flexible bridge between Venture
  and foreign inference code, analogous to the role that foreign
  function interfaces (FFIs) play in traditional programming
  languages.
\end{enumerate}

\subsection{Primitive stochastic procedures}

Informally, a primitive stochastic prodecure (PSP) is an object that
can simulate from a family of distributions indexed by some arguments. In addition to simulating, PSPs may be
able to report the logdensity of an output given an input, and may
incorporate and unincorporate information about the samples drawn from
it using mutation, e.g. in the case of a conjugate prior. This
mutation cannot be aribitrary: the draws from the PSP must remain
row-wise partially exchangeable as discussed above. A PSP may also
have custom proposal kernels, in which case it must be able to return
its contribution to the Metropolis-Hastings acceptance rate. For
example, the PSP that simulates Gaussian random variables may provide
a drift kernel that proposes small steps around its previous location,
rather than resampling from the prior distribution.

Primitive stochastic procedures are parameterized by the following
properties and behaviors:

\begin{enumerate}
\item {\tt isStochastic()} --- does this PSP consume randomness when
  it is invoked?

\item {\tt canAbsorbArgumentChanges()} --- can this PSP absorb changes
  to its input arguments? If {\tt true}, then this PSP must correctly
  implement {\tt logdensity()} as described below.

\item {\tt childrenCanAbsorbAtApplications()} --- does this PSP return
  an SP that implements the short-cut ``absorbing at applications''
  optimization, needed to integrate optimized expressions for the log marginal probability of sufficient statistics in standard conjugate models.\todo[color=yellow]{[small, end]Clarify when and where AAA Is defined}

\item {\tt value = simulate(args)} --- samples a {\tt value} for an
  application of the PSP, given the arguments {\tt args}

\item {\tt logp = logdensity(value, args)} --- an optional procedure
  that evaluates the log probability density\footnote{This density is
    implicitly defined with respect to a PSP-specific (but argument independent)
    choice of dominating measure. For PSPs that are guaranteed to produce
    discrete outputs, the measure is assumed to be the counting
    measure, so {\tt logdensity} is equivalent to the log probability
    mass function. A careful measure-theoretic treatment of Venture is left for future work.} of an output given the input arguments
  $p_{psp}({\tt simulate(args) = value} | {\tt args})$.

\item {\tt incorporate(aux, value, args)} --- incorporate the value stored in the variable named
  {\tt value} into the auxiliary storage {\tt aux} associated with the SP that contains the PSP. {\tt incorporate()} is used to implement SPs
  whose applications are exchangeably coupled. While it is always
  sufficient to store and update the full set of {\tt value}s returned for
  each observed {\tt args}, often only the counts (or some other
  sufficient statistics) are necessary.

\item {\tt unincorporate(aux, value, args)} --- remove {\tt
  value} from the auxiliary storage, restoring it to a state
  consistent with all other values that have been {\tt incorporate}d
  but not {\tt unincorporate}d; this is done when an application is
  unevaluated.
  
 \todo[color=yellow]{[medium] Check lite to finish SP extras: add custom var/sim/delt kernels, opt enumeration, and good treatment of latents}
\end{enumerate}

\subsection{The Stochastic Procedure Interface}

The stochastic procedure interface specifies the contract that Venture
primitives must satisfy to preserve the coherence of Venture's
inference mechanisms. It also serves as the vehicle by which external
inference systems can be integrated into Venture. This interface preserves the ability of primitives to  dynamically create and destroy internal latent variables hidden from Venture and to perform custom inference over these latent variables.

\subsubsection{Definition}

\begin{defn}[Stochastic procedure]
A stochastic prodecure is a pair \(  (\request, \out) \) of PSPs, along with a
latent variable simulator, where

\begin{enumerate}
\item \( \request \) returns:

\begin{enumerate}
\item A list of tuples \( (\addr, \expr, \env) \) that represent expressions
  whose values must be available to \( \out \) before it can start its simulation. We refer to
  requests of this form as \emph{exposed simulation requests} (ESRs),

\item A list of opaque tokens that can be interpreted by the SP as the latent
  variables that \( \out \) will need in order to simulate its output, along with the values of the exposed simulation requests. We refer
  to requests of this form as \emph{latent simulation requests} (LSRs).
\end{enumerate}

\item The latent variable simulator responds to LSRs by simulating any latent
  variables requested.

\item \( \out \) returns the final output of the procedure, conditioned on the
  inputs, the results of any of the exposed simulation requests, and the results of
  any latent simulation requests.

\end{enumerate}

\end{defn}
 
\subsubsection{Exposed Simulation Requests}

We want our procedures to be able to pass \( (\expr,\env) \) pairs to Venture
for evaluation, and make use of the results in some way. A procedure may also
have multiple applications all make use of a shared evaluation, e.g. \( \mem \),
and in these cases the procedure must take care to request the same \( \addr \)
each time, and the \( (\expr,\env) \) will only be evaluated the first time and
then reused thereafter. 

Specifically, an ESR request of the  \( (\addr,\expr,\env) \) is handled by
Venture as follows. First Venture checks the requesting SP's namespace to see if
it already has an entry with address \( \addr \). If it does not, then Venture
evaluates \( \expr \) in \( \env \), and adds the mapping \( \addr \to
\mathbf{root} \) to the SP's namespace, where \textbf{root} is the root of the
evaluation tree. If the SP's namespace does contain \( \addr \), then Venture
can look up \textbf{root}. Either way, Venture wires in \textbf{root} as an
extra argument to the output node. 

\subsubsection{Latent Simulation Requests and the Foreign Inference Interface}

Some procedures may want to simulate and perform inference over latent variables that are hidden from Venture. Consider an optimized implementation of a hidden Markov model integrated into the following Venture program:

\begin{lstlisting}
[ASSUME my_hmm (make_hmm 10 0.1 5 0.2)]
[OBSERVE (my_hmm 0 0) 0]
[OBSERVE (my_hmm 0 1) 1]
[OBSERVE (my_hmm 0 2) 0]
[INFER (mh default one 20)]
[PREDICT (my_hmm 0 3)]
[PREDICT (my_hmm 1 0)]
\end{lstlisting}

Here we have a constructor SP {\tt (make\_hmm <num-states> <transition-hyper> <num-output-symbols> <observation-hyper)} that generates an (uncollapsed) hidden Markov model by generating the rows of the transition and observation matrices at random. The assumption is that {\tt make\_hmm} is a primitive, although it would be straightforward to implement {\tt make\_hmm} as a compound procedure in Venture, using variations of the examples presented earlier. The constructor returns a procedure bound to the symbol {\tt my\_hmm} that permits observations from this process to be queried via {\tt (my\_hmm <sequence-id> <index>)}. The program adds a sequence fragment of length three then requests predictions for the next observation in the sequence, as well as the initial observation from an entirely new sequence.

This probabilistic program captures a common pattern: integrating Venture with an foreign probabilistic model fragment that can be dynamically queried and contains latent variables hidden from Venture. It is useful to partition the random choices in this program as follows. The transition and observation matrices of the HMM could be viewed as part of the value of the {\tt my\_hmm} SP and therefore returned by the output PSP of the {\tt make\_hmm} SP. The observations are managed by Venture as the applications of the {\tt my\_hmm} SP. The hidden states, however, are fully latent from the standpoint of Venture, yet need to be created, updated and destroyed as invocations of {\tt my\_hmm} are created or destroyed and as their arguments (or the arguments to {\tt make\_hmm}) change.

Venture makes it possible for procedures to instantiate latent variables only as necessary to simulate a given program. The mechanism is similar to that for exposed simulation requests, except in this
case the requests--which we call latent simulation requests (LSRs)--are opaque
to Venture. Venture only calls appropriate methods on the SP at appropriate
times to ensure that all the bookkeeping is handled correctly. 

This framework is straightforward to apply to {\tt make\_hmm} and the stochastic procedure(s) that it returns. If {\tt my\_hmm} is queried for an observation
at time \( t \), the requestPSP can return the time \( t \) as an
LSR. When Venture tells the HMM to simulate that LSR, the HMM will
either do nothing if \( x_t \) already exists in its internal store of
simulations, or else continue simulating from its current position up
until \( x_t \). The outputPSP then samples \( o_t \) conditioned on
the latent \( x_t \). If the application at time \( t \) is ever
unevaluated, Venture will tell the HMM to detach the LSR \( t \),
which will cause the HMM to place the latents that are no longer
necessary to simulate the program into a ``latentDB'', which it
returns to Venture. Later on, Venture may tell the HMM to restore
latents from a latentDB, for example if a proposal is rejected and the
starting trace is being restored.

The main reason to encapsulate latent variables in this way, as
opposed to requesting them as ESRs, is so that the SP can use optimized implementations of inference over their values, potentially utilizing special-purpose inference methods. For example, the
uncollapsed HMM can implement forwards-filtering backwards-sampling to
efficiently sample the latent variables conditioned on all
observations. Such procedures are integrated into Venture by defining an ``Arbitrary Ergodic Kernel'' (AEKernel) which Venture may call during inference, and which is simply a black-box to
Venture. Note that this same mechanism may be used by SPs that do not make
latent simulation requests at all, but which have latent variables instantiated
upon creation, such as a finite-time HMM or an uncollapsed
Dirichlet-Multinomial.

\todo[color=yellow]{[big] Introduce and illustrate MakeSIVM for inference over inference, in all its flexibility}

\todo[color=yellow]{[medium] Based on lite, introduce AEAAA and latent kernels}

To implement this functionality, stochastic procedures must implement three procedures in addition to the procedures needed for their ESR requestor, LSR requestor and output PSPs:

\begin{enumerate}
\item {\tt simulateLatents(aux, LSR, shouldRestore, latentDB)} ---
  simulate the latents corresponding to {\tt LSR}, using the tokens in
  {\tt latentDB} (indexed by {\tt LSR}) to find a previous value if
  {\tt shouldRestore} is true.

\item {\tt detachLatents(aux, LSR, latentDB)} --- signal that the
  latents corresponding to the request {\tt LSR} are no longer needed,
  and store enough information in {\tt latentDB} so that the value can
  be recovered later.
  
\item {\tt AEInfer(aux)} --- Trigger the external implementation to perform inference over all latent variables using the contents of {\tt aux}. It is often convenient for {\em simulateLatents} to store latent variables in the {\tt aux} and for {\tt incorporate} to store the return values of applications in {tt aux}, along with the arguments that produced them.

\end{enumerate}

Examples of this use of the stochastic procedure interface can be found in current releases of the Venture system.

\subsubsection{Optimizations for higher-order SPs}

Venture provides a special mechanism that allows certain SPs to exploit the
ability to quickly compute the logdensity of its applications. Consider
the following program:
\begin{verbatim}
[ASSUME alpha (gamma 1 1)]
[ASSUME collapsed_coin (make_beta_bernoulli alpha alpha)]
[OBSERVE (collapsed_coin) False]
[OBSERVE (collapsed_coin) True]
<repeat 10^9 times>
[OBSERVE (collapsed_coin) False]
[INFER]
\end{verbatim}

A hand-written inference scheme would only keep track of the counts of the observations, and could perform rapid Metropolis-Hastings proposals on alpha by exploiting conjugacy. On the other hand, a naive generic inference scheme might visit all one billion observation nodes to compute the acceptance ratio for each proposal. We can achieve this efficient inference scheme by letting {\tt make\_beta\_bernoulli} be responsible for tracking the sufficient statistics from the applications of {\tt collapsed\_coin}, and for evaluating the log density of all those applications as a block. Stochastic procedures that return other stochastic procedures and implement this optimization are said to be {\em absorbing at applications}, often abbreviated AAA. We will discuss techniques for implementing this mechanism in a later section.

\subsubsection{Auxiliary State}

An SP is itself stateless, but may have an associated auxiliary store,
called \emph{SPAux}, that carries any mutable information. SPAuxs have
several uses:

\begin{enumerate}
\item If an SP makes exposed simulation requests, Venture uses the
  SPAux to store mappings from the addresses of the ESRs to the node
  that stores the result of that simulation.

\item If a PSP keeps track of its sample counts or other sufficient
  statistics, such as the collapsed-beta-bernoulli which stores the
  number of \textbf{true}s and \textbf{false}s, it will store this
  information in the SPAux.

\item If an SP makes latent simulation requests, then all latent
  variables it simulates to respond to those requests are stored in
  the SPAux.

\item Some SPs may optionally store part of the value of the SP
  directly in the SPAux. This is necessary if an SP cannot easily
  store its value, its latents, and its outputs separately.
\end{enumerate}

\todo[color=green]{[big] Examples of SPs: flip, ccoin, crp, mem, eval, apply, dpmem, lazyhmm, makeSIVM}

%
%\subsection{Examples}
%
%We now describe several examples that exercise many of the features of the stochastic procedure interface. Because most stochastic procedures will not take advantage of the full flexibility
%the \( \which \) and \( \out \) distinction allows, instead putting
%of the functionality in one or the other, we will introduce some shorthand. The first common case are stochastic procedures that do not request any evaluations from
%Venture, which set \( \which = \dummy \dash \which \); this PSP is a deterministic function that returns an empty list. The second is those stochastic procedures that request an application but then return the resulting value unmodified; for these procedures, \( \out = \reference \).
%
%\subsubsection{IID draws from probability distributions}
%
%{\bf [[VKM include flip]]}
%
%\subsubsection{Standard exchangeable sequences}
%
%{\bf [[VKM include collapsed coin and CRP]]}
%
%\subsubsection{Higher-order procedures from functional programming}
%
%{\bf [[VKM mem, apply, eval; check code]]}
%
%\subsubsection{Stochastic memoization}
%
%{\bf [[VKM dpmem; check code]]}
%
%\subsubsection{Encapsulating inference in hidden Markov models}
%
%{\bf [[VKM include HMM but revise]]}
%
%{\bf [[VKM later include inference over generative models based on inference}
%

%{\bf [[DHS describe auxes store sampleCounts / sufficient statistics, families, latents; cloneable; latest thinking on memory management]]}

\section{Probabilistic Execution Traces}

Bayesian networks decompose probabilistic models into nodes for random variables, equipped with conditional probability tables, and edges for conditional dependencies. They can be interpreted as probabilistic programs via the ancestral simulation algorithm \citep{frey1997bayesian}. The network is traversed in an order consistent with the topology, and each random variable is generated given its immediate parents. Bayesian networks can also be viewed as expressing a function for evaluating the joint probability density of all the nodes in terms of a factorization given by the graph structure. 

Here we describe {\em probabilistic execution traces} (PETs), which serve analogous functions for Venture programs and address the additional complications that arise from Turing-completeness and the presence of higher order probabilistic procedures. We also describe recursive procedures for constructing and destroying probabilistic execution traces as Venture modeling language expressions are evaluated and unevaluated.

\todo[color=yellow]{[medium] Write comparative PET intro based on structure in the comments}
%   - PET intro section;
%	- model
%	  graph + CPTs
%	  s-expr (+ SPI; could do PETs instead)
%	- random choices
%	  eval graph via ancestral sim to get addrs
%	  eval s-expr to get (stack) addrs
%	- hypothesis: map from addrs to random choices
%	- locality, factorization and conditional independence:
%	  node + markov blanket
%	  principal node(s) + scaffold
%	- lightweight as ?substitution model?, pet as ?evaluation model?

\subsection{Definition of a probabilistic execution trace}

Probabilistic execution traces consist of a directed graphical model representing the dependencies in the current execution, along with the stateful auxiliary data for each stochastic procedure, the Venture program itself, and metadata for existential dependencies and exchangeable coupling. We will typically identify executions and PETs, and denote them via the symbols \( \rho \) or \( \xi \).

PETs contain the following nodes:

\begin{enumerate}
\item One constant node for the global environment.

\item One constant node for every value bound in the global environment, which includes
  all built-in SPs.

\item One constant node for every call to \textbf{eval} on a expression that is
  either self-evaluating or quoted.

\item One lookup node for every call to \textbf{eval} that triggers a symbol
  lookup.

\item One request node and one output node for every call to \textbf{eval} that
  triggers an SP application. We refer to the operator nodes and operand nodes
  of request nodes and output nodes, but note that these are not special node
  types.

\end{enumerate}

PETs also contain the following edges:

\begin{enumerate}
\item One lookup edge to each lookup node from the node it is looking up.

\item One operator edge to every request node from its operator node.

\item One operator edge to every output node from its operator node.

\item One operand edge to every request node from each of its operand nodes.

\item One operand edge to every output node from each of its operand nodes.

\item One requester edge to every output node from its corresponding request node.

\item One ESR edge from the root node of every SP family to each SP application that
  requests it.
\end{enumerate}

Every node represents a random variable and has a value that cannot
change during an execution. The PET also includes the SPAux for every SP that
needs one. Unlike the values in the nodes, the SPAuxs may be mutated during an
execution, for example to increment the number of \emph{true}s for a
beta-bernoulli.

\subsection{Families}

We divide our traces into families: one Venture family for every
\emph{assume}, \emph{predict}, \emph{observe} directive, and one SP family for every
unique ESR requested during forward simulation. Because of our uniform
treatment of conditional simulation, executions of programs satisfy 
the following property: the structure of every family is a function of the
expression only, and does not depend on the random choices made while evaluating
that expression. The only part of the topology of the graph that can change is
which ESRs are requested.

\subsection{Exchangeable coupling}

\todo[color=green]{[small] Better motivate exchangeable coupling, based on Cameron's feedback}

Given our exchangeability assumptions for SPs, we can cite
(generalized) de Finetti \citep{orbanzbayesian} to conclude that there
is, in addition to the observed random variables explicitly
represented in the PET, one latent random variable \( \theta_{f} \)
for every SP \( f \) in the PET corresponding to the unobserved de
Finetti measure, and one latent variable \( \theta_{f,\args} \) for
every set of arguments \( \args \) that \( f \) is called on, with an
edge from \( f \)'s maker-node to \( \theta_f \), edges from \(
\theta_f \) to every \( \theta_{f,\args} \), and an edge from
\(\theta_{f,\args} \) to every node corresponding to an application of
the form \( f(\args) \). However, each \( \theta_f \) and \(
\theta_{f,\args} \) is marginalized out by each \( f \) by way of
mutation on its SPAux, effectively introducing a hyperedge that
indirectly couples the application nodes of all applications of \( f
\).

We have chosen not to represent these dependencies in the graphical
structure of the PET for the following reasons. First, once we
integrate out \( \theta_{f} \) and \( \theta_{f,\args} \), we can only
encode these dependencies at all with directed edges once we fix a
specific ordering for the applications. Second, for the orderings we
will be interested in, the graph that combines both types of directed
edges would be cyclic. This would complicate future efforts to develop
a causal semantics for PETs and Venture programs.

\subsection{Existential dependence and contingent evaluation}

An SP family is existentially dependent on the nodes that request it as part of
an ESR, in the sense that if at any point it is not requested by
\emph{any} nodes, then the family would not have been computed while simulating
the PET. Existential dependence is then handled with garbage collection
semantics, whereby an SP family can no longer be part of the PET if it is not
selected by any \( W \) nodes, and thus should be unevaluated.

\subsection{Examples}

Here we briefly give example PETs for simple Venture programs.

\subsubsection{Trick coin}

This is a variant of our running example. It defines a model that can be used for
inferences about whether or not a coin is tricky, where a trick coin is allowed to have any weight between 0 and 1. The version we use includes one observation that a single flip of the coin came up heads.

To keep the PET as simple as possible, we give both the program and
the PET in the form where \verb|IF| has already been desugared into an
SP application: \verb|(IF <predicate> <consequent> <alternate>)| has
been replaced with
\verb|(branch <predicate> (quote <consequent>) (quote <alternate>))|,
where \verb|branch| is an ordinary stochastic procedure.

\begin{verbatim}
[ASSUME coin_is_tricky (bernoulli 0.1)]
[ASSUME weight (branch coin_is_tricky (quote (beta 1.0 1.0)) (quote 0.5))]
[OBSERVE (bernoulli weight) true]
\end{verbatim}

Figure~\ref{fig:trickcoin} shows the two PET structures that can arise from simulating this program, along with arbitrarily chosen values.

\begin{figure}
\hspace{-0.5in}
(a)
\includegraphics[width=3in]{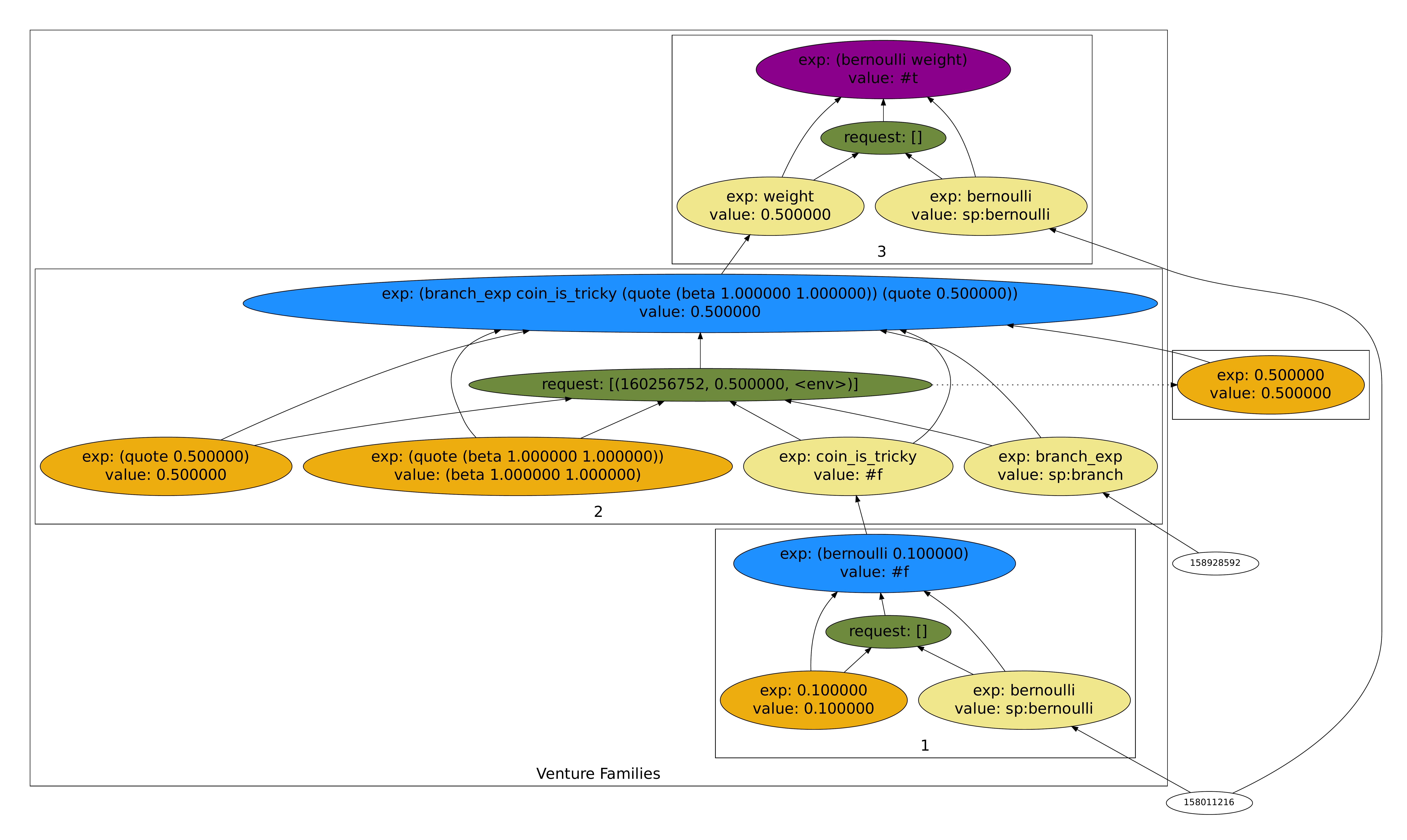}
(b) 
\includegraphics[width=4in]{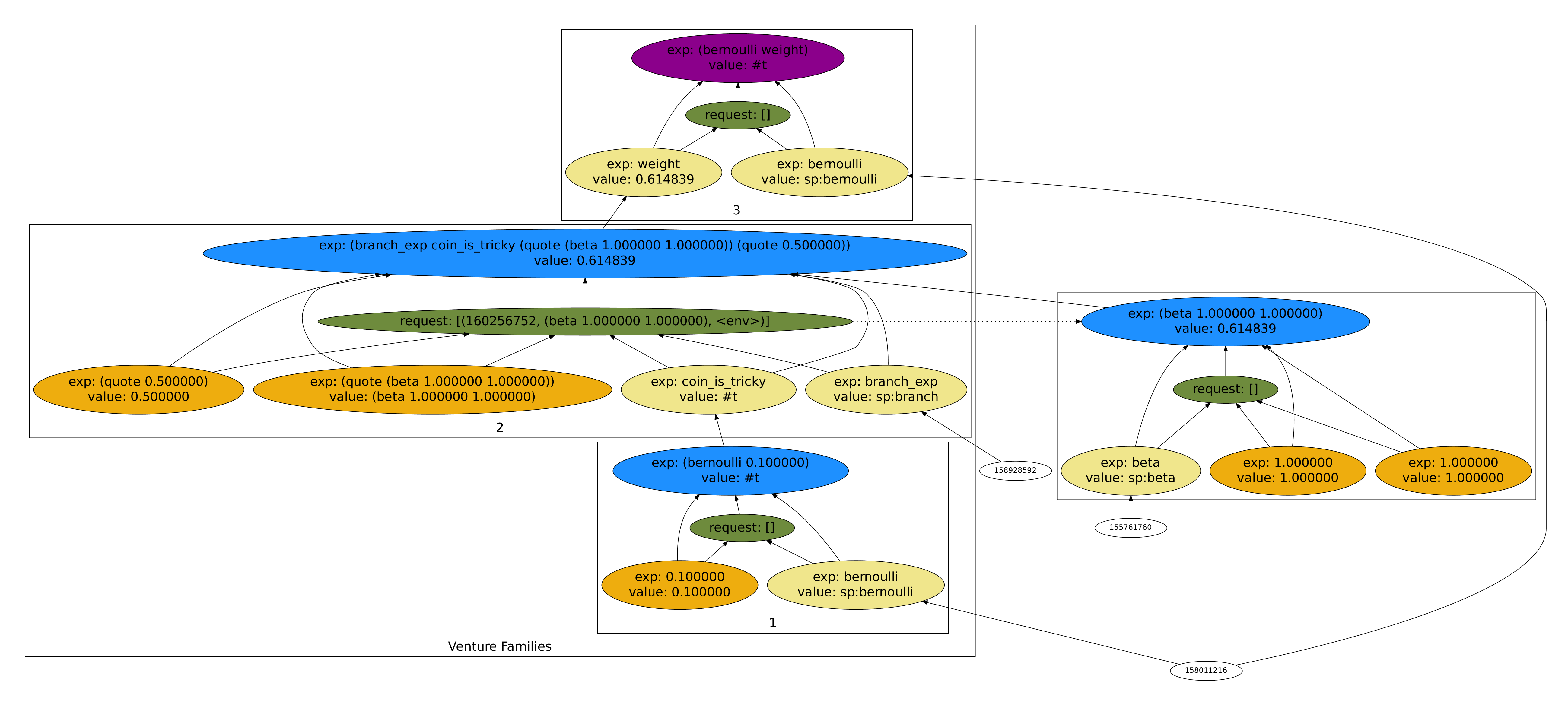}
\caption{{\bf The two different PET structures corresponding to the trick coin program.} (a) An execution trace where the coin is fair. (b) An execution trace where the coin is tricky, containing additional nodes that depend existentially on the coin flip.}
\label{fig:trickcoin}
\end{figure}

\todo[color=green]{[small] enlarge into pedagogical example discuss trick coin example PET}

\subsubsection{A simple Bayesian network}

Figure~\ref{fig:sprinkler} shows a PET for the following program, implementing a simple Bayesian network:

\begin{verbatim}
[ASSUME rain (bernoulli 0.2)]
[ASSUME sprinkler (bernoulli (branch rain 0.01 0.4))]
[ASSUME grassWet
    (bernoulli (branch rain
                       (branch sprinkler 0.99 0.8)
                       (branch sprinkler 0.9 0.00001)))]
[OBSERVE grassWet True]
\end{verbatim}

Note that each of the Venture families in the PET corresponds to a node in the Bayesian network. A coarsened version of the PET contains the same conditional dependence and independence information as the Bayesian network would.

\todo[color=green]{[medium] Explain alternative representations of a Bayesian network, with figures, and discuss tradeoffs; include the actual Bayes net in the figure}

\begin{figure}
\includegraphics[width=6in]{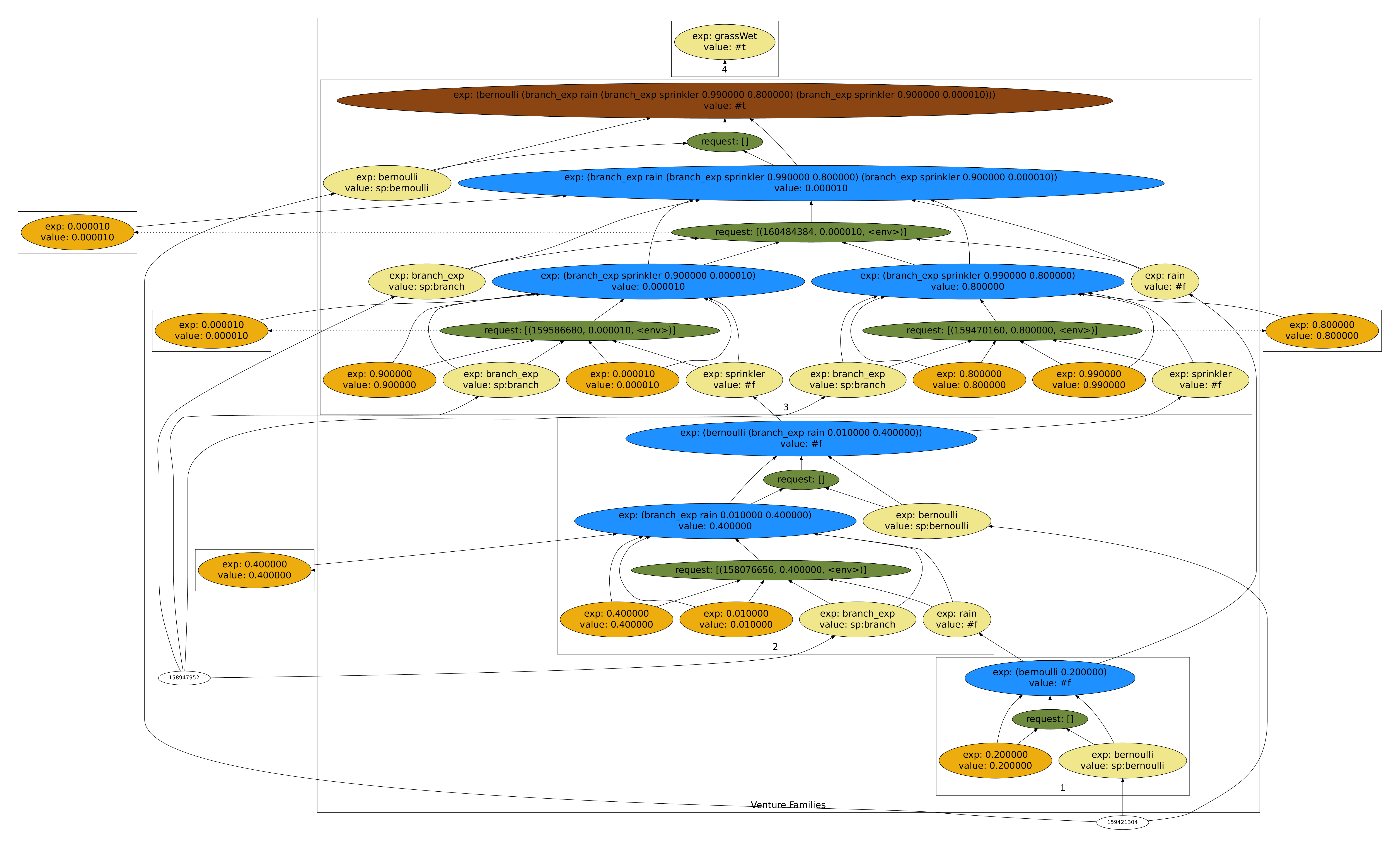}
\caption{{\bf The PET corresponding to a three-node Bayesian network.} Each of the three numbered families correspond to a node in the Bayesian network, capturing the execution history needed to simulate the node given its parents.}
\label{fig:sprinkler}
\end{figure}

\subsubsection{Stochastic memoization}

\todo[color=yellow]{[medium] Discuss+cite Church, mansinghka's thesis, nonparametrics; explain how this has been misinterpreted by Luc de Raedt (whom we should cite)}

\todo[color=red]{[medium] Discuss probabilistic logic programming and SRL}

We now illustrate a program that exhibits stochastic memoization. This
program constructs a stochastically memoized procedure and then
applies it three times. Unlike deterministic memoization, a
stochastically memoized procedure has a stochastic $\request$ PSP
which sometimes returns a previously sampled value, and sometimes
samples a fresh one. These random choices follow a Pitman-Yor
process. Figure~\ref{fig:pymem} shows a PET corresponding to a typical
simulation; note the overlapping requests.

\begin{verbatim}
[ASSUME f (pymem bernoulli 1.0 0.1)]
[PREDICT (f)]
[PREDICT (f)]
[PREDICT (f)]
\end{verbatim}

\begin{figure}
\includegraphics[width=6in]{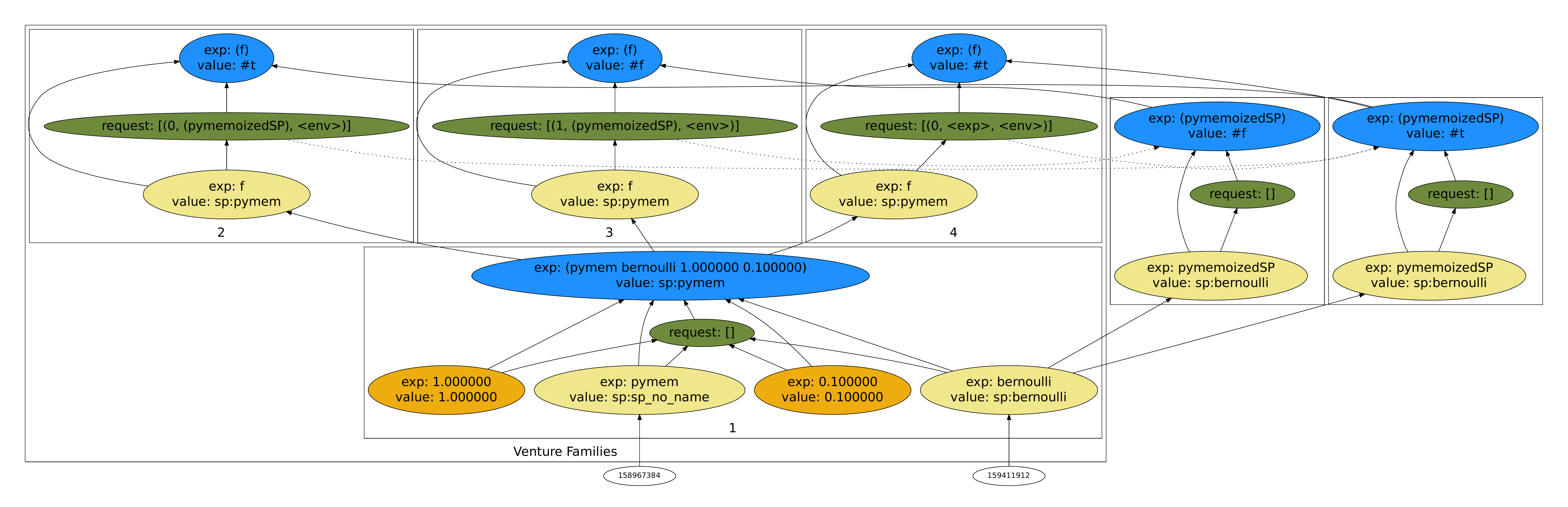}
\caption{{\bf A PET corresponding to an execution of a program with stochastic memoization.} The procedure being stochastically memoized, {\tt bernoulli}, is applied twice, based on the value of the requests arising in invocations of {\tt (f)}.}
\label{fig:pymem}
\end{figure}

\todo[color=green]{[small] de Raedt arxiv note on stochastic memorization: straighten out the record}

\subsection{Constructing PETs via forward simulation}

Let \( \prog \) be a constrained program. Venture's primary inference
strategies require an execution of \( \prog \) with positive
probability before they can even begin. Therefore the first thing we
do is simply evaluate the program by interpreting it in a fairly
standard way, except with all conditional evaluation handled uniformly
through the ESR machinery presented above. For simplicity, we elide details related to inference scoping.

\subsubsection{Pseudocode for EVAL, APPLY and EVAL-REQUESTS }

Evaluation is generally similar to a pure Scheme, but there are a few noteworthy differences. First, evaluation creates nodes for every recursive call, and connects them together to form the directed graph structure of the probabilistic execution trace. Second, there are no distinctions between primitive procedures and compound procedures. We \todo[color=yellow]{Rename EvalFamily to eval.}call the top-level evaluation procedure {\sc EvalFamily} to emphasize the family block structure in PETs. Third, a {\em scaffold} and a database of random choices {\em db} are threaded through the recursions, to support their use in inference, including restoring trace fragments when a transition is rejected by reusing random choices from the {\em db}. Note that an environment model evaluator  \citep{abelson1983structure} is used, even though the underlying language is pure, with the PET storing both the environment structure and the recursive invocations of {\tt eval}. 

\begin{codebox}
\Procname{$\proc{EvalFamily}(\id{trace},\id{exp},\id{env},\id{scaffold},\id{db})$}
\li \If \( \func{isSelfEvaluating?}(\id{exp}) \) 
\li \Then \Return \( (0,\attrib{trace}{\func{createConstantNode}}(\id{exp})) \)
\li \ElseIf \( \func{isQuoted?}(\id{exp}) \)
\li \Then \Return \( (0,\attrib{trace}{\func{createConstantNode}}(\func{TextOfQuotation}(\id{exp}))) \)
\li \ElseIf \( \func{isVariable?}(\id{exp}) \) 
\li \Then \( \id{sourceNode} = \attrib{env}{\func{findSymbol}}(\id{exp}) \)
\li \(\proc{Regenerate}(\id{trace},\id{exp},\id{scaffold},\const{False},\id{db})\)
\li \Return \( (0, \attrib{trace}{\func{createLookupNode}}(\id{sourceNode}) \)
\li \ElseNoIf
\li \( \id{weight}, \id{operatorNode} \gets \proc{EvalFamily}(\id{trace},\attrib{exp}{operator},\id{env},\id{scaffold},\id{db}) \)
\li \( \id{operandNodes} \gets [] \)
\li \For \( \id{operand} \) \In \( \attrib{exp}{operands} \)
\li \Do \( (w,\id{operandNode}) \gets \proc{EvalFamily}(\id{trace},\id{operand},\id{env},\id{scaffold},\id{db}) \)
\li \( \id{weight} \gets \id{weight} + w \)
\li \( \attrib{operandNodes}{\func{append}}(\id{operandNode}) \)
\li \End
\li \( (\id{requestNode},\id{outputNode}) \gets \attrib{trace}{\func{createApplicationNodes}}(\id{operatorNode},\id{operandNodes},\id{env}) \)
\li \( \id{weight} \gets \id{weight} + \proc{ApplySP}(\id{trace},\id{requestNode},\id{outputNode},\id{scaffold},\const{False},\id{db}) \)
\li \Return \( (\id{weight},\id{outputNode}) \)
 \End
\end{codebox}

The call to {\sc Regenerate} ensures that if {\sc EvalFamily} is called in the context of a pre-existing PET, it is traversed in an order that is compatible with the dependence structure of the program. We will sometimes refer to such orders as ``evaluation-consistent''. From the standpoint of forward simulation, however, {\sc Regenerate} can be safely ignored.

\begin{codebox}
\Procname{$\proc{ApplySP}(\id{trace},\id{requestNode},\id{outputNode},\id{scaffold},\id{restore?},\id{db})$}
\li \( \id{weight} \gets \proc{ApplyPSP}(\id{trace},\id{requestNode},\id{scaffold},\id{restore?},\id{db}) \)
\li \( \id{weight} \gets \id{weight} +
\proc{EvalRequests}(\id{trace},\id{requestNode},\id{scaffold},\id{restore?},\id{db})
\)
\li \( \id{weight} \gets \id{weight} + \proc{RegenerateESRParents}(\id{trace},\id{outputNode},\id{scaffold},\id{restore?},\id{db}) \)
\li \( \id{weight} \gets \id{weight} + \proc{ApplyPSP}(\id{trace},\id{outputNode},\id{scaffold},\id{restore?},\id{db}) \)
\li \Return \id{weight}
\end{codebox}

Stochastic procedures are allowed to request evaluations of expressions\footnote{To prevent this flexibility from introducing arbitrary dependencies, the expressions and environments are restricted to those constructible from the arguments to the procedure or the procedure that constructed it.}; this enables higher-order procedures as well as the encapsulation of custom control flow constructs. For example, compound procedures do this to evaluate their body in an environment extended with their formal parameters and argument values. 

\begin{codebox}
\Procname{\(\proc{EvalRequests}(\id{trace},\id{node},\id{scaffold},\id{restore?},\id{db})\)}
\li \( \id{sp},\id{spaux} \gets \attrib{trace}{\func{getSP}}(\id{node}),\attrib{trace}{\func{getSPAux}}(\id{node}) \)
\li \( \id{weight} \gets 0 \)
\li \( \id{request} \gets \attrib{trace}{\func{getValue}}(\id{node}) \)
\li \For \( \id{id},\id{exp},\id{env},\id{block},\id{subblock} \) \In \( \attrib{request}{esrs} \)
\li \Do \If \( \Not \attrib{spaux}{\func{containsFamily}}(\id{id}) \)
\li \Then 
\If \( \id{restore?} \)
\li \Then \( \id{esrParent} \gets \attrib{db}{\func{getESRParent}}(\id{sp},\id{id}) \)
\li \( \id{weight} \gets \id{weight} + \proc{RestoreFamily}(\id{trace},\id{esrParent},\id{scaffold},\id{db}) \)
\li \ElseNoIf
\li \( (\id{w},\id{esrParent}) \gets \proc{EvalFamily}(\id{trace},\id{exp},\id{env},\id{scaffold},\id{db}) \)
\li \( \id{weight} \gets \id{weight} + \id{w} \)
\End
\li \( \attrib{trace}{\func{registerFamily}}(\id{node},\id{id},\id{esrParent}) \)
\li \ElseNoIf
\li \( \id{weight} \gets \id{weight} + \proc{Regenerate}(\id{trace},\attrib{spaux}{\func{getFamily}}(\id{id}),\id{scaffold},\id{restore?},\id{db})\)
\End
\li \( \id{esrParent} \gets \attrib{spaux}{\func{getFamily}}(\id{id}) \)
\li \If \( \Not \id{block} \isequal \const{None} \)
\li \Then \( \attrib{trace}{\func{registerBlock}}(\id{block},\id{subblock},\id{esrParent}) \)
 \End
\li \( \attrib{trace}{\func{addESREdge}}(\id{esrParent},\attrib{node}{outputNode}) \)
\End
\li \For \( \id{lsr} \) \In \( \attrib{request}{lsrs} \)
\li \Do 
\If \attrib{db}{\func{hasLatentDBFor}}(\id{sp})
\li \Then \( \id{latentDB} \gets \attrib{db}{\func{getLatentDB}}(\id{sp}) \)
\li \ElseNoIf
\li \( \id{latentDB} \gets \const{None} \)
\End
\li \( \id{weight} \gets \id{weight} + \attrib{sp}{\func{simulateLatents}}(\id{spaux},\id{lsr},\id{restore?},\id{latentDB}) \)
\End
\li \Return \id{weight}
\end{codebox}

Stochastic procedures are also allowed to perform opaque operations to produce outputs given the value of their arguments and any requested evaluations. Note that this may result in random choices being added to the record maintained by the trace.

\begin{codebox}
\Procname{$\proc{ApplyPSP}(\id{trace},\id{node},\id{scaffold},\id{restore?},\id{db})$}
\li \( \id{psp},\id{args} \gets \attrib{trace}{\func{getPSP}}(\id{node}), \attrib{trace}{\func{getArgs}}(\id{node}) \)
\li \If \( \attrib{db}{\func{hasValueFor}}(\id{node}) \)
\li \Then \( \id{oldValue} \gets \attrib{db}{\func{getValue}}(node) \)
\li \ElseNoIf
\li \( \id{oldValue} \gets \const{None} \)
\End
\li \Comment{Determine new value}
\li \If \( \id{shouldRestore} \)
\li \Then \( \id{newValue} \gets \id{oldValue} \)
\li \ElseIf \( \attrib{\id{scaffold}}{\func{hasKernelFor}}(\id{node}) \)
\li \Then \( \id{newValue} \gets \attrib{\attrib{\id{scaffold}}{\func{getKernel}}(\id{node})}{\func{simulate}}(\id{trace},\id{oldValue},\id{args}) \)
\li \ElseNoIf
\li \( \id{newValue} \gets \attrib{psp}{\func{simulate}}(\id{args}) \)
\End
\li \Comment{Determine weight}
\li \If \( \attrib{\id{scaffold}}{\func{hasKernelFor}}(\id{node}) \)
\li \Then \( \id{weight} \gets \attrib{\attrib{\id{scaffold}}{\func{getKernel}}(\id{node})}{\func{weight}}(\id{trace},\id{newValue},\id{oldValue},\id{args}) \)
\li \ElseNoIf 
\li \( \id{weight} \gets \const{0} \)
\End
\li \( \attrib{trace}{\func{setValue}}(\id{node},\id{newValue}) \)
\li \( \attrib{psp}{\func{incorporate}}(\id{newValue},\id{args}) \)

\li \If \( \func{isSP?}(\id{newValue}) \)
\li \Then \( \proc{processMadeSP}(\id{trace},\id{node},\attrib{\id{scaffold}}{\func{isAAA?}}(\id{node})) \)
\End
\li \If \( \attrib{psp}{\func{isRandom?}}() \)
\li \Then \( \attrib{trace}{\func{registerRandomChoice}}(\id{node}) \)
\End
\li \Return \( \id{weight} \)
\end{codebox}

This functionality is supported by book-keeping to link new stochastic procedures into the trace, and register any customizations they implement with the trace so that inference transitions can make use of them:

\begin{codebox}
\Procname{\( \proc{ProcessMadeSP}(\id{trace},\id{node},\id{isAAA?}) \)}
\li \( \id{sp} \gets \attrib{trace}{\func{getValue}}(\id{node}) \)
\li \( \attrib{trace}{\func{setMadeSP}}(\id{node},\id{sp}) \)
\li \( \attrib{trace}{\func{setValue}}(\id{node},\func{spref}(\id{sp})) \)
\li \If \Not \( \id{isAAA?} \)
\li \Then 
\( \attrib{trace}{\func{setMadeSPAux}}(\id{node},\attrib{sp}{\func{constructSPAux}}()) \)
\li \If \( \attrib{sp}{\func{hasAEKernel}}() \)
\li \Then 
\( \attrib{trace}{\func{registerAEKernel}}(\id{node}) \)
\End
\End
\end{codebox}

\subsection{Undoing simulation of PET fragments}

Venture also includes requires unevaluation procedures that are dual to the evaluation procedures described earlier. This is because PET fragments need to be removed from the trace in two situations. First, when \verb|FORGET| instructions are triggered, the expression corresponding to the directive is removed. Second, during inference, changes to the values of certain requestPSPs make cause some SP families to no longer be requested. In the first case, all of the random choices are permanently removed, whereas in the second case, the random choices may need to be restored if the proposal is rejected.

\subsubsection{Pseudocode for UNEVAL, UNAPPLY and UNEVAL-REQUESTS}

When a trace fragment is unevaluated, we must visit all application nodes in the trace fragment so that the PSPs have a chance to unincorporate the (input,output) pairs. The operations needed to do this are essentially inverses of the simulation procedures described above, designed to visit nodes in the reverse order that evaluation does, to ensure compatibility with exchangeable coupling. Here we give pseudocode for these operations, eliding the details of garbage collection\footnote{Previous work has anecdotally explored the possibility of preserving unused trace fragments and treating them as auxiliary variables, following the treatment of component model parameters from Algorithm 8 in \citep{Neal:1998wz}. In theory, this delay of garbage collection, where multiple copies of each trace fragment are maintained for each branch point, could support adaptation to the posterior. A detailed empirical evaluation of these strategies is pending a comprehensive benchmark suite for Venture as well as a high-performance implementation.}:

\begin{codebox}
\Procname{\(\proc{UnevalFamily}(\id{trace},\id{node},\id{scaffold},\id{db})\)}
\li \If \( \attrib{node}{isConstantNode?} \) 
\li \Then \( \id{weight} \gets \const{0} \)
\li \ElseIf \( \attrib{node}{isLookupNode?} \)
\li \Then 
\( \attrib{trace}{\func{disconnectLookup}}(\id{node}) \)
\li \(\id{weight} \gets \proc{Extract}(\id{trace},\attrib{node}{sourceNode},\id{scaffold},\id{db})\)
\li \ElseNoIf
\li \( \id{weight} \gets \proc{UnapplySP}(\id{trace},\id{node},\id{scaffold},\const{False},\id{db}) \)
\li \For \( \id{operandNode} \) \In \( \attrib{node}{operandNodes} \)
\li \Do \( \id{weight} \gets \id{weight} + \proc{UnevalFamily}(\id{trace},\id{operandNode},\id{scaffold},\id{db}) \)
 \End
\li \( \id{weight} \gets \id{weight} + \proc{UnevalFamily}(\id{trace},\attrib{node}{operatorNode},\id{scaffold},\id{db}) \)
\End
\li \Return \( \id{weight} \)
\end{codebox}

To unapply a stochastic procedure, Venture must undo its random choices and also unapply any requests it generated:

\begin{codebox}
\Procname{$\proc{UnapplySP}(\id{trace},\id{node},\id{scaffold},\id{db})$}
\li \( \id{weight} \gets
\proc{UnapplyPSP}(\id{trace},\id{node},\id{scaffold},\id{db}) \)
\li \( \id{weight} \gets \id{weight} + \proc{extractESRParents}(\id{trace},\id{outputNode},\id{scaffold},\id{db}) \)
\li \( \id{weight} \gets \id{weight} + \proc{UnevalRequests}(\id{trace},\attrib{node}{requestNode},\id{scaffold},\id{db}) \)
\li \( \id{weight} \gets \id{weight} + \proc{UnapplyPSP}(\id{trace},\attrib{node}{requestNode},\id{scaffold},\id{db}) \)
\li \Return \id{weight}
\end{codebox}

To unapply a primitive stochastic procedure, we remove its random choices from the trace, unincorporate them, and update the weight accordingly. Note that we store the value in the {\em db} so that we can restore it later if necessary.

\begin{codebox}
\Procname{$\proc{UnapplyPSP}(\id{trace},\id{node},\id{scaffold},\id{db})$}
\li \( \id{psp},\id{args} \gets \attrib{trace}{\func{getPSP}}(\id{node}), \attrib{trace}{\func{getArgs}}(\id{node}) \)
\li \If \( \attrib{psp}{\func{isRandom}}() \)
\li \Then \( \attrib{trace}{\func{unregisterRandomChoice}}(\id{node}) \)
\End
\li \If \( \func{isSP}(\attrib{trace}{\func{getValue}}(\id{node})) \)
\li \Then \( \proc{teardownMadeSP}(\id{trace},\id{node},\attrib{\id{scaffold}}{\func{isAAA}}(\id{node})) \)
\End
\li \( \id{oldValue} \gets \attrib{trace}{\func{getValue}}(\id{node}) \)
\li \( \attrib{psp}{\func{unincorporate}}(\id{oldValue},\id{args}) \)
\li \If \( \attrib{\id{scaffold}}{\func{hasKernelFor}}(\id{node}) \)
\li \Then
\( \id{weight} \gets \attrib{\attrib{\id{scaffold}}{\func{getKernel}}(\id{node})}{\func{reverseWeight}}(\id{trace},\id{oldValue},\id{args}) \)
\li \ElseNoIf 
\li  \( \id{weight} \gets \const{0} \)
\End

\li \( \attrib{db}{\func{extractValue}}(\id{node},\id{oldValue}) \)
\li \( \attrib{trace}{\func{clearValue}}(\id{node}) \)
\li \Return \( \id{weight} \)
\end{codebox}

Handling of requests during unevaluation involves two additional subtleties. First, latent random choices must be handled appropriately. Second, requests must be unevaluated only if no other application of the SP refers to them.

\begin{codebox}
\Procname{\(\proc{UnevalRequests}(\id{trace},\id{node},\id{scaffold},\id{db})\)}
\li \( \id{sp},\id{spaux} \gets \attrib{trace}{\func{getSP}}(\id{node}),\attrib{trace}{\func{getSPAux}}(\id{node}) \)
\li \( \id{weight} \gets 0 \)
\li \( \id{request} \gets \attrib{trace}{\func{getValue}}(\id{node}) \)

\li \If \( \attrib{request}{lsrs} \) \AndNot \( \attrib{db}{\func{hasLatentDB}}(\attrib{trace}{\func{getSP}}(\id{node})) \)
\li \Then \( \attrib{db}{\func{registerLatentDB}}(\id{sp},\attrib{sp}{\func{constructLatentDB}}()) \)
\End

\li \For \( \id{lsr} \) \In \( \func{reversed}(\attrib{request}{lsrs}) \)
\li \Do 
\( \id{weight} \gets \id{weight} + \attrib{sp}{\func{detachLatents}}(\id{spaux},\id{lsr},\attrib{db}{\func{getLatentDB}}(\id{sp})) \)
\End

\li \For \( \id{id},\id{exp},\id{env} \) \In \( \func{reversed}(\attrib{request}{esrs}) \)
\li \Do 
\( \id{esrParent} \gets \attrib{trace}{\func{popLastESRParent}}(\attrib{node}{outputNode}) \)
\li \If \( \attrib{trace}{getNumberOfRequests}(\id{esrParent}) \isequal \const{0} \)
\li \Then
 \( \attrib{trace}{\func{unregisterFamily}}(\id{node},\id{id}) \)
\li \( \attrib{db}{\func{registerFamily}}(\id{sp},\id{id},\id{esrParent}) \)
\li \( \id{weight} \gets \id{weight} +  \proc{UnevalFamily}(\id{trace},\id{esrParent},\id{scaffold},\id{db}) \)
\End
\End

\li \Return \id{weight}
\end{codebox}

When an SP is finally destroyed --- i.e. when the output PSP of its maker SP application is unapplied --- then the SP can be destroyed and its auxiliary storage can be garbage collected. Because unapplication of the maker must happen after all applications of the made SP have been unapplied --- due to the constraint that unevaluation visits nodes in the reverse order of evaluation and regeneration --- there is no information in the auxiliary storage that needs to be preserved in the latent DB.

\begin{codebox}
\Procname{\( \proc{TeardownMadeSP}(\id{trace},\id{node},\id{isAAA?}) \)}
\li \( \id{sp} \gets \attrib{trace}{\func{getMadeSP}}(\id{node}) \)
\li \( \attrib{trace}{\func{setValue}}(\id{node},\id{sp}) \)
\li \( \attrib{trace}{\func{clearMadeSP}}(\id{node}) \)
\li \If \Not \( \id{isAAA?} \)
\li \Then 
 \If \( \attrib{sp}{\func{hasAEKernel}}() \)
\li \Then 
\( \attrib{trace}{\func{unregisterAEKernel}}(\id{node}) \)
\End
\li \( \attrib{trace}{\func{clearMadeSPAux}}(\id{node}) \)

\End
\end{codebox}

\subsection{Enforcing constraints via CONSTRAIN and UNCONSTRAIN}

Unfortunately, the observations may not have positive probability
given the program execution. Venture may not happen to sample the
right data, and but if noise is added to each data generating
expression to ensure positive probability, it may take a long time for
Venture to incorporate the observed data at all. On the other hand,
the general problem of finding a program execution for which some
output has positive probability is intractable in general -- it can
encode \textbf{SAT} without even making use of contingent evaluation
or special primitives -- and Venture is not designed to attempt to
invert such programs.

Venture implements a middle ground. It is designed with sufficiently
stochastic probabilistic programs in mind, and in particular is not a
\textbf{SAT}-solver. However, in order to support common cases from
Bayesian statistics, we do support inverting a simple kind of
determinism. We introduce a method
\verb|constrain(<directive>,<value>)| that recursively walks backwards
along \( id \dash \on \dash \llia \) edges to find the outermost
application of a different kind of PSP. If the constrained value has
positive probability at this PSP application, we constrain the value
and we are done. Otherwise, we unevaluate the entire program and
evaluate it again, in hopes that this execution will be constrainable.

\todo[color=yellow]{[medium] Write up final version of constrain, with rejection subtleties, discussing ergodicity via global scaffold vs rejection, and include in CLRS}

Simple versions of constrain and unconstrain can be implemented as follows:

\begin{codebox}
\Procname{\(\proc{Constrain}(\id{trace},\id{node},\id{value})\)}
\li \If \( \attrib{node}{isLookupNode?} \)
\li \Then \Return  \( \proc{Constrain}(\id{trace},\attrib{node}{sourceNode},\id{value}) \)
\li \ElseIf \( \attribb{trace}{\func{getPSP}(\id{node})}{isESRReference?} \)
\li \Then \Return  \( \proc{Constrain}(\id{trace},\attrib{trace}{\func{getESRParents}}(\id{node})[0],\id{value}) \)
\li \ElseNoIf
\li \( \id{psp},\id{args} \gets \attrib{trace}{\func{getPSP}}(\id{node}), \attrib{trace}{\func{getArgs}}(\id{node}) \)
\li \( \attrib{psp}{\func{unincorporate}}(\attrib{trace}{\func{getValue}}(\id{node}),\id{args}) \)
\li \( \id{weight} \gets \attrib{psp}{\func{logDensity}}(\id{value},\id{args}) \)
\li \( \attrib{trace}{\func{setValue}}(\id{node},\id{value}) \)
\li \( \attrib{psp}{\func{incorporate}}(\id{value},\id{args}) \)
\li \If \( \attrib{psp}{isRandom?} \) 
\li \Then \( \attrib{trace}{\func{registerConstrainedChoice}}(\id{node}) \)
\End
\li \Return \( \id{weight} \)
\End
\end{codebox}

\begin{codebox}
\Procname{\(\proc{Unconstrain}(\id{trace},\id{node})\)}
\li \If \( \attrib{node}{isLookupNode?} \)
\li \Then \Return  \( \proc{Unconstrain}(\id{trace},\attrib{node}{sourceNode}) \)
\li \ElseIf \( \attribb{trace}{\func{getPSP}(\id{node})}{isESRReference?} \)
\li \Then \Return  \( \proc{Unconstrain}(\id{trace},\attrib{trace}{\func{getESRParents}}(\id{node})[0]) \)
\li \ElseNoIf
\li \( \id{oldValue} \gets \attrib{trace}{\func{getValue}}(\id{node}) \)
\li \( \id{psp},\id{args} \gets \attrib{trace}{\func{getPSP}}(\id{node}), \attrib{trace}{\func{getArgs}}(\id{node}) \)
\li \If \( \attrib{psp}{isRandom?} \) 
\li \Then \( \attrib{trace}{\func{unregisterConstrainedChoice}}(\id{node}) \)
\End
\li \( \attrib{psp}{\func{unincorporate}}(\id{oldValue},\id{args}) \)
\li \( \id{weight} \gets \attrib{psp}{\func{logDensity}}(\id{oldValue},\id{args}) \)
\li \( \attrib{psp}{\func{incorporate}}(\id{oldValue},\id{args}) \)
\li \Return \( \id{weight} \)
\End
\end{codebox}

It is instructive to consider the following program:

\begin{verbatim}
[ASSUME x_1 (bernoulli 0.5)]
[ASSUME x_2 (bernoulli 0.5)]
...
[ASSUME x_N (bernoulli 0.5)]
[OBSERVE (xor x_1 ... x_N) True]
\end{verbatim}

If \verb|xor()| happens to be true by chance ---
i.e. \verb|constrain()| did not have to adjust any random choices ---
then the PET that was sampled is already drawn from the true
conditioned distribution. However, the probability of this decreases
rapidly in \verb|N| and because \verb|xor()| is deterministic,
repeated rejection may appear to be the only option. To avoid this
problem, we frequently restrict ourselves to \verb|OBSERVE|s whose
expressions have the form \verb|(<some-stochastic-SP> ...)|, i.e. they
are applications of some fixed stochastic procedure. Then we can
always successfully initialize and begin inference. Venture was
designed with such ``sufficiently stochastic'' probabilistic programs
in mind; see \citep{Anonymous:T2b7vHXI} for some
rationale. We also note that to ensure ergodicity, periodic
whole-trace proposals that attempt to restart from the prior (via the
\verb|INFER| directive) are sufficient, although this kind of
independence sampler is unlikely to be efficient.

We note that without additional restrictions on observations, the
presence of \verb|constrain()| could introduce significant
complexity. Constraining a node \( X \) that is used in multiple
places would not necessarily give us an execution with positive
probability, since the probability of all children of \( X \) will now
have changed.

To resolve this and to simplify our algorithms, we make the following
strong assumption on Venture programs:

\emph{For a Venture program to be valid, it must guarantee that if a node gets
  constrained during inference (as opposed to from a call to observe), every
  node along every outgoing directed path must be either a deterministic output
  PSP, a null-request PSP, or a lookup node. Moreover, any observed nodes on
  this path must agree on the observed value.}

In addition to the code given above, our implementation of constrain also propagates the
constrained value along these outgoing deterministic paths, but the details of
the implementation are beyond the scope of this paper.

\section{Partitioning Traces into Scaffolds for Scalable, Incremental Inference}

Most scalable inference algorithms for Bayesian networks take advantage of the network's representation of conditional dependence. We have seen that probabilistic execution traces make analogous scalability gains possible for probabilistic programs written in Turing-complete, higher-order languages, by representing not just conditional dependences but also existential dependencies and exchangeable coupling. 

However, PETs and the SPI also introduce complexities that can be avoided in the setting of Bayesian networks. For example, in a Bayesian network, the random variables that could possibly directly depend on a given random variable are easy to identify: they are the immediate children of the node according to the direct graph structure of the network. In a PET, the value of a random choice may change which children exist. Additionally, in Bayesian networks, all nodes are equipped with conditional probability densities, but SPs that lack conditional density functions are permitted in Venture. Joint densities for chains of random choices in Venture may not be available, let alone the conditional density of a choice given its immediate descendants.

Here we describe {\em scaffolds}, the mechanism used in Venture to handle these complexities. Scaffolds carve out coherent subproblems of inference in the context of a global inference problem analogously to the inference subproblems in a Bayesian network that arise by conditioning on a subset of the nodes and querying the nodes contained within it.

\subsection{Motivation and Notation}

Let \( \rho \) be an execution and let \( \mathbf{X} =  \{ X \} \) denote a set of nodes that
we wish to propose new values for as a block. If we only propose new values to these
nodes, we may end up with a trace that has probability 0. For example, consider the
following program:

\begin{verbatim}
[ASSUME x (normal 0 1)]
[PREDICT (+ x 1)]
\end{verbatim}

If we only propose a change to the value of \( x \) from \( 1 \) to \( 2 \), and do not propagate the
change through the application of \( + \), then \( + \) will report probability
0 of sampling its output \( 2 \) given its inputs \( 1 \) and \( 2
\). 

Similarly, in the program

\begin{verbatim}
[ASSUME x (normal 0 1)]
[PREDICT (if (< x 0) (normal -10 1) (normal 10 1))]
\end{verbatim}

if we only propose a new value for \( x \) and its sign flips, then the requester for IF will
report probability 0 of its old request given its new inputs. 

SPs that cannot report their likelihoods introduce a similar problem. In the program

\begin{verbatim}
[ASSUME x (normal 0 1)]
[PREDICT (likelihood-free-sp x)]
\end{verbatim}

if we only propose a new value for \( x \), then we cannot compute the
Metropolis-Hastings acceptance ratio since we have no way to account for the
probability of the likelihood-free-sp's old value given its new arguments.

In all three cases, the problem can be solved by proposing new values to other nodes
downstream of the nodes in \( \mathbf{X} \). However, we do not want to
resimulate the rest of the program just to make a single proposal. We
want to find a middleground between rejecting at \( + \) and running the entire
program. As a default, we propose new values to downstream nodes until we reach
applications of stochastic PSPs for which we can definitely compute the
logdensity of its original output given any new values for its inputs. We say
that these nodes ``absorb the flow of change'' and so we call them
\emph{absorbing nodes}.

Note that in the case of the IF statement that may switch from one branch to
another, the nodes in the old branch may no longer exist in the proposed trace,
and other nodes may come into existence.

\todo[color=yellow]{[small] Describe mapping from scaffold to overall Venture program}

To simplify notation, we will not explicitly condition on the parents of a
random variable in our derivations. For example, in the Bayesian network \( A
\to B \), we will write \(P(A) \) for \( P(A) \) and \( P(B) \) for \( P(B | A)
\). Although this would be unacceptable in the case of Bayesian networks where
we marginalize and condition arbitrarily, in our discussion of probabilistic
programming we will only ever compute probabilities of nodes given their
parents, so that this notation will not be problematic. 

\subsection{Partitioning traces and defining scaffolds}

As before, let \( \rho \) be an execution and let \( \mathbf{X} =  \{ X \} \) denote a set of nodes that
we wish to propose new values for as a block. We call \( \mathbf{X} \) the
\emph{principal nodes} of the proposal. Let \( \xi \) denote the proposed
trace. We introduce the following definitions:

\begin{enumerate}
\item \( \DG = \DG(\rho,\mathbf{X}) \): the nodes that will definitely still exist in
  \( \xi \), whose values might change. This includes \( \mathbf{X} \), as well
  as any nodes at which we do not want to absorb at (e.g. deterministic nodes)
  or are unable to absorb at (e.g. applications of likelihood-free SPs). Note
  that we have \( \DG(\rho) = \DG(\xi) = \DG \) by symmetry.

\item \( \brush(\rho) = \brush(\rho,\mathbf{X}) \): the nodes that may no longer be requested if \( \DG \)
  is resampled. This includes any nodes in branches that may be abandoned, or
  any requests whose operator may change.

\item 
\( \RG(\rho) = \RG(\rho,\mathbf{X}) =  \DG(\rho,\mathbf{X}) \cup \brush(\rho,\mathbf{X}) \): the regenerated subset of \(
\rho \). These are all the nodes that may either change value or no longer exist
in \( \xi \). Conversely \( \RG(\xi) \) consists of precisely the
set of new values we are proposing for \( \xi \).

\item
\( \AG = \AG(\rho,\mathbf{X}) \): the children of \( \DG \), or equivalently, the ``absorbing''
nodes. For each absorbing, the differences between the logdensity of the output
given the new value and the logdensity of the output of the old value appears in
the Metropolis-Hastings acceptance ratio.

\item \( \torus(\rho,\mathbf{X}) = \rho \setminus \RG(\rho) \): the nodes in \( \rho
  \) that are guaranteed to still exist in \( \xi \), and whose values are
  guaranteed not to change. By construction we have \( \torus(\rho) =
  \torus(\xi) = \torus \).

\item 
\( \PG(\rho) = \PG(\rho,\mathbf{X}) \): all parents of all nodes in \( \RG(\rho) \cup \AG \), excluding any
nodes in those sets. These are the nodes whose values would have been required to
simulate \( \RG(\rho) \) from \( \torus \) and to calculate the new log
densities at \( \AG \), but which definitely exist and whose values cannot change.

\item
\( \IG(\rho)  \): the set of all nodes that would never be referenced at all
while regenerating \( \RG(\rho) \).

\item
\( \OG = \PG(\rho) \cup \IG(\rho) = \PG(\xi) \cup \IG(\xi) \). Note that we may
not have \( \PG(\rho) = \PG(\xi) \), since \( \RG(\rho) \) and \( \RG(\xi) \)
may lookup different symbols in different environments, and may also request
different simulations. 

\end{enumerate}

These definitions give us the following partitions:

\begin{align}
\rho &= \DG \cup \RG(\rho) \cup \AG \cup \OG \\
\xi &= \DG \cup \RG(\xi) \cup \AG \cup \OG 
\end{align}

We refer to the pair \( ( \DG, \AG) \) as the \( \scaffold \) of the
set \( \mathbf{X} \) of principal nodes. As we will
see, the scaffold is the fundamental entity in the inference methods we will
consider. It induces a set
\( \Xi = \Xi[\OG,\scaffold] \) of executions that can be reached by resimulating
along the scaffold, consulting parents in \( \OG \) as needed. Moreover, for
any \( \xi \in \Xi \), the scaffold \( \scaffold \) will yield the same set \(
\OG \) in the factorization \( \xi = \scaffold \cup \brush(\xi) \cup \OG \).

\begin{figure}
%(a)
%\includegraphics[width=200pt,height=200pt]{}
%(b)
\begin{center}
\includegraphics[width=200pt,height=200pt]{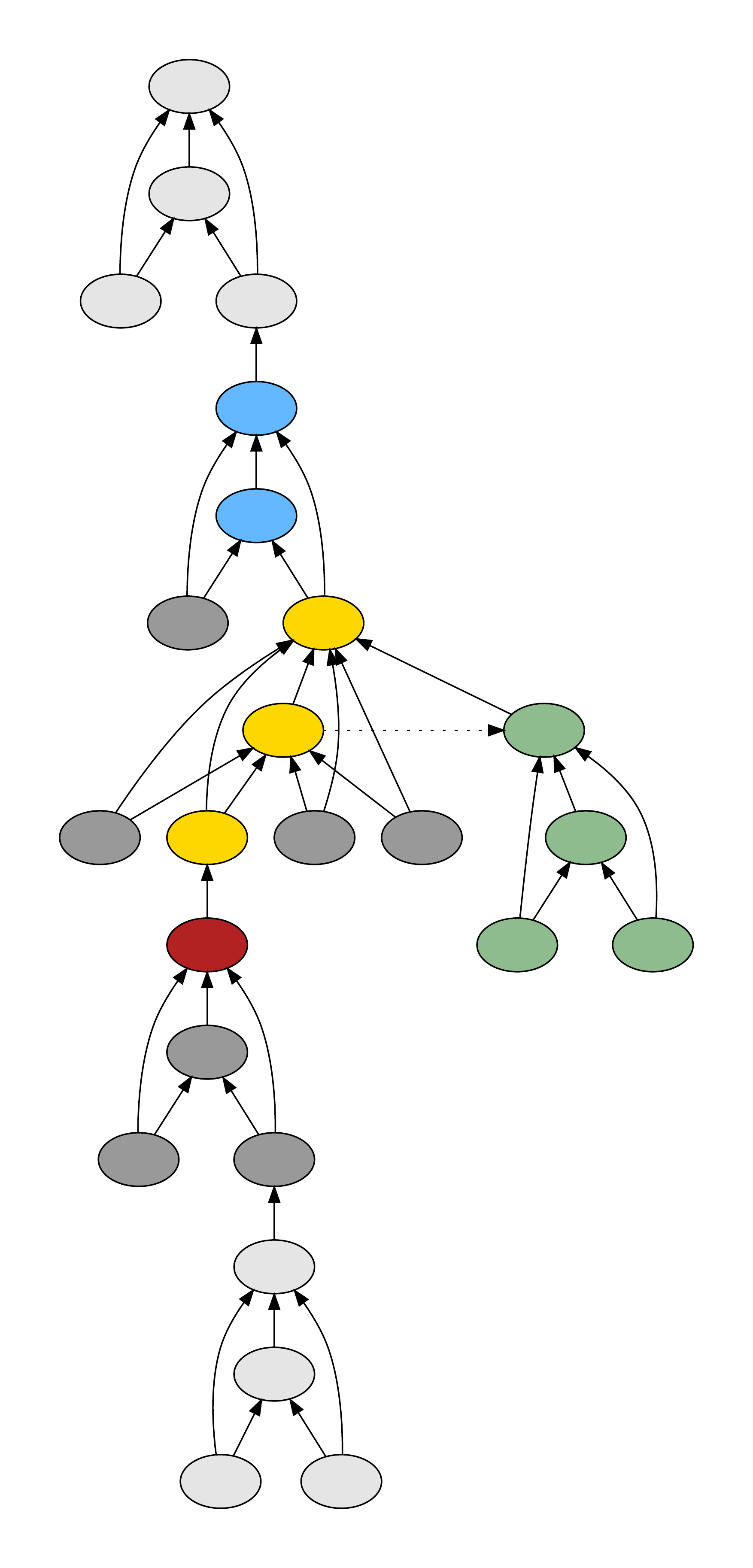}
\end{center}
\caption{The scaffold partitions the trace into five groups: the gold nodes that
  will definitely still exist in the proposal trace but whose values may change
  (drg), the blue nodes at which we will definitely compute likelihoods
  (absorbing), the green nodes that may no longer exist (the brush), the dark
  grey nodes that are parents of nodes in these three groups (parents), and all
  other light grey nodes that need never be visited at all (ignored). Note
  that future graphs will not distinguish between these last two groups.} 
\label{fig:partition_scaffold}
\end{figure}

\subsection{Constructing the scaffold}

\todo[color=yellow]{[small] Revise the voice in the scaffold construction section to match the tone of the rest of the paper.}

Given a trace \( \rho \) and a set of principal nodes \( \mathbf{X}
\), we construct the scaffold \( \DG(\rho,\mathbf{X}) \) as
follows. First, we walk downstream from each of the principal nodes
depth-first until every path either terminates or reaches the
application of a stochastic PSP for which we are certain we can
compute a logdensity. As we go along, we mark nodes as \( \DG \) or \(
\AG \) as appropriate. However, some nodes that we marked during this
process may actually be in the brush. Therefore our second step is to
recursively ``disable'' all requests that may change value or that may
no longer exist, and to mark as brush every node that is only
requested by these disabled requests. We then remove from \( \DG \)
and \( \AG \) every node in the brush.
Figure~\ref{fig:scaffold_construction} gives an overview of this
process, while Figure~\ref{fig:prior_scaffold} illustrates the need
for removing the brush of the source trace $\rho$ before making a
proposal.

\begin{figure}
(a)
\includegraphics[width=200pt,height=200pt]{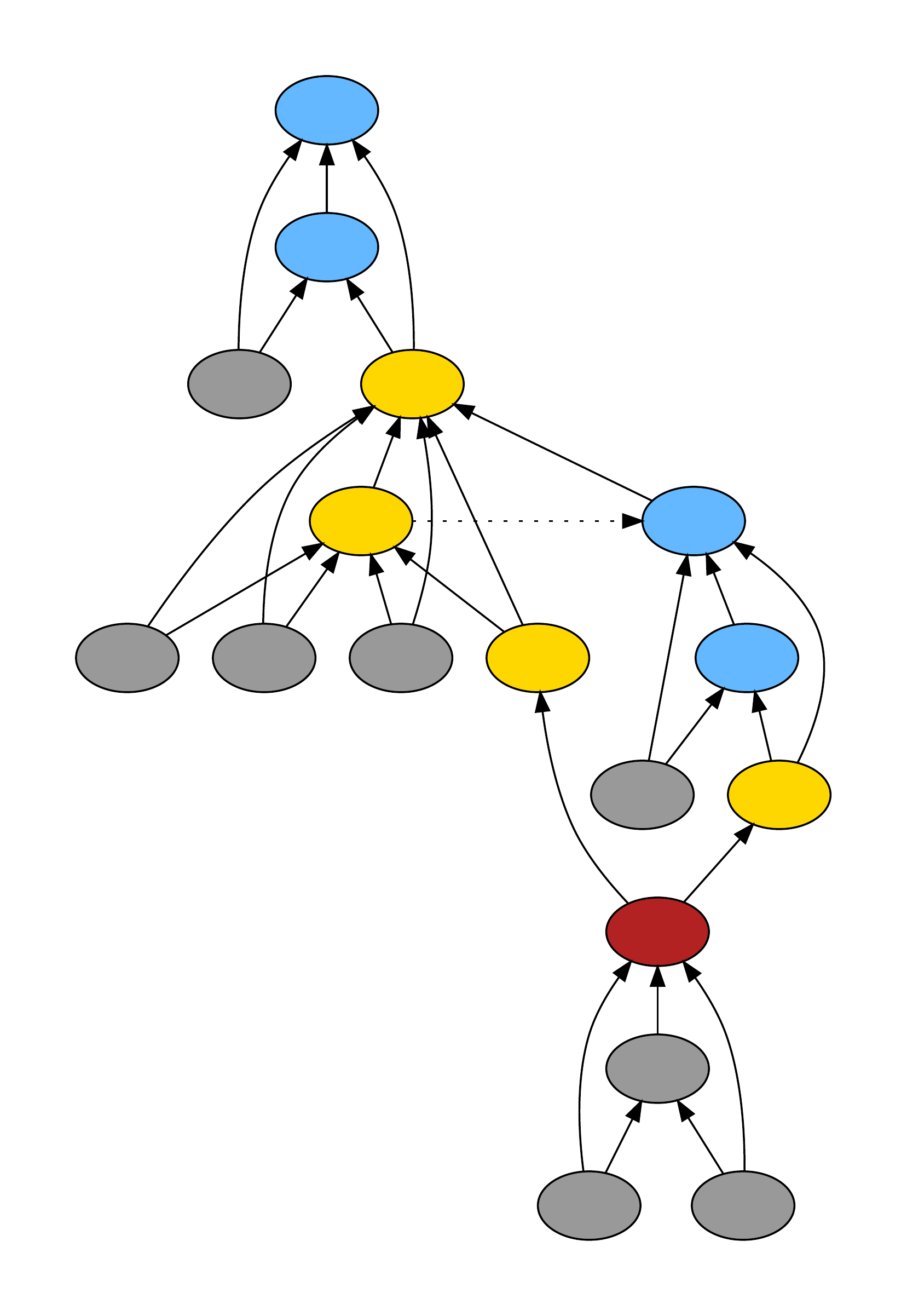}
(b)
\includegraphics[width=200pt,height=200pt]{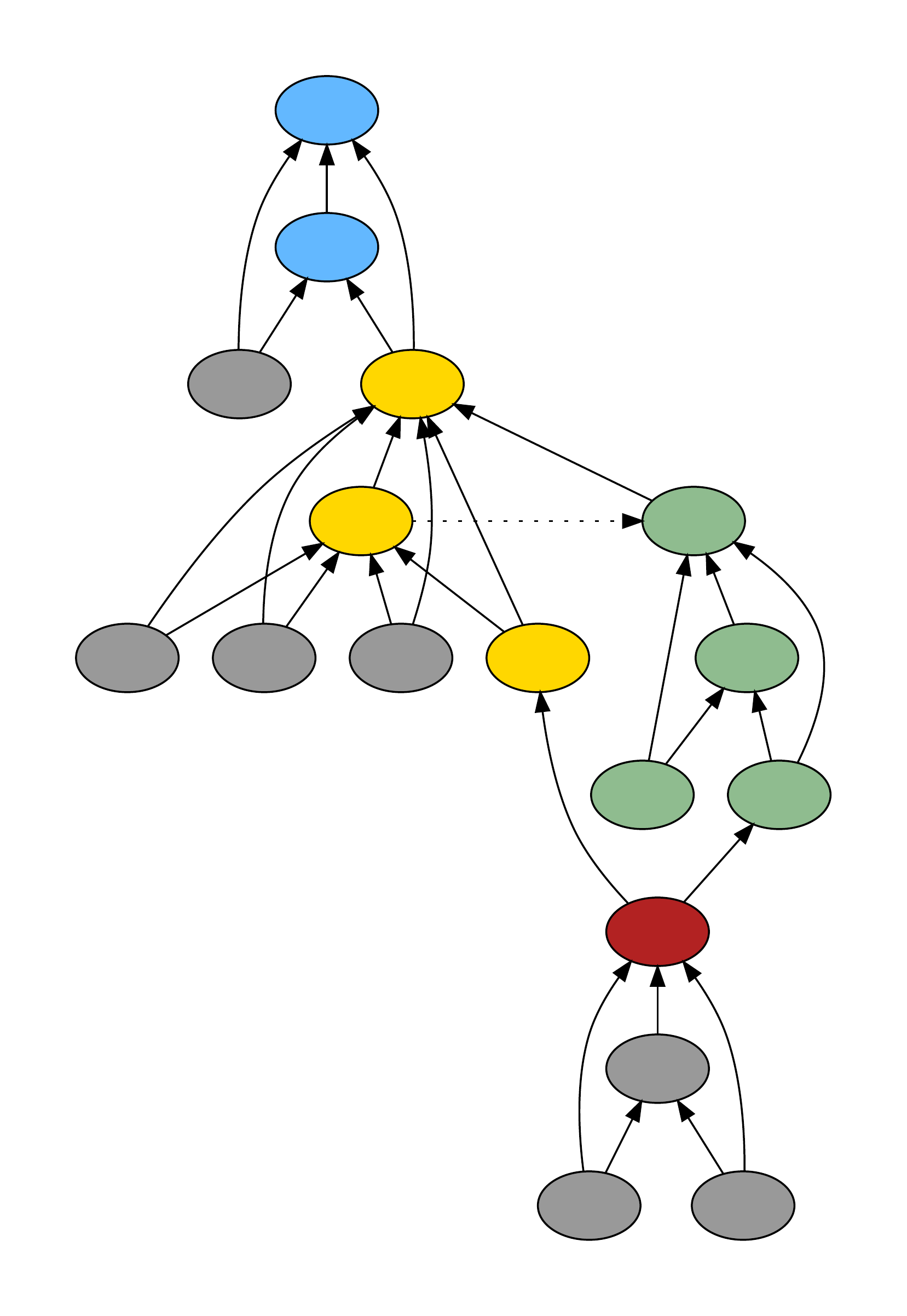}
\caption{{\bf Scaffold construction.} The principal node is shown in red, resampling nodes are shown in gold, absorbing nodes are shown in blue, and the brush is shown in green. (a) To construct a scaffold, Venture first walks downstream from the (red) principal node, identifying all the nodes whose values could possibly change, stopping at nodes that are guaranteed to be able to absorb the change (albeit perhaps with a probability density of 0). (b) Venture then identifies the brush --- those nodes that may no longer exist depending on the values chosen for the nodes being resampled --- using a separate recursion. Once the brush has been identified, the definite regeneration graph (gold) and the border (blue) are straightforward to identify. See the main text for additional details.}
\label{fig:scaffold_construction}
\end{figure}

\begin{figure}[ht]
\includegraphics[width=\textwidth]{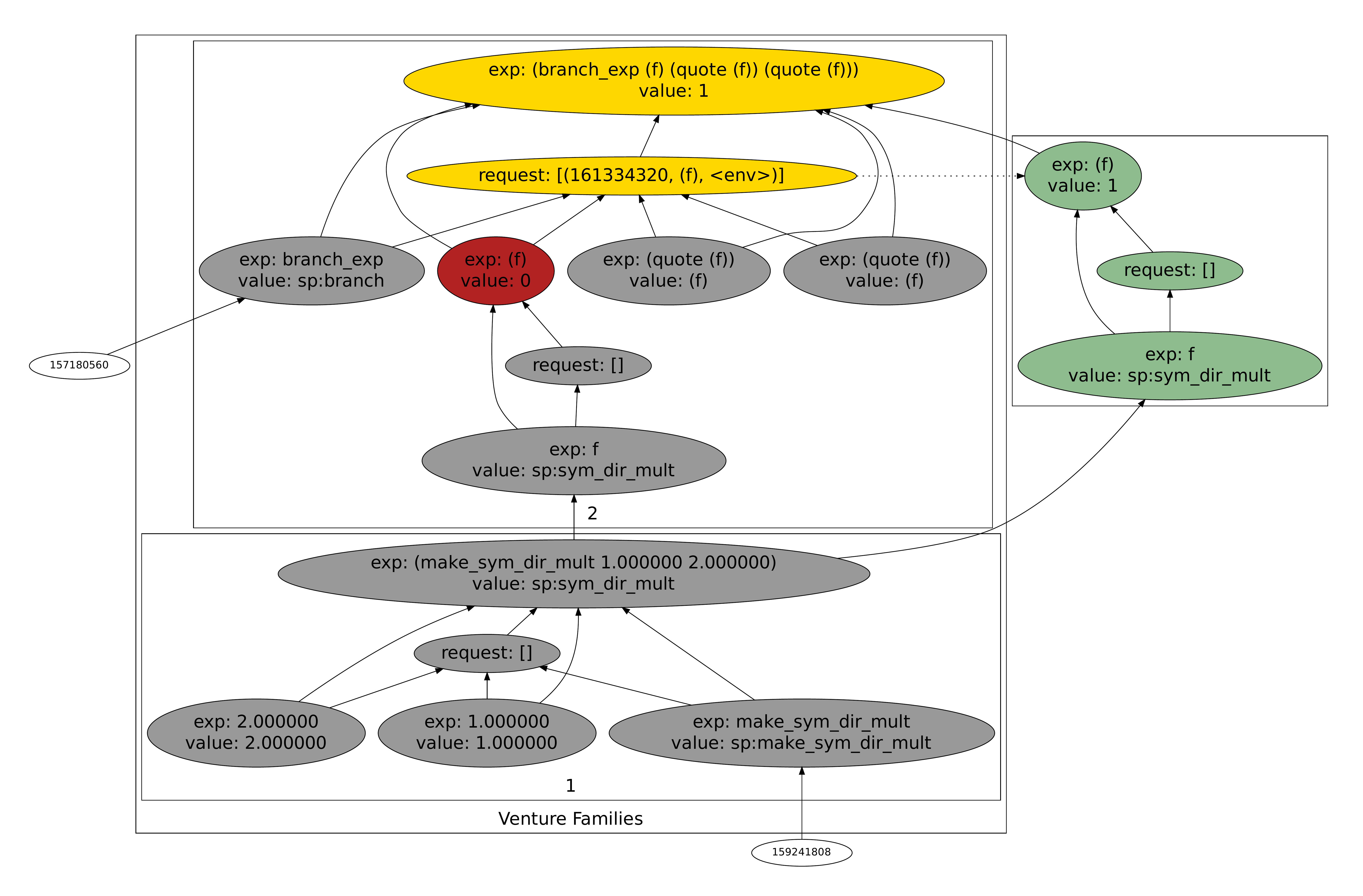}
\caption{{\bf The need for detaching the brush before making a proposal.} This trace includes the invocation of {\tt make\_symmetric\_dirichlet\_discrete}, an SP that produces SPs whose applications are exchangeably coupled. In this trace, the result of one application determines whether or not another application is made. Thus the second application is in the brush of the scaffold generated by the first application. If we do not detach the brush first, we may propose to the first application conditioned on an application that will not exist in the new trace.}
\label{fig:prior_scaffold}
\end{figure}

\subsection{Pseudocode for constructing scaffolds}

The overall construction procedure for scaffolds involves several passes, all linear in the size of the scaffold:

\begin{codebox}
\Procname{\(\proc{ConstructScaffold}(\id{trace},\id{principalNodes},\id{kernelInfo})\)}
\li \( \id{cDRG},\id{cAbsorbing},\id{cAAA} \gets \proc{FindCandidateScaffold}(\id{trace},\id{principalNodes}) \)
\li \( \id{brush} \gets \proc{FindBrush}(\id{trace},\id{cDRG}, \id{cAbsorbing},\id{cAAA}) \)
\li \( \id{drg},\id{absorbing},\id{aaa} \gets \func{RemoveBrush}(\id{cDRG},\id{cAbsorbing},\id{cAAA},\id{brush}) \)
\li \( \id{border} \gets \proc{FindBorder}(\id{drg},\id{absorbing},\id{aaa}) \)
\li \( \id{regenCounts} \gets \proc{ComputeRegenCounts}(\id{trace},\id{drg},\id{absorbing},\id{aaa},\id{border},\id{brush}) \)
\li \( \id{lkernels} \gets \func{loadKernels}(\id{trace},\id{drg},\id{kernelInfo}) \)
\li \Return \( \func{Scaffold}(\id{drg},\id{absorbing},\id{aaa},\id{border},\id{regenCounts},\id{lkernels}) \)
\end{codebox}

Finding a candidate scaffold given a set of principal nodes involves walking downstream from each principal node until it is possible to absorb. To find the brush, Venture must count the number of times a given requested family is reachable from the DRG. If this equals the number of times the family is requested, then all references to the family originate in the DRG, and it is therefore possible for the family's existence to depend on the values of the DRG nodes.

\todo[color=yellow]{[medium] Retypeset scaffold utils from lite; check scope/block details}

\begin{codebox}
\Procname{\(\proc{FindBrush}(\id{trace},\id{cDRG},\id{cAbsorbing},\id{cAAA})\)}
\li \( \id{disableCounts} \gets \func{map}(0) \)
\li \( \id{disabledRequestNodes} \gets \func{set}() \)
\li \( \id{brush} \gets \func{set}() \)
\li \For \id{node} \In \id{cDRG}
\li \Do 
\If \( \attrib{node}{isRequestNode?} \)
\li \Then \( \proc{DisableRequests}(\id{trace},\id{node},\id{disableCounts},\id{disabledRequestNodes},\id{brush}) \)
\End
\End
\li \Return \id{brush}
\end{codebox}

\begin{codebox}
\Procname{\(\proc{DisableRequests}(\id{trace},\id{node},\id{disableCounts},\id{disabledRequestNodes},\id{brush}\))}
\li \If \id{node} \In \id{disabledRequests}
\li \Then \Return
\End
\li \For \id{esrParent} \In \( \attrib{trace}{\func{getESRParents}}(\attrib{node}{outputNode}) \)
\li \Do
\( \id{disableCounts}[\id{esrParent}] \gets \id{disableCounts}[\id{esrParent}] + 1 \)
\li \If \( \id{disableCounts}[\id{esrParent}] \isequal \attrib{trace}{\func{getNumberOfRequests}}(\id{esrParent}) \)
\li \Then \( \proc{DisableFamily}(\id{trace},\id{esrParent},\id{disableCounts},\id{disabledRequestNodes},\id{brush}) \)
\End
\End
\end{codebox}

\begin{codebox}
\Procname{\(\proc{DisableFamily}(\id{trace},\id{node},\id{disableCounts},\id{disabledRequestNodes},\id{brush}\))}
\li \( \attrib{brush}{\func{insert}}(\id{node}) \)
\li \If \( \attrib{node}{isOutputNode?} \)
\li \Then
 \( \attrib{brush}{\func{insert}}(\attrib{node}{requestNode}) \)
\li 
\( \proc{DisableRequests}(\id{trace},\attrib{node}{requestNode},\id{disableCounts},\id{disabledRequestNodes},\id{brush}) \)
\li \( \proc{DisableFamily}(\id{trace},\attrib{node}{operatorNode},\id{disableCounts},\id{disabledRequestNodes},\id{brush}) \)
\li \For \( \id{operatorNode} \) \In \( \attrib{node}{operatorNodes} \)
\li \Do
\( \proc{DisableFamily}(\id{trace},\id{operatorNode},\id{disableCounts},\id{disabledRequestNodes},\id{brush}) \)
\End
\End
\end{codebox}

The border consists of nodes where resampling stops. This includes absorbing nodes, AAA nodes, and also resampling nodes with no children, e.g. from top-level \verb|PREDICT| directives:

\begin{codebox}
\Procname{\(\proc{FindBorder}(\id{trace},\id{drg},\id{absorbing},\id{aaa})\)}
\li \( \id{border} \gets \func{union}(\id{absorbing},\id{aaa}) \)
\li \For \id{node} \In \id{drg}
\li \Do \If \Not \( \func{hasChildInAorD}(\id{trace},\id{node},\id{drg},\id{absorbing}) \)
\li \Then \( \attrib{border}{\func{insert}}(\id{node}) \)
\End
\End
\li \Return \id{border}
\end{codebox}

For stochastic regeneration to proceed correctly, the nodes in the scaffold must be annotated with counts that allow determination of when a given node is no longer referenced and can therefore be removed.

\begin{codebox}
\Procname{\(\proc{ComputeRegenCounts}(\id{trace},\id{drg},\id{absorbing},\id{aaa},\id{border},\id{brush})\)}
\li \( \id{regenCounts} \gets \func{map}(\const{0}) \)
\li \For \id{node} \In \id{drg}
\li \Do \If \id{node} \In \id{aaa}
\li \Then 
\( \id{regenCounts}[\id{node}] \gets \const{1} \) \Comment{will be added to shortly}
\li \ElseIf \id{node} \In \id{border}
\li \Then  \( \id{regenCounts}[\id{node}] \gets \attrib{\attrib{trace}{\func{getChildren}}(\id{node})}{size} + \const{1} \)
\li \ElseNoIf
\li \( \id{regenCounts}[\id{node}] \gets \attrib{\attrib{trace}{\func{getChildren}}(\id{node})}{size} \)
\End
\End
\li \Comment{Now determine the number of times each reference to an AAA node will be regenerated}
\li \For \id{node} \In \( \func{union}(\id{drg},\id{absorbing}) \)
\li \Do
\For \id{parent} \In \( \attrib{trace}{\func{getParents}}(\id{node}) \)
\li \Do \( \proc{MaybeIncrementAAARegenCount}(\id{parent}) \)
\End
\End
\li \For \id{node} \In \( \id{brush} \)
\li \Do 
\If \( \attrib{node}{isLookupNode?} \)
\li \Then \( \proc{MaybeIncrementAAARegenCount}(\attrib{node}{sourceNode}) \)
\li \ElseIf \( \attrib{node}{isOutputNode?} \)
\li \Then \( \proc{MaybeIncrementAAARegenCount}(\attrib{trace}{\func{getESRParents}}(\id{node})[0]) \)
\End
\End
\li \Return \id{regenCounts}
\end{codebox}

\begin{codebox}
\Procname{\(\proc{MaybeIncrementAAARegenCount}(\id{trace},\id{node},\id{regenCounts})\)}
\li \( \id{value} \gets \attrib{trace}{\func{getValue}}(\id{node}) \)
\li \If \( \attrib{value}{\id{isSPRef?}} \) \AndNot \( \attrib{value}{makerNode} \isequal \id{node} \) \Andd \( \attrib{aaa}{\func{contains?}}(\attrib{value}{makerNode}) \)
\li \Then \( \id{regenCounts}[\attrib{value}{makerNode}] = \id{regenCounts}[\attrib{value}{makerNode}] + 1 \)
\End
\end{codebox}

\subsubsection{Absorbing at Applications}

As we discussed earlier, there are cases in which we do not need to
visit all the absorbing nodes explicitly to account for their
logdensities because the SP can keep track of its outputs and report
the joint logdensity of all of its applications. If this is the case,
we say that the application of the SP is absorbing at applications
(AAA), and our preliminary walk to the find the scaffold can stop upon
reaching an AAA node. Figure~\ref{fig:scaffold_AAA} gives a graphical
illustration. Note that the AAA node itself is in \( \DG \)--not \(
\AG \)--since its value may change. 

Handling AAA introduces many subtle bookkeeping challenges, and the code for
computing the regenCounts for AAA nodes will seem mysterious until inspecting
the special cases for AAA nodes in regen and extract in the next section.

\begin{figure}
(a)
\includegraphics[width=200pt,height=200pt]{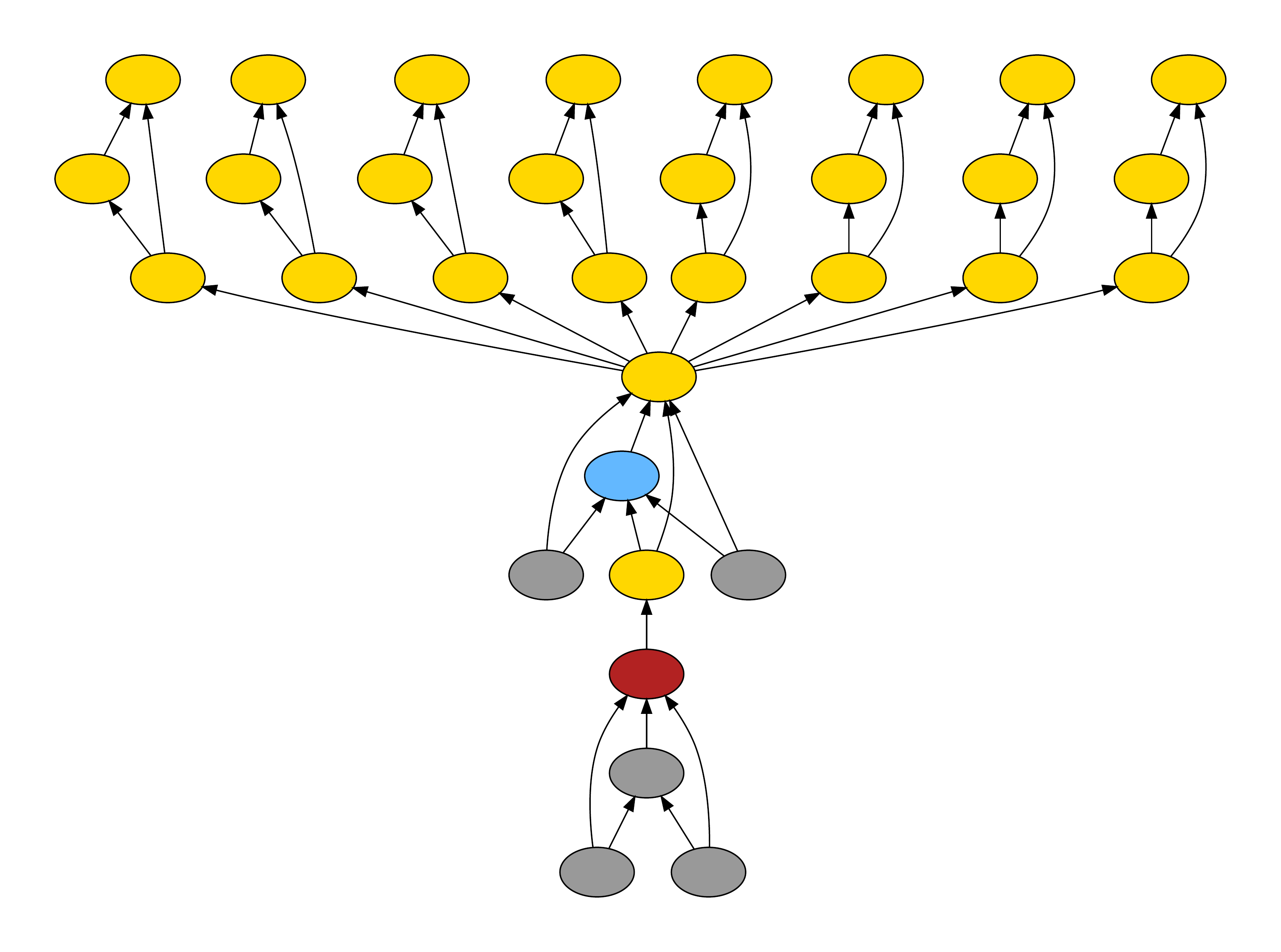}
(b)
\includegraphics[width=200pt,height=200pt]{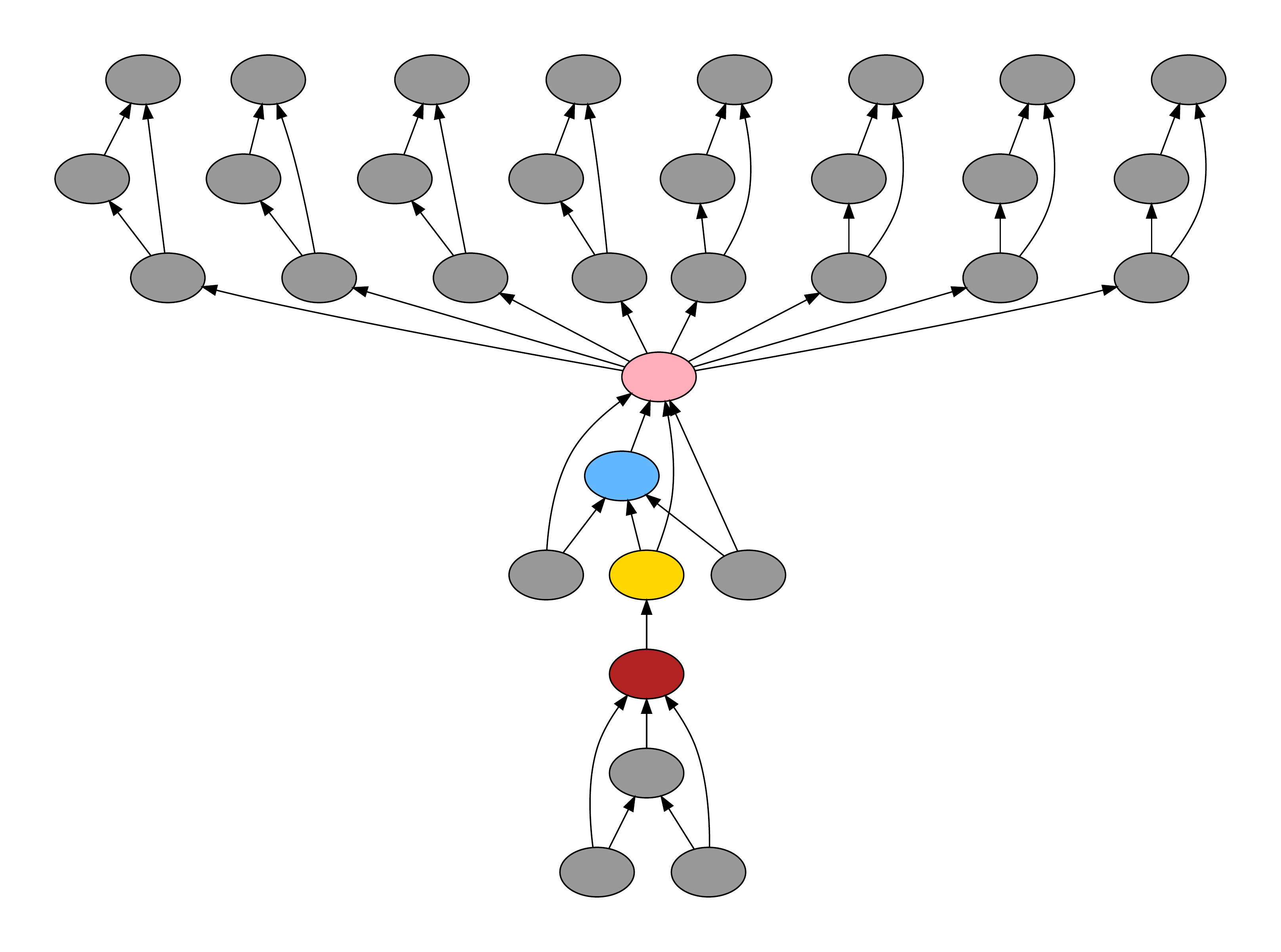}
\caption{{\bf Scaffolds with and without the ``absorbing-at-applications'' optimization.} Repeated applications of stochastic procedures are common in probabilistic programs for machine learning and statistics. (a) shows a PET fragment typical of this repetition. The principal node is in red, resampling nodes are in yellow, and the constructor SP is in blue. Note the size of the scaffold, which grows linearly with the number of applications. (b) shows the same PET fragment but where the ``constructor'' SP (whose resulting SP is repeatedly applied) can absorb at applications. The SP is now pink, and it because it absorbs at applications (implementing the weight calculations needed for inference via information in its auxiliary storage) the scaffold size no longer scales linearly with the number of applications.}
\label{fig:scaffold_AAA}
\end{figure}

\subsection{Breaking down a global inference problem into collections of local inference problems}

If the principal nodes are chosen to be all the random choices in the
current trace, then the border will correspond to the random choices
constrained by the \verb|OBSERVE|s, and the terminal resampling nodes
will be any \verb|ASSUME|s that are not subsequently referred to, as
well as all the \verb|PREDICT|s. The definite regeneration graph will
only contain random choices that are guaranteed to exist in every
execution of the probabilistic program. Conditionally simulating the
entire program can be reduced to conditionally simulating a completion
of a \(\torus\) given the absorbing nodes \(\AG\).

This equivalence is an important feature of Venture's design. Any
inference strategy that works for entire programs can be run on a
program fragment, conditioned on the remainder, and vice
versa. Venture can break down the problem of sampling from the
conditioned distribution over the entire PET, which might be extremely
high dimensional, into collections of overlapping lower-dimensional
inference subproblems. Any Venture strategy applicable to the overall
problem becomes applicable to each subproblem. Current research is exploring the use of Venture programs to construct custom proposals, where each \verb|PREDICT| directive is mapped to a principal node or a resampling node in the border, and each \verb|OBSERVE| statement is matched to an absorbing node.

\subsection{Local kernels and scaffolds}

Later sections describe techniques for building up transition operators that efficiently modify contents of scaffolds, sampling its definite regeneration graph and brush from a distribution that leaves its conditional invariant. These transition operators are built out of stochastic proposals for individual random choices that we call {\em local kernels}. These kernels must be able to sample new values for individual random choices as well as calculate the ratio of forward and reverse transition probabilities necessary for using the sampled value as a proposal for a Metropolis-Hastings scheme. Venture supports local kernels of two types: simulation and
delta kernels.

\subsubsection{Local simulation kernels: resimulation \& bottom-up proposals}

A local simulation kernel for a random choice \( x \) cannot look at previous values of \( x \) when making a proposal.  The simplest simulation kernel is the one that
samples \( x \) according to the \verb|simulate()| provided by its associated
stochastic procedure. We will sometimes refer to this as a ``resimulation''
kernel. 

\begin{enumerate}
\item Let \( \pi_\to \) denote the order in which the local kernels are called
  during sample. Then \( \sample \) returns a new value \( x \) along with the following weight:
 \[ 
\frac{P_{\pi_\to}(x)}{\K_{\pi_\to}(x)}
\]
\item Let \( \pi_\leftarrow \) denote the reverse of the order in which the local kernels
  are called during unsample. Then \( \unsample \) returns the following weight:
\[ 
\frac{\K_{\pi_\leftarrow}(x)}{P_{\pi_\leftarrow}(x)}
\]
\end{enumerate}

For the resimulation kernel, the weights from this local kernel are identically 1. Simulation kernels can also be conditioned on anything in $\torus$ without breaking asymptotic convergence. This is because the values of random choices in the torus are all available at proposal time and also unaffected by the transition. Simulation kernels can thus be used to make bottom-up proposals, using other values in the source trace as input. This provides one mechanism for augmenting the top-down processing that is typical in probabilistic programming with custom bottom-up information.

\todo[color=green]{[big] Decide on scheme for partially filled scaffolds, and relationship to inference language and sequential inference code}

\subsubsection{Local delta kernels and reuse of random choices}

Venture also supports delta kernels. When these kernels transition the
trace from \( x' \to x \), they have access to the previous state \(
x' \) in addition to the contents of the torus. A Gaussian drift
kernel, where $x \sim \mathcal{N}(x', \sigma^2)$, is a typical
example. They satisfy the following contract:

\begin{enumerate}
\item \( \sample \) returns a state $x'$ with weight:
\[ 
\frac{P(x) \K(x \to x')}{P(x') \K(x' \to x)}
\]
\item \( \unsample \) returns 0.
\end{enumerate}

These kernels cannot be applied to stochastic procedures that exhibit exchangeable coupling between applications, nor can they be used in particle methods. Since they receive the source state as an input, they also provide a mechanism for re-using the old value of a random choice if its parent choices have not changed. Delta kernels can thus reduce unnecessary or undesirable resampling.

\todo[color=green]{[big] Schematic of trace, with bottom-up proposals clearly described as correct via formalism; separate section on this?}

\section{Stochastic Regeneration Algorithms for Scaffolds}

We now show how to coherently modify a trace, given a partition and a
valid scaffold, using an algorithm we introduce called {\em stochastic
  regeneration}. Variations on stochastic regeneration, some of which
can be expressed as simple parameterizations, can be used to implement
a wide range of stochastic inference strategies.

The simplest use of stochastic regeneration is to implement a
Metropolis-Hastings transition operator for PETs as follows. First,
the border of the scaffold is ``detached'' from the trace in an
arbitrary order, and the random choices within the trace fragment that
has been detached are stored in an ``extract''. We will sometimes abbreviate this process as $\detach$. Next, the border of the
scaffold is ``regenerated and attached'' in reverse order, yielding a
new trace. We will sometimes abbreviate this process as $\regen$. This new trace can be accepted or rejected. If it is accepted, the algorithm has finished. If it is rejected, it is
detached again, and the original extract is fed to the regeneration
algorithm. This restores the trace to its original state.

Assuming simulation and density evaluation of the constituent PSPs is
constant, the runtime of this process scales with the size of the
scaffold and the brush, not the size of the entire PET. Nodes that
cannot influence or be influenced by the transition are never
visited. PET nodes are regenerated in the opposite order that they
were visited during detach; see Figure~\ref{fig:order-reversal} for a
trace that illustrates the need for this symmetry.

\todo[color=green]{[medium] Signpost figure for the idea of stochastic regeneration, plus introduction of key terminology; based on that, place subtleties figures; introduce idea of the DB (though schematic is puntable for JMLR)}

\begin{figure}
\includegraphics[width=\textwidth]{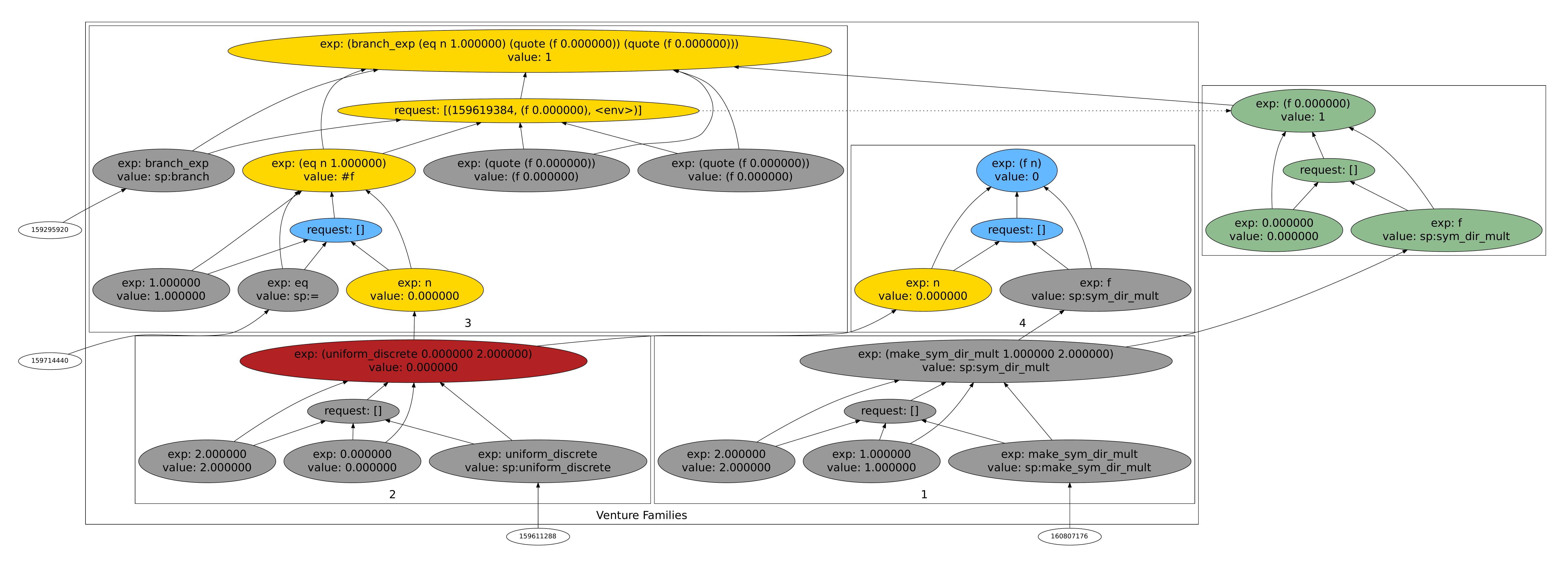}
\caption{{\bf Due to partially exchangeable SPs, $\detach$ and $\regen$ must visit nodes in opposite orders.}
A procedure with partial-exchangeable coupling is applied in both the absorbing border and the brush. The probability of the absorbing nodes depends on the order in which regeneration takes place. As Venture is detaching, it needs to compute the probability of the absorbing nodes that it would have calculated if it had proposed this trace by regenerating along the scaffold. In particular, if it would have visited the absorbing application first during $\regen$, then it must visit the absorbing application last during $\detach$. Venture solves this problem by having $\detach$ always detach all nodes in the opposite order from how $\regen$ would have generated them.}
\label{fig:order-reversal}
\end{figure}

\todo[color=yellow]{[medium] Fix AAA challenge figure by bringing up to date; rewrite caption, coloring, ...}
%\begin{figure}
%(a)
%\includegraphics[width=200pt,height=200pt]{}
%(b)
%\includegraphics[width=200pt,height=200pt]{}
%\caption{{\bf Subtleties due to stochastic procedures that absorb at applications.} 
%(a)
%A scaffold with three border nodes, one of which is absorbing at applications. The child of the node that absorbs at applications is a reference to it, and would normally be in the definite regeneration graph.
%(b) 
%A simulation request may try to regenerate that child before the node that absorbs at applications has
%been regenerated. Regeneration is normally a no-op for nodes that are not in the definite regeneration graph,
%but even though the child is not in the definite regeneration graph, its parent must be regenerated
%nonetheless. Thus whenever we regenerate a PET node that contains a reference to a stochastic procedure constructed by a stochastic procedure whose children absorb at applications, we regenerate the constructor just in case.}

%\end{figure}

\subsection{Detaching along a scaffold with DETACH-AND-EXTRACT}

Detach-and-Extract takes a scaffold and a trace, and converts the trace into a torus and a full omegaDB. At that point, the torus is ready for a new proposal, while the omegaDB then contains all the information needed to restore the original trace. The first step is to disconnect (detach) the trace fragment from the border of the scaffold. The second step is to extract all the random choices from the trace fragment and store them in the omegaDB.

\begin{codebox}
\Procname{\(\proc{DetachAndExtract}(\id{trace},\id{border},\id{scaffold})\)}
\li \( \id{weight} \gets 0 \)
\li \( \id{db} \gets \func{DB}() \)
\li \For \( \id{node} \) \In \( \func{reversed}(\id{border}) \)
\li \Do
\If \( \attrib{\id{scaffold}}{\func{isAbsorbing}}(\id{node}) \)
\li \Then
\( \id{weight} \gets \id{weight} + \proc{Detach}(\id{trace},\id{node},\id{scaffold},\id{db}) \)
\li \ElseNoIf
\li \If \( \attrib{node}{\func{isObservation}} \)
\li \Then \( \id{weight} \gets \id{weight} + \proc{Unconstrain}(\id{trace},\id{node}) \) 
\End
\li \( \id{weight} \gets \id{weight} + \proc{Extract}(\id{trace},\id{node},\id{scaffold},\id{db}) \)
\End
\End
\li \Return \( \id{weight}, \id{db} \)
\end{codebox}

To detach a node from the border, Venture must unincorpoate the value from the PSP that was previously responsible for generating it, adjust the weights, and then extract its parents:

\begin{codebox}
\Procname{\(\proc{Detach}(\id{trace},\id{node},\id{scaffold},\id{db})\)}
\li \( \id{psp},\id{args} \gets \attrib{trace}{\func{getPSP}}(\id{node}), \attrib{trace}{\func{getArgs}}(\id{node}) \)
\li \( \id{groundValue} \gets \attrib{trace}{\func{getGroundValue}}(\id{node}) \)
\li \( \attrib{psp}{\func{unincorporate}}(groundValue,args) \)
\li \( \id{weight} \gets \attrib{psp}{\func{logDensity}}(groundValue,args) \)
\li \( \id{weight} \gets \proc{ExtractParents}(\id{trace},\id{node},\id{scaffold},\id{db}) \)
\li \Return \( \id{weight} \)
\end{codebox}

To extract the parents of a node, Venture loops over the parents in the correct order:

\begin{codebox}
\Procname{\(\proc{ExtractParents}(\id{trace},\id{node},\id{scaffold},\id{db})\)}
\li \( \id{weight} \gets \proc{ExtractESRParents}(\id{trace},\id{node},\id{scaffold},\id{db})\)
\li \For \( \id{parent} \) \In \( \func{reversed}(\attrib{trace}{\func{getDefiniteParents}}(\id{node})) \)
\li \Do
\( \id{weight} \gets \id{weight} + \proc{Extract}(\id{trace},\id{parent},\id{scaffold},\id{db}) \)
\End
\li \Return \id{weight}
\end{codebox}

\begin{codebox}
\Procname{\(\proc{ExtractESRParents}(\id{trace},\id{node},\id{scaffold},\id{db})\)}
\li \( \id{weight} \gets 0 \)
\li \For \( \id{parent} \) \In \( \func{reversed}(\attrib{trace}{\func{getESRParents}}(\id{node})) \)
\li \Do
\( \id{weight} \gets \id{weight} + \proc{Extract}(\id{trace},\id{parent},\id{scaffold},\id{db}) \)
\End
\li \Return \id{weight}
\end{codebox}

The key idea in this process is that when a node is no longer referenced by any others --- that is, its \verb|regenCount| is 0 --- it can be unevaluated. Also, if a node is a request node, before unevaluating it all requests it generates should be unevaluated, which may trigger the trace fragments corresponding to the request to be unevaluated if they are no longer referenced.

\begin{codebox}
\Procname{\(\proc{Extract}(\id{trace},\id{node},\id{scaffold},\id{db})\)}
\li \( \id{weight} \gets 0 \)

\li \( \id{value} \gets \attrib{trace}{\func{getValue}}(\id{node}) \)
\li \If \( \attrib{value}{isSPRef?} \) \AndNot \( \attrib{value}{makerNode} \isequal \id{node} \) \Andd \( \attrib{\id{scaffold}}{\func{isAAA}}(\attrib{value}{makerNode}) \)
\li \Then  \( \id{weight} \gets \id{weight} + \proc{Extract}(\id{trace},\attrib{value}{makerNode},\id{scaffold},\id{db}) \)
\End 

\li \If \( \attrib{\id{scaffold}}{\func{isResampling}}(\id{node}) \)
\li \Then \( \attrib{\id{scaffold}}{\func{decrementRegenCount}}(\id{node}) \)
\li \If \( \attrib{\id{scaffold}}{\func{getRegenCount}}(\id{node}) \isequal 0 \)
\li \Then \If \( \attrib{node}{isLookup?} \)
\li \Then \( \attrib{trace}{\func{clearValue}}(\id{node}) \)
\li \ElseNoIf
\li \If \( \attrib{node}{isRequestNode?} \)
\li \Then  \( \id{weight} \gets \id{weight} + \proc{UnevalRequests}(\id{trace},\id{node},\id{scaffold},\id{db}) \)
 \End
\li \( \id{weight} \gets \id{weight} + \proc{UnapplyPSP}(\id{trace},\id{node},\id{scaffold},\id{db}) \)
 \End \End
\li \Then \( \id{weight} \gets \id{weight} + \proc{ExtractParents}(\id{trace},\id{node},\id{scaffold},\id{db}) \)
\End
\End
\li \Return \( \id{weight} \)
\end{codebox}

\subsection{Regenerating a new trace along a scaffold with REGENERATE-AND-ATTACH}

Regenerate-and-Attach, often abbreviated as regen, takes a scaffold, a torus, and an omegaDB, as
well as a control parameter that indicates whether or not it is
restoring (i.e. replaying random choices from the omegaDB)\footnote{It
  also takes a map from nodes to the gradient of their log densities,
  for use by variational kernels; we will discuss this later in the
  advanced inference section.}. The key idea in regen is that it ensures the PET is constructed via sequence of self-supporting PETs, that is PETs where all parents of a node --- including all choices on which a given random choice directly depends --- have been regenerated before the node itself is.
  
\begin{codebox}
\Procname{$\proc{RegenerateAndAttach}(\id{trace},\id{border},\id{scaffold},\id{restore?},\id{db})$}
\li \( \id{weight} \gets 0 \)
\li \For \( \id{node} \) \In \( \id{border} \)
\li \Do
\If \( \attrib{\id{scaffold}}{\func{isAbsorbing?}}(\id{node}) \)
\li \Then
\( \id{weight} \gets \id{weight} + \proc{Attach}(\id{trace},\id{node},\id{scaffold},\id{restore?},\id{db}) \)
\li \ElseNoIf
\li \( \id{weight} \gets \id{weight} + \proc{Regenerate}(\id{trace},\id{node},\id{scaffold},\id{restore?},\id{db}) \)
\li \If \( \attrib{node}{\func{isObservation?}} \)
\li \Then \( \id{weight} \gets \id{weight} + \proc{Constrain}(\id{trace},\id{node},\attrib{node}{observedValue}) \)
\End
\End
\End
\li \Return \( \id{weight} \)
\end{codebox}

To attach a node from the border, one first regenerates its parents, then ``attaches'' the trace fragment to the current trace, making it responsible for generating the current node in the border. This involves updating the weight and also incorporating the value into the stochastic procedure that is newly responsible for generating it.

\begin{codebox}
\Procname{\(\proc{Attach}(\id{trace},\id{node},\id{scaffold},\id{restore?},\id{db})\)}
\li \( \id{psp},\id{args} \gets \attrib{trace}{\func{getPSP}}(\id{node}), \attrib{trace}{\func{getArgs}}(\id{node}) \)
\li \( \id{groundValue} \gets \attrib{trace}{\func{getGroundValue}}(\id{node}) \)
\li \( \id{weight} \gets \proc{RegenerateParents}(\id{trace},\id{node},\id{scaffold},\id{restore?},\id{db}) \)
\li \( \id{weight} \gets \id{weight} + \attrib{psp}{\func{logDensity}}(groundValue,args) \)
\li \( \attrib{psp}{\func{incorporate}}(groundValue,args) \)
\li \Return \( \id{weight} \)
\end{codebox}

\begin{codebox}
\Procname{\(\proc{RegenerateParents}(\id{trace},\id{node},\id{scaffold},\id{restore?},\id{db})\)}
\li \( \id{weight} \gets 0 \)
\li \For \( \id{parent} \) \In \( \attrib{trace}{\func{getDefiniteParents}}(\id{node}) \)
\li \Do
\( \id{weight} \gets \id{weight} + \proc{Regenerate}(\id{trace},\id{parent},\id{scaffold},\id{restore?},\id{db}) \)
\End
\li \( \id{weight} \gets \id{weight} + \proc{RegenerateESRParents}(\id{trace},\id{node},\id{scaffold},\id{restore?},\id{db})\)
\li \Return \id{weight}
\end{codebox}

\begin{codebox}
\Procname{\(\proc{RegenerateESRParents}(\id{trace},\id{node},\id{scaffold},\id{restore?},\id{db})\)}
\li \( \id{weight} \gets 0 \)
\li \For \( \id{parent} \) \In \( \attrib{trace}{\func{getESRParents}}(\id{node}) \)
\li \Do
\( \id{weight} \gets \id{weight} + \proc{Regenerate}(\id{trace},\id{parent},\id{scaffold},\id{restore?},\id{db}) \)
\End
\li \Return \id{weight}
\end{codebox}

Regenerating a node involves updating its regeneration counts, to ensure that after regeneration, detach can be correctly called on the new trace:

\begin{codebox}
\Procname{\(\proc{Regenerate}(\id{trace},\id{node},\id{scaffold},\id{restore?},\id{db})\)}
\li \( \id{weight} \gets 0 \)
\li \If \( \attrib{\id{scaffold}}{\func{isResampling?}}(\id{node}) \)
\li \Then \If \( \attrib{\id{scaffold}}{\func{getRegenCount}}(\id{node}) \isequal 0 \)
\li \Then \( \id{weight} \gets \id{weight} + \proc{RegenerateParents}(\id{trace},\id{node},\id{scaffold},\id{restore?},\id{db}) \)
\li \If \( \attrib{node}{isLookup?} \)
\li \Then \( \attrib{trace}{\func{setValue}}(\id{node},\attrib{trace}{\func{getValue}}(\id{sourceNode})) \)
\li \ElseNoIf
\li \( \id{weight} \gets \id{weight} + \proc{ApplyPSP}(\id{trace},\id{node},\id{scaffold},\id{restore?},\id{db}) \)
\li \If \( \attrib{node}{isRequestNode?} \)
\li \Then  \( \id{weight} \gets \id{weight} + \proc{EvalRequests}(\id{trace},\id{node},\id{scaffold},\id{restore?},\id{db}) \)
 \End \End \End
\li \( \attrib{\id{scaffold}}{\func{incrementRegenCount}}(\id{node}) \)
\End
\li \( \id{value} \gets \attrib{trace}{\func{getValue}}(\id{node}) \)
\li \If \( \attrib{value}{\id{isSPRef?}} \) \AndNot \( \attrib{value}{makerNode} \isequal \id{node} \) \Andd \( \attrib{\id{scaffold}}{\func{isAAA?}}(\attrib{value}{makerNode}) \)
\li \Then  \( \id{weight} \gets \id{weight} + \proc{Regenerate}(\id{trace},\attrib{value}{makerNode},\id{scaffold},\id{restore?},\id{db}) \)
\End 
\li \Return \( \id{weight} \)
\end{codebox}

\subsection{Building invariant transition operators using stochastic regeneration}

Stochastic regeneration can be used to implement a variety of inference schemes over probabilistic execution traces. In addition to modifying the trace in an undoable fashion as described above, stochastic regeneration provides numerical weights that are needed for inference schemes such as Metropolis-Hastings.

\subsubsection{Weights assuming simulation kernels}

% FIXME: talk about weights, importance sampling, and local Bayesian inference?

Assume that the local kernels used for each stochastic procedure application is a simulation kernel --- that is, the value it proposes is independent of the value of the random choice in \( \rho \). Let \( \pi \) be the (after the fact) \( \regen \) order for \( \xi
\), which is the order in which the local kernels are called during \(
\regen \), and which is also by construction the reverse of the order
in which the local kernels would be called during \( \detach \). Then
\begin{enumerate}
\item \( \regen \) returns
\[ 
\frac
{P_\pi(\RG(\xi),\AG)}
{\K_{\regen}(\RG(\xi))}
\label{eq:regenSimulation}
\]
\item \( \detach \) returns:
\[ 
\frac
{\K_{\regen}(\RG(\xi))}
{P_{\pi}(\RG(\xi),\AG)}
\label{eq:detachSimulation}
\]
\end{enumerate}
where \( \K_\regen \) is the kernel that proposes to \( \RG(\cdot) \) by calling local kernels in the \( \regen \) order, and \( \detach \) computes the reverse probability correctly since each local kernel is unapplied in the reverse of the order in which it would have been applied during \( \regen \). These weights can be used to implement Metropolis-Hastings transitions between \( \rho \) and \( \xi \).

Note that if all local kernels are resimulation kernels, only the ``likelihood ratio'' induced by the absorbing nodes is involved in the Metropolis-Hastings acceptance ratio.

\subsubsection{Weights assuming delta kernels}

Now assume that at least one of the local kernels is a delta kernel --- that is, the value it proposes depends on the value of the random choice being modified from \( \rho \). In this case, \( \regen \) and \( \detach \) do not return useful quantities individually, but their product (\( \detachAndRegen \)) is the same
as in the case of simulation kernels, and is suitable for implementing Metropolis-Hastings transitions:

\begin{align} \label{eq:detachAndRegen}
\frac
{P_{\pi_\xi}(\RG(\xi), \AG) \K_{\regen}(\RG(\xi) \to \RG(\rho))}
{P_{\pi_\rho}(\RG(\rho), \AG) \K_{\regen}(\RG(\rho) \to \RG(\xi))}
\end{align}

\todo[color=green]{[small] Discuss delta kernel trickery and potential extensions to the inference language}

\todo[color=green]{[medium] Add section discussing asymptotic complexity, touching on inference scopes}

\todo[color=yellow]{[medium] Add short section on bottom-up inference, with schematics; point is to defend turf for an inference programming language}

\subsection{Context-independent versus context-specific inference schemes}

The contents of the scaffold and brush do not depend on the values of the principal nodes for a given transition. For example, every random choice whose existence might be affected by a value chosen for a principal node (or a descendant of a principal node that must be resampled) must be included in the brush, and regenerated during a transition, even if the particular value(s) sampled for the principal node were consistent with the old execution trace.  It could be asymptotically more efficient in some circumstances to exploit context-specific independencies, both for the values of random choices and for their existence.

Unfortunately, context-specificity comes at the cost of significant additional complexity. Other Turing-complete probabilistic programming systems --- for example, the original implementation in \citep{Mansinghka:2009}, as well as MIT-Church, and an earlier version of Venture --- were context sensitive. Monte was context-independent but significantly more limited than Venture. Of these, we believe only the earlier version of Venture achieved the correct order of growth for all scale parameters in Latent Dirichlet Allocation, and none supported reprogrammable inference.

It seems likely that scaffold construction and stochastic regeneration can be interleaved to retain the flexibility of our current architecture while achieving the efficiency of an update scheme sensitive to context-specific independencies. We leave this for future work.

\section{General-purpose Inference Strategies via Stochastic Regeneration}

Venture's inference programming language provides a number of primitive inference mechanisms \todo[color=yellow]{[small] Consider exposing SMC idea in the inference language.} \todo[color=green]{Support a cloud of weighted traces; sort out multiripl extensions (two "resample" instructions, one to pick a trace from a weighted cloud, and one to implement e.g. multinomial or stratified resampling)}. The effect of each strategy is the evolution of a PET according to a transition operator that leaves the posterior distribution on PETs invariant, with modifications constrained to lie within the scaffold. Each of these strategies can be applied to any scaffold, including the global scaffold. Each is implemented using stochastic regeneration, with \( \regen \) and \( \detach \) performing mutation of the trace in place and returning weights that can be used for any rejection steps necessary to maintain invariance.

Before describing specific inference strategies, we establish some preliminary results that simplify reasoning about transition operators on PETs.

\subsection{Factorization of the acceptance ratio}

\todo[color=yellow]{[small] Reintroduce products and subsequences for SPs}

Let \( \rho \) be a PET representing an execution and \( \pi \) be an ordering of its nodes. Suppose we visit each node in this order, for each SP application (i.e. random choice) computing the probability density of its output given its input, and then passing that input-output pair to the incorporate method associated with the SP, to handle exchangeable coupling. Define \( P_\pi(\rho) \) to be the total probability density of the sequence of random choices in \( \rho \), i.e. the product of the probability densities calculated above. Recall that every SP satisfies the property
that the probability of any sequence of input-output pairs is invariant under permutation. Thus we have that \( P_\pi(\rho) = P(\rho) \). 

For an arbitrary subset \( Z \) of the nodes in \( \rho \), we do not
necessarily have \( P_{\pi_1}(Z) = P_{\pi_2}(Z) \). The equivalence only holds
over the probability of all of \( \rho \), not over arbitrary fragments of
it. For example, consider the following program:

\vspace{0.05in}\begin{lstlisting}[frame=single,showstringspaces=false]
(assume f (make-ccoin 1 1))
(predict (f))
(predict (f))
\end{lstlisting}

\noindent Now let \( \rho \) be an execution with (say) both predicts \( \true \). Then the
probability of the first predict depends on whether or not it is weighted before
or after the second predict, but the total probability will be the same no
matter which order we proceed in. 

However, the probabilities will be equal if \( Z \) is a prefix in both
orderings, because we can view \( Z \) as the complete trace associated with some modified probabilistic program $\rho'$.

\begin{prop} \label{prop:factorRatio}
Let \( \rho ,\xi  \in \Xi[\torus,\DG] \). Let \( \pi_{\rho} \) be a permutation
on \( \rho \) such that \( \pi_\rho(\PG \cup \IG) < \pi_\rho(\RG(\rho) \cup \AG) \), and likewise for \( \pi_\xi \). Then
\begin{align}
\frac{P(\rho)}{P(\xi)} 
&= 
\frac
{P_{\pi_\rho}(\RG(\rho), \AG)}
{P_{\pi_\xi}(\RG(\xi), \AG)}
\end{align}
\end{prop}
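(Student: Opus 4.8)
The plan is to leverage the global permutation-invariance $P_\pi(\rho) = P(\rho)$ established just above, together with the prefix remark, and to exploit the fact that the ``outside'' nodes $\OG = \PG \cup \IG$ are shared between $\rho$ and $\xi$ with identical values. The whole argument reduces to factoring each total density at the $\OG$-boundary and cancelling a common factor.

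First I would recall the partition $\rho = \OG \cup \RG(\rho) \cup \AG$ and note that, by hypothesis, $\pi_\rho$ places every node of $\OG$ before every node of $\RG(\rho) \cup \AG$. Since $P_\pi$ is the product of the per-node conditional densities taken in the order $\pi$ (with parents never explicitly conditioned on, per the stated convention), and since permutation-invariance gives $P_{\pi_\rho}(\rho) = P(\rho)$, the prefix structure lets me split the product at the boundary between $\OG$ and $\RG(\rho) \cup \AG$:
\[
P(\rho) = P_{\pi_\rho}(\OG)\, P_{\pi_\rho}(\RG(\rho), \AG).
\]
The split is legitimate precisely because $\OG$ comes first: every parent of a node in $\RG(\rho)\cup\AG$ that lies in $\OG$ has already been visited, so the trailing factor is exactly the product over $\RG(\rho)\cup\AG$ in the order $\pi_\rho$. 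The identical factorization holds for $\xi$ with $\pi_\xi$.

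The crux is to show $P_{\pi_\rho}(\OG) = P_{\pi_\xi}(\OG)$ so that these factors cancel, and I would argue this in two moves. Since $\PG$ excludes $\RG \cup \AG$ and $\IG$ is by definition never referenced while regenerating, we have $\OG \subseteq \torus$; and because $\torus(\rho) = \torus(\xi) = \torus$, the set $\OG$ and the values stored at its nodes are identical in $\rho$ and $\xi$. Then, because $\OG$ is a prefix in both $\pi_\rho$ and $\pi_\xi$, the prefix remark applies directly: a prefix can be regarded as the complete trace of a modified program, so its total density is independent of the internal ordering, giving $P_{\pi_\rho}(\OG) = P(\OG) = P_{\pi_\xi}(\OG)$.

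Combining the two factorizations, $P(\rho)/P(\xi) = \bigl[P(\OG)\,P_{\pi_\rho}(\RG(\rho),\AG)\bigr] / \bigl[P(\OG)\,P_{\pi_\xi}(\RG(\xi),\AG)\bigr]$, and the shared factor $P(\OG)$ cancels to yield the claim. I expect the main obstacle to be the careful justification of the middle step: confirming that the $\OG$-prefix carries identical nodes and values across $\rho$ and $\xi$, and that it is self-contained enough to be treated as a standalone trace so the prefix remark genuinely applies. This is exactly where the torus-invariance $\torus(\rho)=\torus(\xi)$ and the defining identity $\OG = \PG(\rho)\cup\IG(\rho) = \PG(\xi)\cup\IG(\xi)$ do the real work.
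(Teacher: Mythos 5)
Your proof is correct and follows essentially the same route as the paper's: both factor $\rho$ and $\xi$ at the boundary $\OG = \PG \cup \IG$ versus $\RG(\cdot) \cup \AG$, observe that $\OG$ is a prefix under both orderings and is shared (with identical values, via $\torus(\rho)=\torus(\xi)$) so that $P_{\pi_\rho}(\OG) = P_{\pi_\xi}(\OG)$, and cancel the common factor. Your write-up simply makes explicit the justifications (permutation-invariance, the prefix remark, $\OG \subseteq \torus$) that the paper's two-line proof leaves implicit.
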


\begin{proof}
We can factor $\rho$ and $\xi$ into disjoint subsets:
\begin{align}
\rho &=  (\PG \cup \IG) \dot{\cup} (\RG(\rho) \cup \AG) \\
\xi &=  (\PG \cup \IG) \dot{\cup} (\RG(\xi) \cup \AG)
\end{align}
We have \( P_{\pi_\rho}(\PG \cup \IG) = P_{\pi_\xi}(\PG \cup \IG) \) since the
set is a prefix in both orderings. The result follows immediately.
\end{proof}

\todo[color=yellow]{[small] ensure the factorization result is referred to later in the text, by number}

\todo[color=green]{[big] Add a careful treatment of kernel composition w a review of standard stuff (or [medium] for a sloppier one) as a subsection}

\subsection{Auxiliary variables for state-dependent stochastic selection of kernels}

We would like to be able to use Metropolis-Hastings to construct valid transition operators that leave a conditional distribution $P()$ on PETs invariant. In some cases, we also would like these transition operators to be individually ergodic. For any random choice guaranteed to be in all executions --- that is, the SP application is in the DRG of the global scaffold --- this is straightforward. Standard cycle or mixture hybrid kernels (see e.g. \citep{bonawitz2008composable, andrieu2003introduction, Mansinghka:2009}) can be used to sequence Metropolis-Hastings proposals on individual variables. Alternately, if we are only interested in inference strategies that choose random choices at uniformly at random, we can make a compound Metropolis-Hastings proposal that first selects a random variable to change (using it to seed a scaffold) and then makes a within that scaffold. Because the choice of random choices involves only a simple state dependence, and it is (nearly) trivial to integrate out the number of paths from some trace \( \rho \) to another trace \( \xi \) under this sequence of choices, it is straightforward to accommodate it in a single Metropolis-Hastings step.

Venture supports a broader range of transition operators: Metropolis-Hastings proposals where the proposal kernel is randomly chosen from a state-dependent distribution. Let \( I \) be a (possibly infinite) set, \( f \) a stochastic function from traces to \( I \) with induced density $P_f()$. For every \( i \in I \), let \( Q_i \) be a proposal kernel on PETs, and let $\rho$ be the current PET. Then the following cycle
kernel on the extended state space \( \Omega \times I \) preserves \( \pi \): 

\begin{enumerate}
\item Sample \( i | \rho  \sim f(\rho) \).

\item 
\begin{enumerate}
\item Propose \( \xi \sim Q_i(\rho \to \xi) \).

\item Accept if
 \[ U \sim \mathbf{Uniform}(0,1) < 
\frac{P_f(f(\xi) = i)}{P_f(f(\rho) = i)} 
\cdot
\frac
{P(\xi) Q_i(\xi \to \rho)}
{P(\rho) Q_i(\rho \to \xi)}
\]
\end{enumerate}
\end{enumerate}

This scheme treats the choice of kernel as a transient auxiliary variable \( I \) that is created solely for the duration of the ordinary Metropolis-Hastings proposal that depends on it. Invariance is ensured by accounting for the effect of \( Q_i \) on \( P_f(f(\xi) = i) \). This construction can be used to straightforwardly justify sophisticated custom proposal strategies such as Algorithm 8 from \citep{Neal:1998wz} without need for any approximation.

Stochastic index selection makes it straightforward to compositionally analyze richer transition operators on PETs. For example, one can select scaffolds by choosing a single principal node uniformly at random, as in typical Church implementations. Alternately, one could estimate the entropy of the conditional distributions for each random choice given its containing scaffold, and choose scaffolds with probability proportional to this estimate. One could also select principal nodes by starting with some query variable of interest --- e.g. a random choice representing an important prediction or action variable --- and walk the PET, favoring random choices that are ``close'' to the query variable. 

We anticipate that the evolution of the Venture inference programming language will depend on the expansion and refinement of a library of techniques for converting proposals into invariant transition operators.

\subsection{Metropolis-Hastings via Stochastic Regeneration}

Here we describe the basic Metropolis-Hastings algorithm for incremental
inference over a PET. With all of the tools we have developed so far, a generic version of this transition operator is straightforward to specify:

\begin{enumerate}
\item Sample a set of principal nodes \( \pns  \sim f(\rho) \), and record the probability \( P(f(\rho) = \pns) \) of sampling it.

\item Construct \( \DG \) with \( \pns \) as the set of principal nodes.

\item Set \( \alpha = \detachAndRegen(\DG) \)

\item Calculate \( P(f(\xi) = \pns) \).

\item Accept if
\[ U < 
\frac{P(f(\xi) = \pns)}{P(f(\rho) = \pns)}
\cdot
\alpha \]
\end{enumerate}

Correctness follows immediately from \ref{prop:factorRatio} and \eqref{eq:detachAndRegen}, and applies to a broad class of customized kernels. Because stochastic regeneration does not visit portions of the PET that are conditionally independent of the random choices affected by the transition, it is more scalable than approaches based on transformational compilation. In particular, for typical machine learning problems on datasets of size $N$, the number of random choices scales linearly with $N$ but the connections in the PET are sparse. Thus an inference sweep of $N$ single-site Metropolis-Hastings transitions --- enough to consider each random choice roughly once --- requires $O(N)$ time, as opposed to $O(N^2)$ time for transformational compilers.

\todo[color=green]{[small] CLRS pseudocode for MH}

\todo[color=green]{[medium] Clearly explain and illustrate idea of approximating the conditional distribution on a scaffold}

\subsection{Approximating optimal proposals via Stochastic Variational Inference}

We can use the same auxiliary variable technique to lift any local
proposal into one that preserves global stationarity. Here we show how
to learn a local variational approximation to the posterior
distribution on \( \RG(\xi) \) given \( \AG \) using \( \regen \) and
\( \detach \), and then wrap it in Metropolis-Hastings as discussed
above.  

\subsubsection{Posing the optimization problem}

\todo[color=green]{[big] Add variational inference pedagogy}

\todo[color=yellow]{[small] Discuss q() as an arbitrary program}

Let \( \rho \) be a trace and \( \DG \) a definite regeneration graph inducing
\( \torus \). Let \( \mathcal{Q} \) be some family of
distributions on \( \Xi[\torus,\DG] \) where every \( Q_\theta \in
\mathcal{Q}\) is defined by its parameters \( \theta \in \Theta \), which
control the local kernels along \( \DG \), and where each \( Q_\theta \) reverts
to the resimulation kernel on the brush. Our goal is to I-Project \( P(\RG) \) onto the
space \( \mathcal{Q} \), which involves solving the following optimization
problem:

\begin{equation*}
\begin{aligned}
&\text{minimize}
& & f(\theta) = \mathbb{E}_{\xi \sim Q_\theta} \left[ \log \frac{
    Q_\theta(\RG(\xi)) }{P(\RG(\xi) , \AG)}
\right] \\
& \text{subject to}
& & \theta \in \Theta \\
\end{aligned}
\end{equation*}

\subsubsection{Stochastic gradient descent}

One generic approach is stochastic gradient descent. Under assumptions of
differentiability that may not hold in arbitrary programs, the gradient of the
objective function has the following nice form (as originally shown in
\citep{wingateweber2013} for entire traces):

\todo[color=green]{[small] Check alignment of SGD math before submission}
\begin{align}
\nabla_{\theta} f(\theta)
&=
\nabla_{\theta} \mathbb{E}_{\xi \sim Q_\theta} \left[ \log \frac{
    Q_\theta(\RG(\xi)) }{P(\RG(\xi), \AG)}
\right] \\
&=
\nabla_{\theta} \int_{\xi \in \Xi} Q_{\theta}(\RG(\xi))  \log \frac{
  Q_\theta(\RG(\xi)) }{P(\RG(\xi) , \AG)}
 d \xi \\
&=
\int_{\xi \in \Xi} \nabla_{\theta} \left\{Q_{\theta}(\RG(\xi))  \log \frac{
    Q_\theta(\RG(\xi)) }{P(\RG(\xi) ,\AG)}
 \right\} d \xi \\
&=
\int_{\xi \in \Xi} \nabla_{\theta} Q_{\theta}(\RG(\xi)) \log \frac{
  Q_\theta(\RG(\xi)) }{P(\RG(\xi) ,\AG)}
  d \xi
+
\int_{\xi \in \Xi} Q_{\theta}(\RG(\xi)) \left( \nabla_{\theta} \log \frac{
    Q_\theta(\RG(\xi)) }{P(\RG(\xi) ,\AG)}
 \right) d \xi \\
\end{align}
\begin{align}
&=
\int_{\xi \in \Xi} \nabla_{\theta} Q_{\theta}(\RG(\xi)) \log \frac{
  Q_\theta(\RG(\xi)) }{P(\RG(\xi) ,\AG)}
  d \xi \\
&=
\int_{\xi \in \Xi} Q_{\theta}(\RG(\xi)) \nabla_{\theta} \log Q_{\theta}(\RG(\xi)) \log \frac{ Q_\theta(\RG(\xi)) }{P(\RG(\xi),\AG)}
  d \RG(\xi) \\
&=
\mathbb{E}_{\xi \sim Q_{\theta}} \left[\nabla_{\theta} \log
  Q_{\theta}(\RG(\xi)) \log \frac{ Q_\theta(\RG(\xi)) }{P(\RG(\xi),\AG)} \right]
\end{align}
where we used that
\begin{align}
\int_{\xi \in \Xi} Q_{\theta}(\RG(\xi)) \left( \nabla_{\theta} \log \frac{
    Q_\theta(\RG(\xi)) }{P(\RG(\xi) ,\AG)}
 \right) d \xi 
&=
\int_{\xi \in \Xi} Q_{\theta}(\RG(\xi)) \left( \nabla_{\theta} \log Q_\theta(\RG(\xi))
 \right) d \xi \\
&=
\int_{\xi \in \Xi} \nabla_{\theta} Q_\theta(\RG(\xi))
d \xi \\
&=
\nabla_{\theta} \int_{\xi \in \Xi} Q_\theta(\RG(\xi))
d \xi \\
&= 0
\end{align}
Thus for a given setting of \( \theta^{(t)} \), we can approximate the gradient
\( \nabla_{\theta} f(\theta) |_{\theta^{(t)}} \) using Monte Carlo, by forward-sampling
  trajectories from \( Q_{\theta^{(t)}} \). This is the scheme
  proposed in \citep{wingateweber2013}: 
\begin{enumerate}
\item
Sample a trajectory from \( Q_{\theta^{(t)}} \).
\item
Calculate the noisy gradient estimate \( \delta^{(t)} \).
\item
Take a noisy gradient descent step \( \theta^{(t+1)} := \theta^{(t)} -
\alpha^{(t)} \delta^{(t)} \).
\end{enumerate}

\subsubsection{As a proposal for Metropolis-Hastings}

We can solve this optimization problem as above to construct a proposal distribution \(
Q_{\theta^\star} \), and then sample \( \xi \) from it and accept or reject it
as a single M-H proposal. Note that while solving the optimization problem, we
did not inspect any nodes that were in \( \rho \) but that might not be in \(
\xi \); therefore we can use the same \( Q_{\theta^\star} \) to compute the
probability of the reverse transition.
\todo[color=yellow]{[small] Carefully clarify contribution beyond wingateweber and blei above and below (generality; actual algorithm)}

\subsubsection{Using regen and detach}

The stochastic regeneration recursions can be easily extended to carry additional metadata that enables them to evaluate local gradients for each random choice, and store this information as a local kernel that modifies additional state capturing variational parameters. Once this is done, it turns out that both the stochastic gradient optimization of the variational approximation and the use of it as a proposal can be expressed using stochastic regeneration:

\begin{enumerate}
\item Construct \( \DG \).

\item Detach \( \rho \).

\item For every application node in \( \DG \) whose operator cannot change and that can compute
  gradients, initialize variational parameters to the current arguments.

\item Loop until satisfied with the quality of the variational approximation:

\begin{enumerate}
\item Attach variational kernels to \( \DG \) to define \( Q_{\theta^{(t)}} \).

\item Call \textbf{regen} to sample \( \xi \sim Q_{\theta^{(t)}} \), and to compute 
\[\log \left( \frac{P(\xi,\AG)}{Q_{\theta^{(t)}}(\RG(\xi))} \right) \]
as well as the local gradients \( \{ \nabla_\theta \log Q_{\theta^{(t)}}(\RG(\xi))
\} \).

\item Take a gradient step.

\item Detach.
\end{enumerate}

\item \textbf{restore} \( \rho \) to compute \( \alpha_\rho := \frac{P(\RG(\rho),\AG)}{Q_{\theta^\star}(\RG(\rho))} \).

\item \textbf{detach} 
\item \textbf{regen} to sample \( \xi \) and compute \( \alpha_\xi := \frac{P(\RG(\xi),\AG)}{Q_{\theta^\star}(\RG(\xi))} \).

\item Accept if 
 \[ U \sim \mathbf{Uniform}(0,1) < 
\frac{P(f(\xi) = i)}{P(f(\rho) = i)} 
\cdot
\frac
{\alpha_\xi}{\alpha_\rho}
\]

\end{enumerate}

%\begin{mdframed}
%\lstinputlisting[language=Python, firstline=1, lastline=22]{pseudocode/gkernels/meanfield.py}
%\end{mdframed}

\todo[color=yellow]{[small] Add CLRS for variational}
\todo[color=yellow]{[small] Elaborate a bit and cite Nando de Freitas and Wingate and Weber
  here and Dave blei}
\todo[color=green]{[medium] Add variational CLRS}
\todo[color=yellow]{[small] Brief note on stochastic approximations and austerity/etc}
\todo[color=green]{[medium] Write reasonable section on embedding stochastic approximations based on TEX comment}

% \subsubsection{Doubly stochastic variational inference}

% [TODO I don't see any issue with doubly stochastic M-H, as long as the program
% is sufficiently noisy so we don't end up with a probability 0 trace. You may
% want to make doubly-stochastic its own section after all. This paragraph is
% rougher even than most others.]

% There are many ways to approximate this stochastic gradient descent. For one, we
% can subsample sinks in the inner loop to compute a stochastic approxmation to
% the gradient, so that we take much faster steps. If we are willing to perform
% global variational inference instead of wrapping local variational inference in
% Metropolis-Hastings, we can get even closer to the spirit of [Hoffman,Blei] by
% subsampling definite regeneration graphs by only traversing a subsample of the
% outgoing edges of each resampling node. \( \regen \) and \( \detach \) would yield stochastic estimates
% of the gradient much more quickly. The downside is that they would longer be
% able to compute the probability ratio of the two traces, but this is not
% relevant for variational. The same technique can be used in M-H more generally,
% where accept/reject is based on a T-test as in [austerity].

\section{Particle-based Inference: Enumerative Gibbs and Particle Markov chain Monte Carlo}

Many inference strategies involve weighing multiple alternative states against one another as a way of approximating optimal Gibbs-type transition operators over portions of the hypothesis space. Gibbs sampling on a discrete variable in a Bayesian network can be implemented by enumerating all possible values for the variable, substituting each into the network, evaluating the joint probability of each network configuration, then renormalizing and sampling. Normalized importance sampling for approximating Gibbs involves proposing multiple values for a variable from its prior, weighting by its likelihood, and then normalizing over and sampling from the sampled set. Each of these techniques can be applied iteratively to larger subsets of variables. For example, discrete Gibbs can be iterated over a sequence of variables to generate a proposal from a single-particle particle filter, or extended via enumeration (and perhaps dynamic programming) to larger subsets of random variables. Normalized importance sampling can be also iterated to yield sequential importance sampling with resampling, and embedded within a Markov chain to yield particle Markov Chain Monte Carlo techniques. 

It can be useful to think of these inference strategies as instantiations of the idea of weighted speculative execution technique within inference programming. An ensemble of executions is represented, evolved and stochastically weighted and selected via the absorbing nodes, although ultimately only one of these executions will be used.

We refer to each alternative state as a particle, and implement Gibbs sampling, particle Markov chain Monte Carlo, and other techniques using general machinery for particle-based inference methods. This allows us to develop common techniques for handling the dependent random choices in the brush --- both in the proposal and in the analysis --- that apply to all these inference strategies.

\subsection{In-place mutation versus simultaneous particles in memory}

Particle methods also provide new opportunities for time-space tradeoffs in inference. Although the semantics of particle methods involve multiple alternative traces, it may not always be desirable, or even possible, to represent alternate traces in memory simultaneously. Consider scaffolds whose definite regeneration graph or brush invokes a complex external simulator. If we cannot modify the code of this simulator, or it relies on an expensive external compute resource that cannot be multiplexed, it will be necessary to implement particle methods via mutation of a single PET in place. Otherwise, it is likely to be more efficient to represent all particles in memory simultaneously, sharing a common PET and scaffold as their source, and cloning the auxiliary states of stochastic procedures that are referred to from multiple particles. This choice also enables the parallel evolution of collections of particles via multiple independent threads, synchronizing whenever resampling occurs. 

Venture supports both in-place mutation and simultaneous representation of particles. This is necessary to recover asymptotic scaling that is competitive with custom sequential Monte Carlo techniques. In this paper, we focus on the transition operators associated with particle methods, and give pseudocode for implementing them in terms of a generic API for simultaneous particles. The data structures needed to implement this API efficiently, and the algorithm analysis for efficient implementations, involve novel applications of techniques from functional programming and persistent data structures and are beyond the scope of this paper.

\todo[color=green]{[medium] Section for scaffold + ordering control}

\subsection{The Mix-MH operator for constructing particle-based transition operators}

Although the auxiliary variable technique introduced previously is sufficient for lifting a local proposal into a global Metropolis-Hastings proposal, it will be helpful to introduce a variant of the technique that is closed under application --- i.e. the kernel that results from applying the technique is a valid proposal kernel that can be parameterized, randomly selected, and used as a proposal in a subsequent application. This will give us significant additional flexibility in explaining and justifying our particle-based inference schemes.

\subsubsection{Metropolis-Hastings}

Suppose we have some distribution \( \pi \) of interest on \( \Omega \).

Let \( \K \) be a proposal kernel on \( \Omega \). We can create a new kernel \( \MH(\K,\pi) \) based on it that satisfies detailed balance with respect to \( \pi \) by the Metropolis-Hastings rule. Instead of representing it as a black-box kernel, we will keep around its constituent elements, so we have
\begin{align}
\MH(\K,\pi) &= (\K, \lambda)
\end{align}
where 
\[ \lambda(\rho \to \xi) = \pi(\xi) \K(\xi \to \rho) \]
and where applying \( (\K, \lambda) \) involves:

\begin{enumerate}
\item Sample \( \xi \) from \( \K(\rho,\xi) \).

\item Let $U \sim [0,1]$. Accept the new trace $\xi$ iff
\[ U < \frac{\lambda(\rho \to \xi)}{\lambda(\xi \to \rho)} \]
\end{enumerate}

It is straightforward to show that this transition operator preserves detailed balance with respect to $\pi$, and therefore leaves $\pi$ invariant. Ergodic convergence can be proved under fairly mild assumptions \citep{andrieu2003introduction}.

\todo[color=green]{[small] Write out full detailed balance proof for MH, with careful treatment of ergodicity, as a reference to non-ML readers; include discussion of non-ergodicity of boosted acceptance rates.}

\subsubsection{State-dependent Mixtures of Kernels}

We would like to be able to construct compound kernels that are themselves stochastically chosen, and certify detailed balance of the compound kernel given easily verified conditions on the component kernel(s) and the stochastic kernel selection rule.

% FIXME regen and detach for variational kernels via MixMH

Let $P()$ be a density of interest, such as the conditioned density on resampling nodes given the absorbing nodes in a scaffold. Let \( I \) be a (possibly infinite) set, and \( f(\rho) = i \) be a stochastic function from traces to \( I \), with $P_f(i)$ the density of $i$ induced by $f$. We will use $f(\rho)$ to select a kernel $\Q_i = (\K_i, \alpha_i)$ at random given the trace $\rho$ (and also evaluate $P_f(f(\xi) = i)$ for reversing the transition). Let $\K_i(\rho \to \xi)$ be a transition operator on traces that can be simulated from, and let $\alpha_i(\rho \to \xi)$ be a procedure that evaluates the factors out of which the Metropolis-Hastings acceptance ratio is constructed:
$$
\alpha_i(\rho \to \xi) = P(\xi) \K_i(\xi \to \rho)
$$
Consider the following kernel, parameterized by scalars $\lambda_i(\rho \to \xi)$ to be determined:
\begin{enumerate}
\item Sample $i \sim f(\rho)$. This will be used to choose a kernel $\K_i$.

\item Sample $\xi \sim \K_i(\rho \to \xi)$.

\item Let $U \sim [0,1]$. Accept $\xi$ iff
\[ U < \frac{\lambda_i(\rho \to \xi)}{\lambda_i(\xi \to \rho)} \]
\end{enumerate}

Our goal is to define $\lambda_i(\rho \to \xi)$ such that this kernel preserves detailed balance, analogously to the Metropolis-Hastings kernels described above. We assume that the stochastic function $f$ satisfies the following symmetry condition:
\[ P_f(f(\rho) = i) \K_i(\rho \to \xi) > 0 \iff P_f(f(\xi) = i)  \K_i(\xi \to \rho) > 0 \]
This expresses the constraint that if you can reach $\xi$ from $\rho$ by selecting $\Q_i$ according to $f(\rho)$ then applying it, then with nonzero probability, reaching $\rho$ from $\xi$ is possible by selecting the same $\Q_i$ according to $f(\xi)$ then applying it.
Let \( I(\rho) = \{ i \in I : P_f(f(\rho) = i) > 0 \} \) be the set of indices (kernels) reachable from $\rho$, and \( I(\rho,\xi) = I(\rho) \cap I(\xi) \) be the set of indices (kernels) reachable from both $\rho$ and $\xi$. To establish detailed balance for this kernel, it suffices to show that
\begin{align}
P(\rho) \sum_{i \in I(\rho)} P_f(f(\rho) = i) \K_i(\rho \to \xi) \lambda_i(\rho \to \xi)
& = 
P(\xi) \sum_{i \in I(\xi)} P_f(f(\xi) = i) \K_i(\xi \to \rho) \lambda_i(\xi \to \rho)
\end{align}
Due to our symmetry condition, $i \in I(\rho)$ but $i \notin I(\xi)$ implies that $\K_i(\rho \to \xi) = 0$. Therefore detailed balance is satisfied if and only if the following holds:
\begin{align}
P(\rho) \sum_{i \in I(\rho,\xi)} P_f(f(\rho) = i) \K_i(\rho \to \xi) 
 \lambda_i(\rho \to \xi)
& = 
P(\xi) \sum_{i \in I(\rho,\xi)} P_f(f(\xi) = i) \K_i(\xi \to \rho) 
\label{eqn:mixmhdb}
 \lambda_i(\xi \to \rho)
\end{align}
By hypothesis, we have that each \( \Q_i = (\K_i, \alpha_i) \) satisfies detailed balance with respect to $P$:
\begin{align}
P(\rho)  \K_i(\rho \to \xi) \alpha_i(\rho \to
\xi) 
& = 
P(\xi) \K_i(\xi \to \rho) \alpha_i(\xi \to
\rho) 
\qquad \text{for all \( i \)}
\label{eqn:mixmhdb2}
\end{align}
Let $\lambda_i(\rho \to \xi) = \alpha_i(\rho \to \xi) P_f(f(\xi) = i)$. If we multiply both sides of (\ref{eqn:mixmhdb2}) by $P_f(f(\rho) = i)P_f(f(\xi) = i)$ then sum both sides over $i \in I(\rho, \xi)$, we establish (\ref{eqn:mixmhdb}).

%\begin{align}
%P(\rho) \sum_{i \in I(\rho,\xi)} \K_i(\rho \to \xi) \alpha_i(\rho \to
%\xi) 
%& = 
%P(\xi) \sum_{i \in I(\rho,\xi)} \K_i(\xi \to \rho) \alpha_i(\xi \to
%\rho) 
%\end{align}
%Now Substituting this into (\ref{eqn:mixmhdb}), we see that FIXME.

\subsubsection{The MixMH operator}

%If we treat each \( \Q_i \) as a black-box kernel, we could still acheive detailed balance by having two separate accept/reject steps in sequence. However the probability of accepting would be lower than if we combine these two steps into one. 

We define the $\Mixmh$ operator as follows. $\Mixmh$ takes as input stochastic index sampler $f$ and a set of base kernels $\{K_i, \alpha_i\}_{i \in I}$ parameterized by the index $i$ sampled from the index sampler:
$$
\Mixmh(f , \left \{ (\K_i, \alpha_i) \right \}_{i \in I})$$
Now let  
$\lambda_i(\rho \to \xi) = \alpha_i(\rho \to \xi) P(f(\xi) =i) $. To apply $\Mixmh(f , \left \{ (\K_i, \alpha_i) \right \}_{i \in I})$ we:

\begin{enumerate}
\item Sample \( i \) from \( f(\rho) \).

\item Apply $(\K_i, \lambda_i)$.
\end{enumerate}

% FIXME rewrite, revised for clarity and simplicity

\subsection{Particle-based Kernels}

\todo[color=red]{[medium] Incorporate Cameron feedback on particle section and argument structure}

Our goal here is to define and sketch correctness arguments for particle-based transition operators. We first describe particle sets obtained from repeated applications of a single kernel. Our formulation incorporates a boosted acceptance ratio that is designed to recover Metropolis-Hastings in the special case where there is only one new particle generated.\footnote{This boosting comes at the cost of subtle ergodicity violations if this is the only transition operator used and the hypothesis space contains the right symmetries. Simpler particle-based analyses --- where all particles, including the current trace, are resampled from --- are possible and indeed straightforward. These yield Boltzmann-style acceptance rules that self-transition more frequently and therefore explore the space less efficiently, but avoid ergodicity issues.}

\subsubsection{Generating particles by repeatedly applying a seed kernel}

% FIXME rewrite, revised for clarity and simplicity; first use no acceptance-rate boosting, then re-integrate it, at cost of ergodicity in edge cases; redo combinatorics carefully and pedagogically, clarifying w_i vs w_{-i}, when indices vs orders vs sets are important, etc.

\todo[color=yellow]{[small] Recall definitions from scaffold section here}
Suppose we have a \( (\torus, \DG) \) pair, and we attach local simulation
kernels to the nodes in \( \DG \) to define a distribution \( \K \) on \(
\Xi[\torus,\DG] \). By \eqref{eq:regenSimulation}, \( \regenAndAttach \) will return
\[ w_\xi = \frac{P( \RG(\xi) , \AG)}{\K(\xi)} \]
Let \( \PS = \{ (\xi_i,w_i) : i = 1 , \dotsc, n \} \) be a multiset of weighted
particles in \( \Xi \). We can define the kernel \( \K_\PS(\cdot \to \cdot) \)
on \( \PS \) as follows:
\[ \K_\PS(\xi_i \to \xi_j) = \frac{\nds(\xi_j,i) w_j}{w_{- i}} \]
where \(\nds(\xi_j, i) \) is the number of duplicates of \( \xi_j \) in \( \PS
\setminus \xi_i \).The \( \alpha \)-factor for \( \K_\PS \) in isolation
is:

\begin{align}
\frac
{\alpha(\xi_i \to \xi_j)}
{\alpha(\xi_j \to \xi_i)}
&=
\frac
{P(\xi_j) \K_\PS(\xi_j \to \xi_i)}
{P(\xi_i) \K_\PS(\xi_i \to \xi_j)} 
\\
&=
\frac
{P(\RG(\xi_j),\AG) \frac{ \nds(\xi_i,j) w_i}{w_{- j}}}
{P(\RG(\xi_i),\AG) \frac{ \nds(\xi_j,i) w_j}{w_{- i}}}
\end{align}

We can sample the kernel \( \PS \) starting from \( \xi_i
\) by generating \( n - 1 \) additional particles from \( \K_\regen \). The
probability of generating \( \PS \) is the probability of sampling the \( n -1 \) other
particles, multiplied by the number of distinct orders in which they could have
been sampled:
\begin{align}
P(\PS | \xi_i)
&=
\frac
{(n - 1)!}
{\prod_{\gamma \in \unique(\PS \setminus \xi_i)} \nds(\gamma,i)!}
\prod_{k \neq i} \K_\regen(\xi_k)
\end{align}

\begin{claim}
\[ \frac{P(\PS | \xi_j)}{P(\PS | \xi_i)} = 
\frac{\nds(\xi_j,i)}{\nds(\xi_i,j)} 
\frac
{\prod_{k \neq j} \K_\regen(\xi_k)}
{\prod_{k \neq i} \K_\regen(\xi_k)}
\]
\end{claim}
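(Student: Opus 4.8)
The plan is to read the ratio off directly from the closed form for $P(\PS \mid \xi_i)$ stated just above the claim, and then reduce the whole thing to an elementary statement about multiplicities in the multiset $\PS$. Writing that formula for both $P(\PS \mid \xi_j)$ and $P(\PS \mid \xi_i)$ and dividing, the factors $(n-1)!$ cancel at once, and the two products over regeneration kernels combine to give exactly the factor $\prod_{k \neq j} \K_\regen(\xi_k) / \prod_{k \neq i} \K_\regen(\xi_k)$ that appears in the claim. So the entire content to be proved is that the ratio of the combinatorial normalizers satisfies
\[
\frac{\prod_{\gamma \in \unique(\PS \setminus \xi_i)} \nds(\gamma,i)!}{\prod_{\gamma \in \unique(\PS \setminus \xi_j)} \nds(\gamma,j)!} = \frac{\nds(\xi_j,i)}{\nds(\xi_i,j)}.
\]

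To establish this, I would introduce for each distinct value $\gamma$ occurring in $\PS$ its multiplicity $m(\gamma)$ in the full multiset, and set $M = \prod_{\gamma} m(\gamma)!$, a quantity independent of which particle we condition on. Removing the single particle $\xi_i$ lowers the multiplicity of its value by one and leaves all others unchanged, so $\nds(\gamma,i) = m(\gamma)$ for $\gamma \neq \xi_i$ while $\nds(\xi_i,i) = m(\xi_i) - 1$. Hence
\[
\prod_{\gamma \in \unique(\PS \setminus \xi_i)} \nds(\gamma,i)! = \frac{M}{m(\xi_i)!}\,(m(\xi_i)-1)! = \frac{M}{m(\xi_i)},
\]
where the identity remains valid when $m(\xi_i) = 1$: the value $\xi_i$ then drops out of $\unique(\PS \setminus \xi_i)$, but $0! = 1$ keeps the bookkeeping consistent. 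The same computation with $j$ in place of $i$ yields $M/m(\xi_j)$, so the ratio of combinatorial normalizers collapses to $m(\xi_j)/m(\xi_i)$.

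It then remains to check that $m(\xi_j)/m(\xi_i) = \nds(\xi_j,i)/\nds(\xi_i,j)$, which I would handle by a short case split on whether $\xi_i$ and $\xi_j$ denote the same value. If they are distinct values, removing a copy of $\xi_i$ does not affect the count of $\xi_j$, so $\nds(\xi_j,i) = m(\xi_j)$ and symmetrically $\nds(\xi_i,j) = m(\xi_i)$, giving the ratio directly. If instead $\xi_i$ and $\xi_j$ are duplicates of a common value $v$, then $\nds(\xi_j,i) = \nds(\xi_i,j) = m(v) - 1$ while $m(\xi_i) = m(\xi_j) = m(v)$, so both ratios equal $1$. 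In either case the two sides agree, which completes the argument.

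The calculation itself is routine; the only place demanding care is the interplay between treating $\xi_i$ and $\xi_j$ as indexed particles and as values, which is precisely where the $m(\xi_i) = 1$ degeneracy and the duplicate case enter. I expect that keeping the distinction between ``the particle at index $i$'' and ``the value of that particle'' explicit throughout is the main thing needed to avoid an off-by-one error in the multiplicities, rather than any genuine conceptual difficulty.
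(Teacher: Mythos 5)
Your proof is correct and takes essentially the same route as the paper's: both read the ratio off the displayed formula for $P(\PS \mid \xi_i)$, cancel the $(n-1)!$ factors, and reduce the ratio of factorial normalizers to $\nds(\xi_j,i)/\nds(\xi_i,j)$. The only difference is organizational --- where the paper cancels the common $\gamma$-terms and reduces the four remaining factorials in a single chain of equalities, you collapse each normalizer to $M/m(\xi_i)$ (resp.\ $M/m(\xi_j)$) and case-split on whether $\xi_i$ and $\xi_j$ share a value, which makes explicit the duplicate-value and multiplicity-one degeneracies that the paper's computation handles silently.
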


\begin{proof}
\begin{align}
\frac
{P(\PS | \xi_j)}
{P(\PS | \xi_i)}
&=
\frac
{
\frac
{(n - 1)!}
{\prod_{\gamma \in \unique(\PS \setminus \xi_j)} \nds(\gamma,j)!}
\prod_{k \neq j} \K_\regen(\xi_k)
}
{
\frac
{(n - 1)!}
{\prod_{\gamma \in \unique(\PS \setminus \xi_i)} \nds(\gamma,i)!}
\prod_{k \neq i} \K_\regen(\xi_k)
}
\\
&=
\frac
{
\prod_{\gamma \in \unique(\PS \setminus \xi_i)} \nds(\gamma,i)!
\prod_{k \neq j} \K_\regen(\xi_k)
}
{
\prod_{\gamma \in \unique(\PS \setminus \xi_j)} \nds(\gamma,j)!
\prod_{k \neq i} \K_\regen(\xi_k)
}
\\
&=
\frac
{
\nds(\xi_j,i)! \nds(\xi_i,i)!
\prod_{k \neq j} \K_\regen(\xi_k)
}
{
\nds(\xi_i,j)! \nds(\xi_j,j)!
\prod_{k \neq i} \K_\regen(\xi_k)
}
\\
&=
\frac{\nds(\xi_j,i)}{\nds(\xi_i,j)} 
\frac
{\prod_{k \neq j} \K_\regen(\xi_k)}
{\prod_{k \neq i} \K_\regen(\xi_k)}
\end{align}
\end{proof}

Thus once we apply \( \Mixmh \), the \( \alpha \)-factor becomes:

\begin{align}
\frac
{\alpha(\xi_i \to \xi_j)}
{\alpha(\xi_j \to \xi_i)}
&=
\frac
{P(\RG(\xi_j),\AG) \frac{ \nds(\xi_i,j) w_i}{w_{- j}} P(\PS | \xi_j)}
{P(\RG(\xi_i),\AG) \frac{ \nds(\xi_j,i) w_j}{w_{- i}} P(\PS | \xi_i)}
\\
&=
\frac
{P(\RG(\xi_j),\AG) \frac{ \nds(\xi_i,j) w_i}{w_{- j}} \nds(\xi_j, i) \prod_{k \neq j} \K(\xi_k)}
{P(\RG(\xi_i),\AG) \frac{ \nds(\xi_j,i) w_j}{w_{- i}} \nds(\xi_i,j) \prod_{k \neq i} \K(\xi_k)}
\\
&=
\frac
{P(\RG(\xi_j),\AG) \frac{w_i}{w_{- j}} \prod_{k \neq j} \K(\xi_k)}
{P(\RG(\xi_i),\AG) \frac{w_j}{w_{- i}} \prod_{k \neq i} \K(\xi_k)}
\\
&=
\frac
{P(\RG(\xi_j),\AG) \frac{\left( \frac{P( \RG(\xi_i) , \AG)}{\K(\xi_i)} \right) }{w_{- j}} \prod_{k \neq j} \K(\xi_k)}
{P(\RG(\xi_i),\AG) \frac{\left( \frac{P( \RG(\xi_j) , \AG)}{\K(\xi_j)} \right) }{w_{- i}} \prod_{k \neq i} \K(\xi_k)}
\\
&=
\frac
{w_{-i}}
{w_{-j}}
\end{align}

\subsubsection{The MH$_n$ operator}

This pattern, where a multiset of weighted particles is generated from a ``seed'' kernel $\K$ and then sampled from, is a sufficiently common operation that we give it a name:
\[ \MHn(\K) = \Mixmh(\PS \sim \K, \K_\PS) \]
We will use variations on it throughout this section. The multiset functions as the index for $\Mixmh$, and the sampling step where a particle is chosen functions as the base proposal kernel.

\subsubsection{Metropolis-Hastings as special case of particle methods}

Metropolis-Hastings with a simulation kernel is actually just the special case \( \MH_2 \), since in this case \( w_{-\rho} = w_\xi \).  Parallelizable extensions of this kind of ``locally independent'' Metropolis-Hastings scheme are also natural. For example, one could make multiple proposals and weigh them against against one another. This could be viewed as an importance sampling approximation to an optimal Gibbs proposal over a scaffold, where the proposal can be any simulation kernel, including one that conditions on downstream information. That said, the restriction to simulation kernels is significant. Gaussian drift kernels, for example, are not permitted.

\subsubsection{Enumerative Gibbs as a special case: using different kernels for each particle}

There are many ways to generate weighted particle sets \( \PS \). For instance,
we may want to allow a different kernel \( \K_i \) for each particle \( \xi_i \). We can
represent enumeration over some set of variables this way: each distinct tuple in the Cartesian product of the domains of the variables will correspond to a different kernel.

Suppose there are some variables in \( \DG \) that we want to enumerate over,
yielding \( n \) total combinations. Assume that \( \rho \) has the first
combination. We can sample \( \PS \) from \( \rho \) as
follows: call \( \regen \) \( n - 1 \) times as before to generate \( n - 1 \)
new particles, but passing \( \K_i \) for \( \xi_i \), where \( \K_i \) is the
same as \( \K \) except with deterministic kernels replacing the old kernels on
the nodes that we are enumerating.

The analysis is even simpler in this case because the particles are distinct. Once we apply \( \Mixmh \), the \( \alpha \)-factor becomes:

\begin{align}
\frac
{\alpha(\xi_i \to \xi_j)}
{\alpha(\xi_j \to \xi_i)}
&=
\frac
{P(\xi_j) \K_\PS(\xi_j \to \xi_i)  \prod_{k \neq j} \K_k(\xi_k)}
{P(\xi_i) \K_\PS(\xi_i \to \xi_j) \prod_{k \neq i} \K_k(\xi_k)}
\\
&=
\frac
{P(\RG(\xi_j), \AG) \frac{w_i}{w_{- j}}  \prod_{k \neq j} \K_k(\xi_k)}
{P(\RG(\xi_i), \AG) \frac{w_j}{w_{- i}}  \prod_{k \neq i} \K_k(\xi_k)} \\
&=
\frac
{P(\RG(\xi_j), \AG)  \frac{\left( \frac{P( \RG(\xi_i) , \AG)}{\K(\xi_i)} \right)}{w_{- j}} \prod_{k \neq j} \K_k(\xi_k)}
{P(\RG(\xi_j), \AG) \frac{\left( \frac{P( \RG(\xi_j) , \AG)}{\K(\xi_j)} \right)}{w_{- i}} \prod_{k \neq i} \K_k(\xi_k)} \\
&=
\frac{w_{-i}}{w_{-j}}
\end{align}

as above.

\subsubsection{Particle Markov chain Monte Carlo: adding iteration and resampling}

It is often useful to iterate particle-based methods, and to resample collections of particles based on their weights, stochastically allocating particles to regions of the execution space that appear locally promising. This insight is at the heart of particle filtering, i.e. sequential importance sampling with resampling \citep{Doucet01}, as well as more sophisticated sequential Monte Carlo techniques. For example, \citep{andrieu2010particle} introduces particle Markov chain Monte Carlo methods, a family of techniques for using sequential Monte Carlo to make sensible proposals over subsets of variables as part of larger MCMC schemes.

\todo[color=red]{[small] Clarify notation here and fix exposition}
We can also use conditional SMC to generate \( \PS \). As above, suppose we have
\( \rho \) and \( \DG \), with local simulation kernels attached to the nodes in
\( \DG \), which determines a simulation kernel \( \K \) that generates samples
by propagating along \( \DG \) via \( \regen \). Suppose we group the sinks into
\( T \) groups, which in turn partitions all of \( \DG \) into \( T \)
groups \( \DG_1, \dotsc, \DG_T \) according to the \( \regen \) recursion. We
will refer to \( \K \) on \( \RG_t \) as \( \K_t \).

To establish notation, first let us consider a simpler case: basic sequential Monte Carlo, instead of conditional SMC. We
can propose along \( \RG_1 \) from \( \K_1 \), and then (inductively) 
propagate to time \( t \) by sampling independently from a proposal kernel,
\[ M_t(a^{(m)}_t,\RG_t(\xi_t^{(m)})) = \frac{w_{t-1}^{(m)}}{w_{t-1}}
\K_t(\RG_t(\xi_t^{(m)})) \]
where 
\[ w_{t-1}^{(m)} = \frac{P(\RG_{t-1}(\xi_{t-1}^{(m)}),
  \AG_{t-1})}{\K_{t-1}(\xi_{t-1}^m)} \]
is the return value from \( \regen \) on the \( t -1 \)st group of sinks, and where \( \RG_t(\xi_t^{(m)}) \) is assumed to be conditioned on \( \xi_t^{(m)} \)'s parent particle \( \xi_{t-1}^{(a_t^{(m)})} \).

It is instructive to view this sampling procedure as a way of generating a
single sample of all auxiliary variables \( \Gamma \) from the density
\begin{align}
\psi(\Gamma)
&=
\prod_{m=1}^n \K_1(\RG_1(\xi_1^{(m)})) \prod_{t=2}^T \prod_{m=1}^n M_t(a_t^{(m)}, \xi_t^{(m)})
\\
&= 
\prod_{m=1}^n \K_1(\RG_1(\xi_1^{(m)})) \prod_{t=2}^T \prod_{m=1}^n \frac{w_{t-1}^{(m)}}{w_{t-1}} \K_t(\RG_t(\xi_t^{(m)}))
\\
&= 
\prod_{m=1}^n \K_1(\RG_1(\xi_1^{(m)})) \prod_{t=2}^T \prod_{m=1}^n \frac{P(\RG_{t-1}(\xi_{t-1}^{(m)}),\AG_{t-1}) \K_t(\RG_t(\xi_t^{(m)}))}{\K_{t-1}(\RG_{t-1}(\xi_{t-1}^{(m)})) w_{t-1}}
\\
&=
\left( \prod_{t=2}^T \prod_{m=1}^n \frac{1}{w_{t-1}} \right)
\prod_{m=1}^n \K_1(\RG_1(\xi_1^{(m)})) \prod_{t=2}^T \prod_{m=1}^n
\frac{P(\RG_{t-1}(\xi_{t-1}^{(m)}),\AG_{t-1})
  \K_t(\RG_t(\xi_t^{(m)}))}{\K_{t-1}(\RG_{t-1}(\xi_{t-1}^{(m)}))}
\end{align}

Now suppose we perform the conditional SMC sweep to sample \(
\Gamma \). First we sample an index \( b_t \in \{ 1, \dotsc, n \} \) for each time step \( t =
1, \dotsc, T-1 \) for the index of the forced resampling of the source particle \(
\xi_i \). Then we sample the other particles as in standard CSMC. 

The probability of \( \Gamma \) starting from \( \xi_i \) is precisely:

\begin{align}
P(\Gamma ; \xi_i)
&=
\left( \prod_{t=2}^T w_{t-1} \right)
\frac
{\psi(\Gamma)}
{n^{T-1} \K_1(\RG_1(\xi_1^{(b_1)})) \prod_{t=2}^T w_{t-1}^{(b_t)} \K_t(\RG_t(\xi_t^{(b_t)}))}
\\
&=
\left( \prod_{t=2}^T w_{t-1} \right)
\frac
{w_T^{(i)} \psi(\Gamma) }
{n^{T-1} P(\RG(\xi^{(i)}),\AG)}
\end{align}
 
However, we want to index our kernel with multiset semantics for the complete
particles as we did above, and so we index by the equivalence class \(
\overline{\Gamma} \) where \( \Gamma \equiv \Gamma' \) if they agree on
everything up until the last time step, and then agree on the same multiset of
complete particles. We have

\[ P(\overline{\Gamma} ; \xi_i) = P(\Gamma ; \xi_i ) \frac
{(n - 1)!}
{\prod_{\gamma \in \unique(\Gamma^{(T)} \setminus \xi_i)} \nds(\gamma,i)!}
 \]

and thus when we apply \( \Mixmh \), the \( \alpha \)-factor becomes\footnote{When it is clear we are referring to the final weight, we will refer to \( w_T^{(i)} \) as \( w_i \) to be consistent with the previous sections.}

\begin{align}
\frac
{\alpha(\xi_i \to \xi_j)}
{\alpha(\xi_j \to \xi_i)}
&=
\frac
{P(\xi_j) \K_{\overline{\Gamma}}(\xi_j \to \xi_i) P( \overline{\Gamma} | \xi_j)}
{P(\xi_i) \K_{\overline{\Gamma}}(\xi_i \to \xi_j) P( \overline{\Gamma} | \xi_i)}
\\
&=
\frac
{P(\xi_j) \K_{\overline{\Gamma}}(\xi_j \to \xi_i) \nds(\xi_j,i) P( \Gamma | \xi_j)}
{P(\xi_i) \K_{\overline{\Gamma}}(\xi_i \to \xi_j) \nds(\xi_i,j) P( \Gamma | \xi_i)}
\\
&=
\frac
{P(\xi_j) \K_{\overline{\Gamma}}(\xi_j \to \xi_i) \nds(\xi_j,i) P(\RG(\xi^{(i)}),\AG) w_j}
{P(\xi_i) \K_{\overline{\Gamma}}(\xi_i \to \xi_j) \nds(\xi_i,j) P(\RG(\xi^{(j)}),\AG) w_i}
\\
&=
\frac
{\K_{\overline{\Gamma}}(\xi_j \to \xi_i) \nds(\xi_j,i) w_j}
{\K_{\overline{\Gamma}}(\xi_i \to \xi_j) \nds(\xi_i,j) w_i}
\\
&=
\frac
{\frac{ \nds(\xi_i,j) w_i}{w_{- j}} \nds(\xi_j,i) w_j}
{\frac{ \nds(\xi_j,i) w_j}{w_{- i}} \nds(\xi_i,j) w_i}
\\
&=
\frac
{w_{-i}}
{w_{-j}}
\end{align}

We can think of this as a strict generalization of \( \MHn \) where the kernel is
split into a sequence of kernels. 

This scheme is called {\tt PGibbs} in the Venture inference programming language, as it enables the approximation of blocked Gibbs sampling over arbitrary scaffolds. Cycles and mixtures of {\tt PGibbs} with other kernels recovers a wide range of existing particle Markov chain Monte Carlo schemes as well as novel algorithms.

\subsubsection{Pseudocode for {\tt PGibbs} with simultaneous particles}

Here we give pseudocode for implementing the {\tt PGibbs} transition operator using an interface for simultaneous particles. The particle interface permits particles to be constructed from source PETs or from one another, sharing the maximum amount of state possible. Stochastic regeneration is used to prepare the source trace $\rho$ for conditional SMC, to populate the array of particles, and to calculate weights.

\begin{codebox}
\Procname{\(\proc{PGibbs}(\id{trace}, \id{border}, \id{scaffold}, \id{P})\)}
\li \( \id{T} \gets \attrib{border}{length} \)
\li \( \id{rhoWeights} \gets [T] \)
\li \( \id{rhoDBs} \gets [T] \)
\li \For \( \id{t} \gets T \) \To \( \id{1} \)
\li \Do
      \( \id{rhoWeights}[\id{t}], \id{rhoDBs}[\id{t}]
      \gets \proc{DetachAndExtract}(\id{trace},\id{border}[\id{t}],\id{scaffold}) \)
    \End
\li \( \id{particles} \gets [P] \)
\li \( \id{particleWeights} \gets [P] \)
\li \For \( \id {p} \gets 0 \) \To \( \id{P} \)
\li \Do
      \( \id{particles}[\id{p}] \gets \proc{Particle}(\id{trace}) \)
    \End
\li \( \id{particleWeights}[0] \gets \)
    \Indentmore
\zi   \( \proc{RegenerateAndAttach}(\id{particles}[0],\id{border}[1],\id{scaffold},\const{true},\id{rhoDBs}[1]) \)
    \End
\li \For \( \id{p} \gets 1 \) \To \( \id{P} \)
\li \Do
      \( \id{particleWeights}[\id{p}] \gets \)
      \Indentmore
\zi     \( \proc{RegenerateAndAttach}(\id{particles}[p],\id{border}[1],\id{scaffold},\const{false},\const{nil}) \)
      \End
    \End
\li \( \id{newParticles} \gets [P] \)
\li \( \id{newParticleWeights} \gets [P] \)
\li \For \( \id{t} \gets 2 \) \To \( \id{T} \)
\li \Do
      \( \id{newParticles}[0] \gets \proc{Particle}(\id{particles}[0]) \)
\li   \( \id{newParticleWeights}[0] \gets \)
      \Indentmore
\zi     \( \proc{RegenerateAndAttach}(\id{newParticles}[0],\id{border}[\id{t}],\id{scaffold},\const{true},\id{rhoDBs}[\id{t}]) \)
      \End
\li   \For \( \id{p} \gets 1 \) \To \( \id{P} \)
\li   \Do
        \( \id{parentIndex} \gets \proc{SampleCategorical}(\proc{MapExp}(\id{particleWeights})) \)
\li     \( \id{newParticles}[\id{p}] \gets \proc{Particle}(\id{particles}[\id{parentIndex}]) \)
\li     \( \id{newParticleWeights}[\id{p}] \gets \)
        \Indentmore
\zi       \( \proc{RegenerateAndAttach}(\id{newParticles}[p],\id{border}[\id{t}],\id{scaffold},\const{false},\const{nil}) \)
        \End
      \End
\li \( \id{particles} \gets \id{newParticles} \)
\li \( \id{particleWeights} \gets \id{newParticleWeights} \)
    \End
\li \( \id{finalIndex} \gets \proc{SampleCategorical}(\proc{MapExp}(\id{particleWeights}[1:\id{P}])) \)
\li \( \id{weightMinusXi} \gets \proc{LogSumExp}(\attrib{particleWeights}{\proc{Remove}}(\id{finalIndex})) \)
\li \( \id{weightMinusRho} \gets \proc{LogSumExp}(\attrib{particleWeights}{\proc{Remove}}(0)) \)
\li \( \id{alpha} \gets \id{weightMinusRho} - \id{weightMinusXi}\)
\li \Return \( \id{particles}[\id{finalIndex}], \id{alpha}\)
\end{codebox}

This implementation illustrates the use of versions of $\regen$ and $\detach$ that act on particles, and the initialization of particles using parent particles as well as source traces. All but the last four lines are simply constructing the randomly chosen kernel to be applied, corresponding to proposing to replace the current race with the contents of a particle.

% Okay this is super subtle -- we need to condition on the full sequence, in
% order, with one degree-of-freedom for which ancestor track is preserved, but
% then we want \emph{set} semantics for the final set.

\section{Conditional Independence and Parallelizing Transitions}

Here we briefly describe the conditional independence relationships that are easy to extract from probabilistic execution traces. This analysis clarifies the potential long-range dependencies in a PET introduced by the choice to make probabilistic closures into first-class objects in the language. These independence relationships could be used to support probabilistic program analyses and also to justify the correctness of parallelized kernel composition operators in the inference programming language. They also expose parallelism that is distinct from particle-level parallelism. 

\subsection{Markov Blankets, Envelopes, and Conditional Independence}

In Bayesian networks, the Markov Blanket of a set of nodes provides a useful
characterization of important conditional
independencies. These independencies permit parallel simulation of Markov chain
transition operators. In constrast, while the scaffold provides a useful
factorization of the logdensity of a trace, it does not permit parallel
simulation of transition operators. In this section, we show how to formulate a
notion of locality that permits parallel transitions on PETs.

We first review Markov Blankets in Bayesian networks. Let \( \DG \) and $\DG'$ be sets of
nodes in a Bayesian network, \( \AG \) the children of $\DG$, 
and \( \PG \) the parents of \( \DG \cup \AG \) excluding \( \DG \) and \( \AG
\). Then if \( \DG' \cap (\DG \cup \AG \cup \PG) = \emptyset \), proposals can
be made on the two regeneration graphs in parallel, although both proposals may
\emph{read} the same values. We refer to \( \AG \cup \PG \) as the \emph{Markov
  blanket} of \( \DG \), and it can be thought of as the set of all nodes that
one must read from (but not write to) when resampling \( \DG \) and computing
the new probabilities of its children \( \AG \). 

The situation becomes more complicated in the case of PETs for two main
reasons. First, the parents \( \PG = \PG(\rho) \cup \PG(\xi) \) are not known a
priori, since we do not know \( \PG(\xi) \) until we have simulated \( \xi \) to
see what nodes it reads from. Second, the SP auxiliary states that will be mutated are not 
known a priori, since the PSPs being applied in \( \RG(\xi) \) are not known.

\subsection{After-the-fact envelopes}

Let \( \SG(\rho) \) denote the SP auxiliary states for SPs with PSPs that are
applied in \( \RG(\rho) \). Define

\begin{align}
\MB(\rho)
&= \PG(\rho) \cup \AG
\end{align}
to be the set of all nodes that are read from but not written to, as before for Bayesian networks. Now define the scope of a proposal
\begin{align}
\Scope(\rho)
&= \RG(\rho) \cup \SG(\rho)
\end{align}
to be the set of all parts of the PET that are written to or created while generating \( \rho \) from
\( \torus \), and define the envelope 
\begin{align}
\Envelope(\rho) &= \MB(\rho) \cup \Scope(\rho)
\end{align}
to be the set of all nodes that must be read from or written to during the proposal. Note that
in general there may be some nodes in the scope that are written to but never
read from, e.g. terminal resampling nodes and the sufficient statistics for
uncollapsed SPs, but we will ignore this distinction for now.

We can make the following claim: suppose the proposal \( \rho \to
\xi \) temporally overlaps with the proposal \( \rho' \to \xi' \), and that
by some luck, 
\begin{align}
\left( \Envelope(\rho,\xi) \cap \Scope(\rho',\xi') \right) \cup \left( \Envelope(\rho',\xi')
  \cap \Scope(\rho,\xi) \right) &= \emptyset
\end{align}
then the transitions will not have clashed. In words, this says that the two
proposals will not clash as long as no node that one proposal reads differs
in the two traces of the other proposal. If the transitions have not clashed, then they could have been simulated simultaneously.

\subsection{The envelope of a scaffold}

Define:
\begin{align}
 \Envelope[\DG] &= \bigcup_{\xi \in \Xi[\DG]} \Envelope(\xi) \\
\Scope[\DG] &= \bigcup_{\xi \in \Xi[\DG]} \Scope(\xi)
\end{align}

That is, $\Envelope[\DG]$ and $\Scope[\DG]$ correspond to the union of the trace-specific envelope and scope, taken over all possible completions of $\torus$.  Then two proposals on \( \DG \) and \( \DG' \) are guaranteed not to clash if
\begin{align}
 ( \Envelope[\DG] \cap \Scope[\DG'] ) \cup (\Envelope[\DG'] \cap \Scope[\DG]) &=
  \emptyset  \label{eq:envelopeCriterion}
\end{align}

Now let $\RG$ and $\RG'$ be the random variables corresponding to the results of invocations of $\regen$ (including the weights) on scaffolds with definite regeneration graphs $D$ and $D'$. One approach to formalizing the relationship between conditional independence and parallel simulation in Venture would be to first try to show that
(\ref{eq:envelopeCriterion}) holds if and only if \( \RG \perp \RG' \), and
second to try to show that random variables can safely be simulated in parallel if and only if
they are conditionally independent. 

When probabilistic programmers can identify two scaffolds for which (\ref{eq:envelopeCriterion}) holds, they can schedule transitions on them simultaneously. Also, in some circumstances speculative simultaneous simulation may be beneficial, for example if the clash is on small, cheap procedures or an approximate transition that ignores some dependency can be used as a proposal for a serial transition.

It may also be possible to use this notion of envelope to predict beforehand which transitions we can run simultaneously. To do this, we will need to bound the Markov blanket and the scope. One option would be to statically analyze Venture program fragments, implementing a kind of ``escape analysis'' for random choices. Another would be to introduce language constructs and/or stochastic procedures that came with hints about Markov blanket composition. Future work will explore the efficacy of these and other techniques in practice.

\section{Related Work}

Venture builds on and contains complementary insights to a number of other probabilistic programming languages, implementations, and inference techniques. For example, Monte \citep{montenips} was the first commercially developed prototype interpreter for a Church-like probabilistic programming language. Its architecture is the most direct antecedent of Venture. For example, it included a scaffold-like decomposition of local transitions in terms of ``proposal blankets'' for single-variable changes. Preliminary prototypes of Venture \citep{wumeng} and of an approximate multi-core scheme for Church-like languages \citep{perovmulticore} also introduced early, ad-hoc versions of some of the ideas in this paper, all in an attempt to control the asymptotic cost of each inference transition.

The most salient inefficiency in many Church systems has been that for
typical machine learning problems with $N$ datapoints, a single sweep
of inference over all random variables requires runtime that scales as
$O(N^2)$. This quadratic scaling, combined with the absolute constant
factors, has made it impossible to apply these systems beyond hundreds
of datapoints. For ``lightweight'' implementations based on direct
transformational compilation or augmented interpretation, such as
Bher \citep{wingate2011lightweight}, this is a basic consequence of the approach: the entire program
must be re-simulated to make a change to a single latent variable.

A variety of approaches have been tried to mitigate this problem. For earlier implementations such as the first prototype implementation of Church \citep{Mansinghka:2009} atop the Blaise system \citep{bonawitz2008composable}, the MIT-Church implementation, and also Monte, ad-hoc attempts were made to control the scope of re-simulation, approximating the Venture notions of PETs and their partition into scaffolds. Similarly, the Shred implementation of Church \citep{shredchurch} uses techniques from program analysis coupled with just-in-time compilation to try to avoid the re-simulation involved in lightweight implementations of Metropolis-Hastings and also reduce constant-factor overhead.

These approaches exhibit different limitations with respect to
generality, scalability, absolute achievable efficiency and runtime
predictability. For example, the Blaise implementation of Church and
MIT-Church both exhibited quadratic scaling in some cases, and
involved substantial runtime overheads. The program analysis and
compilation techniques used in Shred incur runtime costs that can be
significant in absolute terms and difficult to predict a priori, and
additionally impose constraints on the integration of custom
primitives into the language. Finally, all of these approaches have
only applied directly to single-site Metropolis-Hastings approaches to
inference.

Outside of Church-like languages, BLOG \citep{milch20071} --- the first open-universe probabilistic programming language --- is of particular relevance. For example, the original BLOG inference engine was based on infinite contingent Bayesian networks (ICBN) \citep{milch2005approximate}. ICBNs characterize the valid factorizations of their probability distribution and represent dependencies whose existence is contingent on the values of other variables. They thus provide a data structure for BLOG that serves an analogous function to PETs in Venture.
Also, the Gibbs sampling algorithm introduced for BLOG \citep{arora2012gibbs} exploits a decomposition of a possible world that is related to the scaffolds used to define the scope of inference in Venture. However, BLOG is not higher-order --- it does not support random variables that are themselves probabilistic procedures --- nor does it provide a programmable inference mechanism. BLOG also does not support collapsed primitives that exhibit exchangeable coupling between applications, primitives that lack calculable probability densities, or primitives that can create and destroy latent variables while providing their own external inference mechanism.

The Figaro system for probabilistic programming \citep{pfeffer2009figaro} shares several goals with Venture, although the differences in language semantics and inference architecture are substantial. Venture is a stand-alone virtual machine, while Figaro is based on an embedded design, implemented as a Scala library. Figaro models are represented as data objects in
Scala, and model construction and inference both proceed via calling into a Scala API. This enables Figaro to leverage the mature toolchain for Scala and the JVM, and avoids the need to reimplement difficult but standard programming language features from scratch. It remains to be seen how to apply this machinery to models that depend on higher-order probabilistic procedures, primitives exhibiting exchangeable coupling, and likelihood-free primitives, or how to enable users to extend Figaro with custom model fragments equipped with arbitrary inference schemes. IBAL \citep{Pfeffer01ibal:a}, an ancestor of Figaro based on an embedding of stochastic choice into the functional language ML, is arguably more similar to Venture in terms of its interface. However, the inference strategies around which IBAL was designed imply restrictions on stochastic choices that are analogous to Infer.NET.

Our hybrid inference primitives embody generalizations of other proposals for global inference in probabilistic programs. The idea of global mean-field inference in probabilistic programs via stochastic optimization, implemented using repeated re-simulation of the program, was first proposed in \citep{wingateweber2013}. Similarly, a global implementation of particle Gibbs via conditional sequential Monte Carlo, implemented via repeated forward re-simulation, has been independently and concurrently developed in \citep{wjwmpgibbs2014}. Both of these approaches were implemented using the ``lightweight'' scheme proposed in \citep{wingate2011lightweight}. They thus lack dependency tracking and will exhibit unfavorable asymptotic scaling. The mean field and conditional SMC schemes developed in this paper are implemented via stochastic regeneration, building on PETs for efficiency and supporting the full SPI. The techniques from this paper can also be used on subsets of the random choices in a probabilistic program and composed with other inference techniques, yielding hybrid inference strategies that may be asymptotically more efficient than homogeneous ones.

To the best of our knowledge, Venture is the first probabilistic programming platform to support a compositional inference programming language with multiple computationally universal primitives. 
It is also the only probabilistic programming system to support higher-order probabilistic procedures as first class objects, and in particular the implementation of arbitrary higher-order probabilistic procedures with external, per-application latent variables as ordinary primitives. Venture is also the only probabilistic programming system that  integrates exact and approximate sampling techniques based on standard Markov chain, sequential Monte Carlo and variational inference.

\todo[color=green]{[medium] Write up and include slice sampling after testing}
\todo[color=green]{[small] Discuss inference strategy precedents, including Blaise, Maude, LTAC, ... (differentiating strategy and tactics languages, thm proving and rewriting logics)}

\section{Discussion}

We have described Venture, along with the key concepts, data structures and algorithms needed to implement a scalable interpreter with programmable inference. Venture includes a common stochastic procedure interface that supports higher-order probabilistic procedures, ``likelihood-free'' procedures, procedures with exchangeable coupling between their invocations, and procedures that maintain external latent variables and supply external inference code. We have seen how the probabilistic execution traces on which Venture is based generalize key ideas from Bayesian networks, and support partitioning traces into scaffolds corresponding to well-defined inference subproblems. We have defined stochastic regeneration algorithms over scaffolds, and shown how to use them to implement multiple general-purpose inference strategies. We have also given example Venture programs that illustrate aspects of the modeling and inference languages that it provides.

The coverage of Venture in terms of models, inference strategies and end-to-end problems needs to be carefully assessed. It remains to be seen whether the current set of inference strategies are truly sufficient for the full range of problems arising across the spectrum from Bayesian data analysis to large-scale machine learning to real-time robotics. However, expanding the set of inference primitives may be  straightforward. One strategy rests on the analogies between PETs and Bayesian networks. For example, it may be possible to define analogues of junction trees for fragments of PETs by conditioning on random choices with existential dependencies. Along similar lines, extensions to the {\tt enumerative\_gibbs} inference instruction could leverage the insights from \citep{koller1997effective}. Variants based on message passing techniques such as expectation propagation might also be fruitful. Other inference techniques that we could build using PETs and the SPI include slice sampling and Hamiltonian Monte Carlo. Of these, Hamiltonian Monte Carlo has already seen some use in probabilistic programming \citep{stan-manual:2013,wingate2011nonstandard}. However, these implementations do not apply to arbitrary Venture scopes, where proposals can trigger the resampling of other random choices due to the presence of ``likelihood-free'' primitives or the brush. Generalizations of Hamiltonian and slice methods that use the auxiliary variable machinery presented in this paper are straightforward and are currently included in development branches of the Venture system.

Although some real-world applications have been successfully implemented using unoptimized prototypes, the performance surface of Venture is largely uncharacterized. This is intentional. Our understanding of probabilistic programming is more limited than our understanding of functional programming when Miranda was introduced \citep{turner1985miranda}. Standard graphical models from machine learning correspond to short nested loops that usually can be fully unrolled before execution, and therefore only exercise small subsets of the capabilities of typical expressive languages. It thus seems premature to focus on optimizing runtime performance until better foundations have been established.

The theoretical principles needed to assess the tradeoffs between inference accuracy, asymptotic scaling, memory consumption, and absolute runtime are currently unclear. Point-wise comparisons of runtime are easy to make, but just as easily mislead. Even small changes in problem formulation, dataset size or accuracy can lead to large changes in runtime. The asymptotic scaling of runtime for many probabilistic programming systems is unknown, and for some problems the relevant scale parameters are hard to identify. Systematic comparisons of inference strategies that control for implementation constants and scale thresholds have yet to be performed. Although Venture's asymptotic scaling competitive with hand-optimized samplers in theory, a rigorous empirical assessment is needed. Mathematically rigorous cost models are in our view a precondition for comparative benchmarking and thus could be an important research direction. Additionally, once the relationships between the asymptotic scaling of forward simulation and the asymptotic scaling of typical inference strategies has been characterized, it will be possible to search for effective equivalences between custom inference strategies and model transformations.

Performance engineering research for Venture can build on standard techniques for higher-order languages as well as exploitation of the structure of probabilistic models and inference strategies. Immediate opportunities include runtime specialization via type hints and/or scaffold contents, compilation of the transition operators arising from specific inference instructions, and optimizations to the SPI to minimize copying of data. PETs could be used to support a memory manager that exploits locality of reference along PETs --- and conditional independence, more generally --- to improve cache efficiency. For example, one could pre-calculate the envelope of a specific inference instruction, and pre-fetch the entire PET fragment. Similar optimizations will be beneficial for high-performance multicore implementations. There are also opportunities for asymptotic savings if deterministic sub-segments of PETs can be compressed; in principle, the memory requirements of inference should scale with the amount of randomness consumed by the model, not by its execution time. Preliminary explorations of some of these ideas have yielded promising results \citep{perovmulticore, wumeng} but much work remains to be done. In practice, we believe the judicious migration of performance sensitive regions of a probabilistic program into hand-optimized inference code will be as important as runtime and compiler optimizations.

\subsection{Debugging and profiling probabilistic programs}

Venture makes interactive modeling and inference possible by providing an expressive modeling language with scalable programmable inference. By removing the need to do integrated co-design of probabilistic models, inference algorithms and their implementations, Venture removes one of the main bottlenecks in building probabilistic computing systems. This is far from the only bottleneck, however. Design, validation, debugging and optimization of probabilistic programs --- both model programs and inference programs --- still requires expertise.

The promise of probabilistic programming is that instead of having to learn multiple non-overlapping fields, future modelers will be able to learn a coherent body of probabilistic programming principles. They should be able to use standard probabilistic programming tools and workflows to navigate the design, validation, debugging and optimization process. This could evolve analogously to how programmers learn to design, test, debug and optimize traditional programs using a mix of intuitive modeling, mathematical analysis and experimentation supported by sophisticated tools.

Venture implements a stochastic semantics for inference that is designed to cohere with Bayesian reasoning via exact sampling in the limit of infinite computation. This facilitates the development of mathematically rigorous debugging strategies. First, the precise gap between exact and approximate sampling can sometimes be measured on small-scale examples; this provides a crucial control for larger-scale debugging. Second, as the amount of computation devoted to inference increases, the quality of the approximation can only increase. As a result, at potentially substantial computational cost, it is always possible to reduce the impact of approximate inference (as distinct from modeling issues or data issues). 

There is even a path forward at scales where the true distribution is effectively impossible to sample from or even roughly characterize. For example, the Bernstein-von Mises family of theorems gives general conditions under which posteriors concentrate onto hypotheses that satisfy various invariants. These can be exploited by debugging tools: a tool could increase the amount of synthetic data and inference used to test a probabilistic program until the posterior concentrates. At this point the probabilistic program begins to resemble a problem with a ``single right answer'', significantly simplifying debugging and testing as compared to regimes where the true distribution can be arbitrarily spread out or multi-modal. Many more tools and techniques will no doubt be needed to tease apart the impact of model mismatch, inadequate or inappropriate data and insufficient inference.

Traditional profilers may also have natural analogs in probabilistic programming. A profiler for a Venture program might track ``hot spots'' --- scaffolds (and their source code locations) where a disproportionate fraction of runtime is spent --- as well as ``cold spots'': scaffolds where transitions are rejected with unusual frequency. Such cold spots might serve as useful warning signs of limited, purely local convergence of inference. As Venture maintains an execution trace, it should be possible to link an execution trace location with the source code location responsible for it, and potentially help probabilistic programmers identify and modify unnecessarily deterministic or constrained model fragments.
Both together could help Venture programmers identify the portions of their program that ought to be moved to optimized foreign inference code.

\subsection{Inference programming}

The Venture inference programming language has many known limitations that would be straightforward to address. For example, it currently lacks an iteration construct, a notion of procedural abstraction (so inference strategies can be parameterized and reused), the ability to dynamically evaluate modeling expressions to stochastically generate scope and block names, or an analogue of \verb|eval| that gives inference programs the ability to programmatically execute new directives in the hosting Venture virtual machine. Additionally, its current set of inference primitives cannot themselves be implemented via compound statements in the language; to support this, an analogue of stochastic regeneration could potentially be exposed. Future work will address these issues by reintroducing the full expressiveness of Lisp into the inference expression language and determining what extensions are needed to capture the interactions between modeling and approximate inference.

More substantial inference programming extensions may also be fruitful. For example, it may be fruitful to incorporate support for {\em inference procedures}: reusable proposal schemes for scaffolds that match pre-specified patterns, where the proposal mechanism is implemented by another Venture program. The scaffold's contents constitute the formal arguments. Each random choice in the border could be mapped to an \verb|OBSERVE|, and the new values for resampling nodes could each come from a \verb|PREDICT|. If inference in the proposal program is assumed to have converged, it should be possible to construct a valid transition operator. This mechanism would enable Venture programmers to use probabilistic modeling and approximate inference to design inference strategies, turning every modeling idiom in Venture into a potential inference tool. A similar mechanism could facilitate the use of custom Venture programs as the skeleton for variational approximations. Finally, the inference programming language needs to be extended to relax the constraints of soundness. Instead of restricting programmers to transition operators that are guaranteed by construction to leave the conditioned distribution on traces invariant, the language should permit experts to introduce arbitrary transition operators on traces.

It will be interesting to develop probabilistic programs that automate aspects of inference while going beyond traditional formulations of interpretation and compilation. For example, it is theoretically possible to develop probabilistic programs that work as inference optimizers. Such programs would take a Venture program as input, including only placeholder \verb|INFER| instructions, and produce as output a new Venture program with \verb|INFER| instructions that are likely to perform better. These programs could also transform the modeling instructions and the instructions introducing the data to improve computational or inferential performance. Depending on architecture, these programs could be viewed as inference planners, compilers, or an integrated combination of the two. Probabilistic models, approximate inference and Bayesian learning could all be deployed to augment engineering of the planning algorithm, compiler transformations or even the objective function used to summarize inference performance. Machine learning schemes for algorithm selection \citep{xu2008satzilla} and reinforcement-learning-based meta-computation \citep{lagoudakis2001learning} can be viewed as natural special cases. Venture also makes it possible to integrate a ``control'' probabilistic program into an interpreter running another probabilistic program. The control program could intercept and modify the inference instructions in the running Venture program based on approximate, model-based inferences from live performance data, and perhaps influence the machine code generation process invoked by a just-in-time compiler.

Inference programming also supports the integration of ``approximate'' compilers based on probabilistic modeling and approximately Bayesian learning. Consider a probabilistic program with a specific set of \verb|OBSERVE|s and \verb|PREDICT|s. The expressions in the \verb|OBSERVE|s define a space of possible inputs, each corresponding to a set of literal values, one per observation. The expressions in the \verb|PREDICT|s define a space of possible outputs, each corresponding to an assignment of a sampled value to each \verb|PREDICT|. A compiler for Venture might generate an equivalent program that is restricted to precisely the given pattern of inputs and outputs, optimizing the implementation of any \verb|INFER| instructions in the program given this restriction. It is also possible to use modeling and inference to approximately emulate the program, by writing a probabilistic program that models $p(\{\verb|PREDICT|s\} | \{\verb|OBSERVE|s\})$, with parameters and/or structure estimated from data that is generated from the original probabilistic program. This can be viewed as approximate compilation implemented via inference, where the target language is given by the hypothesis space of the model in the emulator program. Both the creation of this kind of emulator and its use as a proposal would be natural to integrate as additional inference instructions.

Probabilistic programming systems should ultimately support the complete spectrum from black-box, truly automatic inference to highly customized inference strategies. So far Venture has focused on the extremes. Other points on the spectrum will require extensions to Venture's instruction language to encode specifications for exact and approximate inference within particular runtime and/or accuracy parameters. Developing a suitable specification language and cost model to enable precise control over the scope of automatic inference is an important challenge for future research.

%Many variations of ``multiple-try'' Metropolis methods have also been developed \cite{martinoreadmtm}, where multiple proposals are sampled, selected from, and accepted or rejected via modifications of the Metropolis-Hastings rule designed to preserve detailed balance. 
\todo[color=yellow]{[small] here and in the discussion, discuss the question of a basis set for inference} 

\subsection{Conclusion}

The similarities between non-probabilistic, Turing-complete,
higher-order programming languages are striking given the enormous
design space of such languages. Lisp, Java and Python support many of
the same programming idioms and can be used to simulate one another
without changing the asymptotic order of growth of program
runtime. Each of these languages also presents similar foreign
interfaces for interoperation with external software. They have all
been used for data analysis and for system building, deployed in fields ranging from robotics to statistics. They have also all been used for machine intelligence research grounded in probabilistic modeling and inference. One measure of their flexibility and interchangeability is provided by their use in probabilistic programming research. Each of these languages has been used to implement expressive probabilistic programming languages. For example, multiple versions of Church and Venture have been written in Python, Lisp and Java, and BLOG has been implemented in both Java and Python.

We do not know if it will be possible to attain the flexibility, extensibility and efficiency of Lisp, Java or Python in any single probabilistic programming language, especially due to the complexity of inference. Thus far, most probabilistic programming languages have opted for a narrower scope. Most are not Turing-complete or higher-order, only support a subset of standard approximate inference strategies, and have no notion of inference programming. Translating between sufficiently expressive probabilistic programming languages without distorting the asymptotic scaling of forward simulation will be difficult, especially given the lack of a standard cost model. Providing faithful translations that do not distort the asymptotic scaling of inference --- in runtime, in accuracy, or both --- will be harder still.

Despite these challenges, it may be possible to develop probabilistic languages that can be used to specify and solve probabilistic modeling and approximate inference problems from many fields. Ideally such languages would be able to cover the modeling idioms and inference strategies from fields such as robotics, statistics, and machine learning, while also meeting key representational needs in cognitive science and artificial intelligence. We hope Venture, and the principles behind its design and implementation, represent a significant step towards the development of a probabilistic programming platform that is both computationally universal and suitable in practice for general-purpose use.

{\footnotesize

\bibliography{fwood_uber,dan,references}
\bibliographystyle{unsrt}

}

\end{document}